\documentclass[11pt]{article}

\usepackage[a4paper]{geometry}
\geometry{verbose,tmargin=3cm,bmargin=2.6cm,lmargin=2.6cm,rmargin=3cm,headheight=1cm,headsep=1cm,footskip=1cm}

\usepackage{subcaption}

\usepackage{amssymb}
\usepackage{amsmath,amsfonts,amscd,amsthm,xspace}
\usepackage{authblk}
\usepackage{epsfig}
\usepackage{wrapfig}
\usepackage{enumitem}
\usepackage{dblfloatfix}
\usepackage{array}
\usepackage{mathtools}
\usepackage{subcaption}
\usepackage{fixltx2e}
\usepackage{color}
\usepackage[utf8]{inputenc}
\usepackage[ruled]{algorithm2e}
\usepackage[T1]{fontenc}
\usepackage{url}
\usepackage{booktabs}
\usepackage{amsfonts}
\usepackage{nicefrac}
\usepackage{microtype}   
\usepackage{graphicx}
\graphicspath{{Figs/}}

\usepackage{newtxtext}

\usepackage{multirow}
\usepackage{xcolor}         
\usepackage{algpseudocode}

\newtheorem{theorem}{Theorem}
\newtheorem{lemma}{Lemma}
\newtheorem*{nonumbertheorem}{Theorem}
\newtheorem{example}{\textit{Example}}


\newtheorem{definition}{Definition}

\newtheorem{assumption}{Assumption}

\newcommand{\nn}{n}
\newcommand{\nm}{m}

\newcommand{\tabincell}[2]{\begin{tabular}{@{}#1@{}}#2\end{tabular}}

\providecommand{\keywords}[1]
{
  \textbf{\textit{Keywords---}} #1
}


\title{Equipping Black-Box Policies with Model-Based Advice for Stable Nonlinear Control}

\author[1]{
  Tongxin~Li}
\author[2]{Ruixiao Yang} \author[3]{Guannan Qu}  \author[1]{Yiheng Lin}  \author[1]{Steven Low}
\author[1]{Adam Wierman}
\affil[1]{California Institute of Technology}
\affil[2]{Tsinghua University}
\affil[3]{Carnegie Mellon University}

\begin{document}

\maketitle

\begin{abstract}
Machine-learned black-box policies are ubiquitous for nonlinear control problems. 
    Meanwhile, crude model information is often available for these problems from, e.g., linear approximations of nonlinear dynamics.
    We study the problem of equipping a black-box control policy with model-based advice for nonlinear control on a single trajectory.  We first show a general negative result that a naive convex combination of a black-box policy and a linear model-based policy can lead to instability, even if the two policies are both stabilizing.
    We then propose an \textit{adaptive $\lambda$-confident policy}, with a coefficient $\lambda$ indicating the confidence in a black-box policy, and prove its stability. 
   With bounded nonlinearity, in addition, we show that the adaptive $\lambda$-confident policy achieves a bounded competitive ratio when a black-box policy is near-optimal. Finally, we propose an online learning approach to implement the adaptive $\lambda$-confident policy and verify its efficacy in case studies about the CartPole problem and a real-world electric vehicle (EV) charging problem with data bias due to COVID-19. 
\end{abstract}

\keywords{Black-box policy, stability, nonlinear control, EV charging affected by COVID-19}



 










\section{Introduction}

Deep neural network (DNN)-based control methods such as deep reinforcement learning/imitation learning have attracted great interests due to the success on a wide range of control tasks such as humanoid locomotion~\cite{schulman2017proximal}, playing Atari~\cite{mnih2013playing} and 3D racing games~\cite{ross2011reduction}. 
These methods are typically model-free and are capable of learning policies and value functions for complex control tasks directly from raw data. In real-world applications such as autonomous driving, it is impractical to dynamically update the already-deployed policy. In those cases, pre-trained black-box policies are applied. Those partially-optimized solutions on the one hand can sometimes be optimal or near-optimal, but on the other hand can be arbitrarily poor in cases where there is unexpected environmental behavior due to, e.g., sample inefficiency~\cite{botvinick2019reinforcement}, reward sparsity~\cite{riedmiller2018learning}, mode collapse~\cite{jabri2019unsupervised}, high variability of policy gradient~\cite{cheng2019control,recht2019tour}, or biased training data~\cite{bai2019model}. This uncertainty raises significant concerns about applications of these tools in safety-critical settings.  Meanwhile, for many real-world control problems, crude information about system models exists, e.g, linear approximations of their state transition dynamics~\cite{jin2020stability,qu2021exploiting}.  Such information can be useful in providing model-based advice to the machine-learned policies.

To represent such situations, in this paper we consider the following infinite-horizon dynamical system consisting of a \textit{known} affine part, used for model-based advice, and an \textit{unknown} nonlinear residual function, which is (implicitly) used in developing  machine-learned (DNN-based) policies:
\begin{align}
\label{eq:nonlinear_dynamic}
x_{t+1} =& \underbrace{A x_t + B u_t}_{\text{\textit{Known} affine part}} + \underbrace{f_t(x_t,u_t),}_{\text{\textit{Unknown} nonlinear residual}} \text{ for } t = 0,\ldots,\infty,
\end{align}
where $x_t\in\mathbb{R}^\nn$ and $u_t\in\mathbb{R}^\nm$ are the system state and the action selected by a controller at time $t$;  $A$ and $B$ are coefficient matrices in the affine part of the system dynamics. Besides the linear components, the system also has a state and action-dependent nonlinear residual $f_t:\mathbb{R}^{\nn}\times\mathbb{R}^{\nm}\rightarrow\mathbb{R}^\nn$ at each time $t\geq 0$, representing the modelling error. The matrices $A$ and $B$ are fixed and known coefficients in the linear approximation of the true dynamics~\eqref{eq:nonlinear_dynamic}.

Given the affine part of the dynamics~\eqref{eq:nonlinear_dynamic}, it is possible to construct a model-based feedback controller $\overline{\pi}$, e.g., a linear quadratic regulator or an $\mathcal{H}_{\infty}$ controller. Compared with a DNN-based policy $\widehat{\pi}(x_t)$, the induced linear controller often has a worse performance on average due to the model bias, but becomes more stable in an adversarial setting. In other words, a DNN-based policy can be as good as an optimal policy in domains where accurate training data has been collected, but can perform \textit{sub-optimally} in other situations; while a policy based on a linearized system is \textit{stabilizing}, so that it has a guaranteed worst-case performance with bounded system perturbations, but can lose out on performance to DNN-based policies in non-adversarial situations. We illustrate this trade-off for the CartPole problem (see Example~\ref{example:cartpole} in Appendix~\ref{example:cartpole}) in Figure~\ref{fig:fixed_lambda}. The figure shows a pre-trained TRPO~\cite{schulman2015trust,stable-baselines3} agent and an ARS~\cite{mania2018simple} agent achieve lower costs when the initial angle of the pole is small; but become less stable when the initial angle increases. Using the affine part of the nonlinear dynamics, a linear quadratic regulator achieves better performance when the initial angle becomes large. Motivated by this trade-off, we ask the following question in this paper:

\textit{Can we equip a sub-optimal machine-learned policy $\widehat{\pi}$ with stability guarantees for the nonlinear system~\eqref{eq:nonlinear_dynamic} by utilizing model-based advice from the known, affine part?}

\begin{wrapfigure}[19]{r}{0.5\textwidth}
    \centering
\includegraphics[scale=0.50]{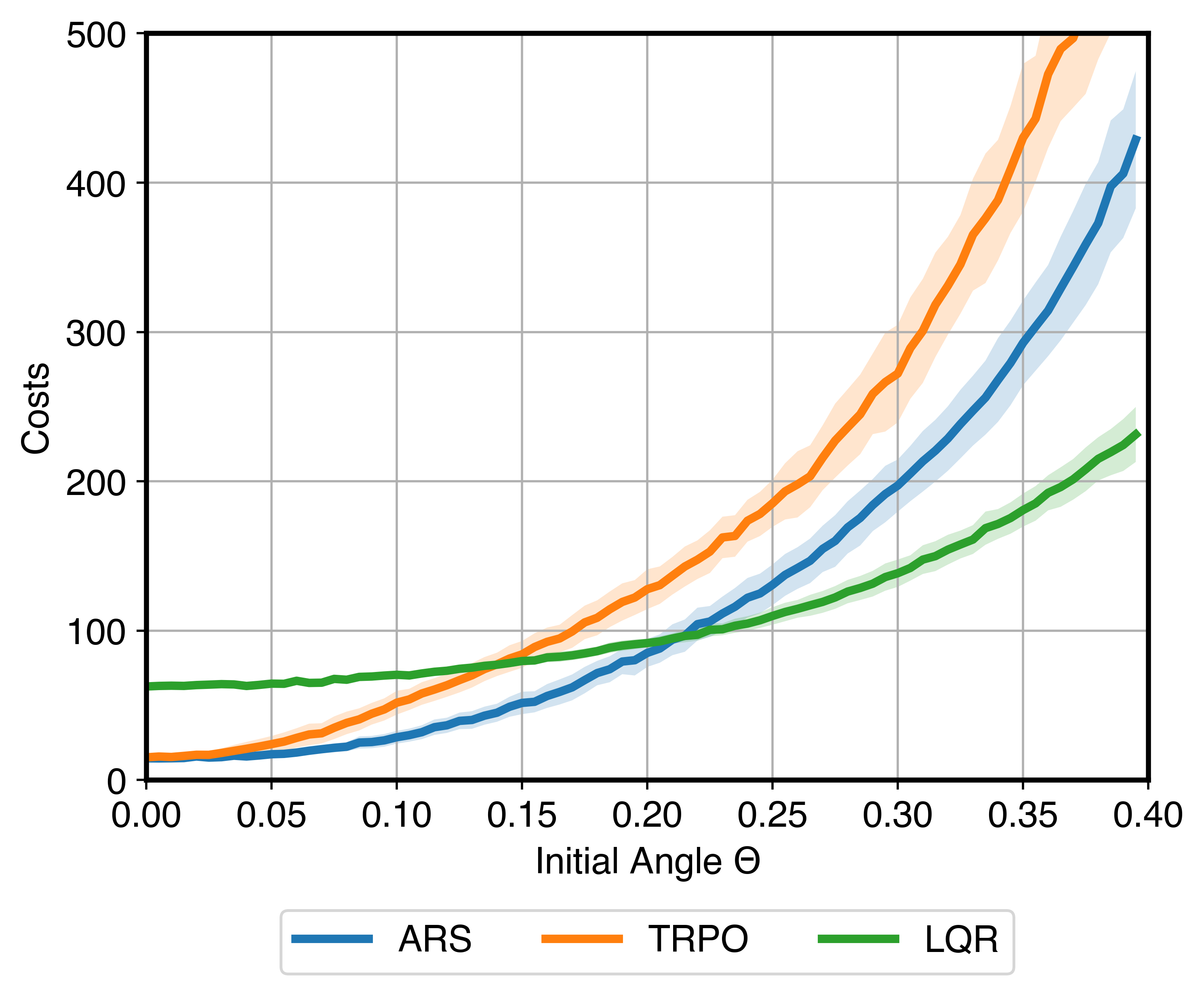}
\caption{Costs of pre-trained TRPO and ARS agents and an LQR when the initial pole angle $\theta$ (unit: radians) varies.}
\label{fig:fixed_lambda}
\end{wrapfigure}

Traditionally, switching between different control policies has been investigated for linear systems~\cite{hespanha2002switching,niemann2004switching}. All candidate polices need to be linear and therefore can be represented by their Youla–Kucera parametrizations (or Q-parameterizations). When a policy is a black-box machine-learned policy modeled by a DNN that may be nonlinear, how to combine or switch between the policies remains an open problem that is made challenging by the fact that the model-free policy typically has no theoretical guarantees associated with its performance. On the one hand, a model-free policy works well on average but, on the other hand, model-based advice  stabilizes the system in extreme cases.

\textbf{Contributions.}  In this work we propose a novel, adaptive policy that combines model-based advice with a black-box machine-learned controller to guarantee stability while retaining the performance of the machine-learned controller when it performs well.  In particular, we consider a nonlinear control problem whose dynamics is given in~\eqref{eq:nonlinear_dynamic}, where we emphasize that the unknown nonlinear residual function $f_t(x_t,u_t)$ is time-varying and depends not only on the state $x_t$ but also the action $u_t$ at each time $t$. Our first result is a negative result (Theorem~\ref{thm:negativity}) showing that a naive convex combination of a black-box model-free policy with model-based advice can lead to instability, even if both policies are stabilizing individually. This negative result highlights the challenges associated with combining model-based and model-free approaches.

%
 
Next, we present a general policy that adaptively combines a model-free black-box policy with model-based advice (Algorithm~\ref{alg:adaptive_policy}). We assume that the model-free policy has some consistency error $\varepsilon$, compared with the optimal policy and that the residual functions $(f_t:t\geq 0)$ are Lipschitz continuous with a Lipschitz constant $C_{\ell}>0$. Instead of employing a hard-switching between policies, we introduce a time-varying \textit{confidence coefficient} $\lambda_t$ that only decays and switches a black-box model-free policy into a stabilizing model-based policy in a smooth way during operation as needed to ensure stabilization. The sequence of confidence coefficients converges to $\lambda \in [0,1]$. Our main result is the following theorem, which establishes a trade-off between competitiveness (Theorem~\ref{thm:competitive}) and stability (Theorem~\ref{thm:stability}) of this adaptive algorithm.

\begin{nonumbertheorem}[Informal] 
\label{thm:main_informal}
With system assumptions (Assumption~\ref{assumption:continuity},~\ref{assumption:stability} and an upper bound on $C_{\ell}$), the adaptive $\lambda$-confident policy  (Algorithm~\ref{alg:adaptive_policy}) has the following properties: (a) the policy is exponentially stabilizing whose decay rate increases when ${{\lambda}}$ decreases; and (b) when the consistency error $\varepsilon$ is small, the competitive ratio of the  policy  satisfies
    \begin{align}
    \label{eq:informal_bound}
      \mathsf{CR}(\varepsilon) =  \left(1-{{\lambda}}\right)\times \underbrace{O(\overline{\mathsf{CR}}_{\mathrm{model}})}_{\textrm{Model-based bound}}+\underbrace{O\left({1}/{{(1-O(\varepsilon))}}\right)}_{\textrm{Model-free error}} + \underbrace{O(C_{\ell}\|x_0\|_{2})}_{\textrm{Nonlinear dynamics error}}. 
    \end{align}
\end{nonumbertheorem}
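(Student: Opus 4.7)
The plan is to prove parts (a) and (b) separately, since the informal theorem is the synthesis of the formal stability and competitive-ratio results stated later in the paper. For part (a), I set up a Lyapunov-style recursion on $\|x_{t+1}\|_2$. Writing the closed-loop dynamics under the adaptive $\lambda$-confident policy (a smooth interpolation of the black-box action $\widehat{\pi}(x_t)$ and a linear model-based action $Kx_t$ with weight $\lambda_t$) as
\begin{equation*}
x_{t+1} = (A+BK)x_t + \lambda_t B\bigl(\widehat{\pi}(x_t) - Kx_t\bigr) + f_t(x_t,u_t),
\end{equation*}
Assumption stability gives a contraction rate $\rho<1$ for $A+BK$; the Lipschitz hypothesis gives $\|f_t(x,u)\|_2 \le C_\ell(\|x\|_2+\|u\|_2)$; and the $\varepsilon$-consistency of $\widehat{\pi}$ with a bounded optimal trajectory makes the middle term at most $\lambda_t c_1 \|x_t\|_2$ for an explicit constant $c_1$. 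Combining, the one-step contraction factor is bounded by $\rho + \lambda_t c_1 + C_\ell c_2$, which is strictly less than one whenever $C_\ell$ satisfies the upper bound in Assumption stability and $\lambda_t$ stays bounded away from one; monotonicity of $\lambda_t$ with limit $\lambda$ from Algorithm adaptive\_policy then gives a uniform contraction rate that improves as $\lambda$ shrinks, yielding the claimed exponential stability.

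For part (b), I decompose the realized cost against the optimal cost into three contributions corresponding to the three summands in \eqref{eq:informal_bound}. The first term, $(1-\lambda)\cdot O(\overline{\mathsf{CR}}_{\mathrm{model}})$, captures the share of the control action inherited from the model-based policy, and is essentially the cost of running the model-based controller with weight $(1-\lambda)$, which is in turn controlled by the model-based competitive ratio. The second term, $O(1/(1-O(\varepsilon)))$, captures the share inherited from the black-box policy; the $1/(1-O(\varepsilon))$ factor arises from the amplification of the consistency error through the closed loop over the infinite horizon, via a geometric series on the accumulated deviations from the optimal trajectory. The third term, $O(C_\ell\|x_0\|_2)$, captures the effect of the unknown nonlinear residual: using the Lipschitz bound together with the exponential decay from part (a), the contribution sums to $\sum_{t=0}^{\infty} C_\ell\|x_t\|_2 = O(C_\ell\|x_0\|_2)$. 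Adding the three pieces recovers the claimed bound.

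The main obstacle is the coupling between (a) and (b): the cost decomposition in (b) needs the exponential decay from (a), while (a) depends on the update rule for $\lambda_t$ in Algorithm adaptive\_policy, which is itself designed with the competitive objective in view. I would break the circularity by first proving (a) using only the structural monotonicity and lower limit of $\lambda_t$, extracting a uniform contraction rate independent of the cost analysis, and then plugging that rate into the summation in (b). A secondary, bookkeeping-heavy task, which I would leave to the formal statements of the stability and competitive-ratio theorems, is tracking how the hidden constants in the $O(\cdot)$ terms depend on the spectral gap of $A+BK$, the norms of $A$, $B$, $K$, the cost matrices, and the upper bound on $C_\ell$, and verifying that the stated bound on $C_\ell$ keeps all three summands finite.
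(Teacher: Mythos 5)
Your high-level architecture (prove stability first, then feed the decay rate into a three-term cost decomposition) matches the paper's, but both halves have genuine gaps. In part (a), the step that bounds the middle term $\lambda_t B(\widehat{\pi}(x_t)-Kx_t)$ by $\lambda_t c_1\|x_t\|$ is unjustified: $\varepsilon$-consistency only controls $\|\widehat{\pi}(x)-\pi_t^*(x)\|$, i.e., the distance to the \emph{optimal nonlinear} policy, not the distance to the LQR action $-Kx$. Bridging these requires the paper's Theorem~\ref{thm:optimal_modelbased_difference}, a sensitivity analysis of the Bellman equation (writing $V_t(x)=x^\top Px+g_t(x)$ and bounding $\|\nabla g_t(x)\|\leq C_\nabla C_\ell\|x\|$) which yields $\|\pi_t^*(x)-\overline{\pi}(x)\|\leq C_a^{\mathsf{sys}}C_\ell\|x\|$; this is the technical heart of the stability argument and cannot be waved in as ``an explicit constant $c_1$.'' The negative result (Theorem~\ref{thm:negativity}) shows why no shortcut exists: a convex combination of stabilizing policies need not be stabilizing, so any per-step contraction argument must lean on consistency plus this sensitivity lemma. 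Moreover, your contraction requires ``$\lambda_t$ bounded away from one,'' but Algorithm~\ref{alg:adaptive_policy} initializes $\lambda_0=1$; and part (a) of the statement holds for \emph{arbitrary} $\varepsilon$, which your argument cannot deliver since your one-step factor $\rho+\lambda_t c_1+C_\ell c_2$ is only below one when $\varepsilon$ is small. The paper instead exploits the forced decrement $\lambda_t\leq\lambda_{t-1}-\alpha$, so $\lambda_{t_0}=0$ at a finite $t_0$, the policy is purely model-based afterwards, and the (possibly expansive) transient is absorbed into the constant $(\mu/\gamma)^{t_0}$ in $\|x_t\|=O((\mu/\gamma)^{t_0}\gamma^t)\|x_0\|$.

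In part (b), the three-way split is asserted rather than derived, and the assertion that the first term ``is essentially the cost of running the model-based controller with weight $(1-\lambda)$'' is not a valid step: the cost is quadratic in the trajectory, not linear in the policy mixture (again, Theorem~\ref{thm:negativity} shows mixtures can behave nothing like their components). The paper's mechanism is an auxiliary linear system with frozen perturbations $w_t=f_t(x_t^*,u_t^*)$ and its optimal linear policy $\pi'_t$ in~\eqref{eq:auxiliary}; the generalized cost characterization (Lemma~\ref{lemma:characterization}) and Lemma~\ref{lemma:gap} reduce $\mathsf{ALG}-\mathsf{OPT}$ to $\sum_t\eta_t^\top H\eta_t$ plus nonlinearity cross terms, with $\eta_t=\lambda_t(\widehat{\pi}(x_t)-\pi'_t(x_t))+(1-\lambda_t)(\overline{\pi}(x_t)-\pi'_t(x_t))$; Cauchy--Schwarz splits the quadratic, $\varepsilon$-consistency plus $\sum_t\|\pi_t^*(x_t)-\pi'_t(x_t)\|^2\leq 2\,\mathsf{OPT}/\lambda_{\min}(R)$ handles the model-free piece, and the lower bound $\mathsf{OPT}\geq\frac{D_0(1-\rho)^2}{C_F^2\|P\|^2}\sum_t\|\sum_{\tau\geq t}(F^\top)^{\tau-t}Pf_\tau^*\|^2$ (Lemma~\ref{lemma:optimal_cost_lower_bound}) converts the model-based piece into $(1-\lambda)\overline{\mathsf{CR}}_{\mathrm{model}}\cdot\mathsf{OPT}$. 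Your explanation of the $1/(1-O(\varepsilon))$ factor is also off: it does not come from a geometric series of trajectory deviations, but from the self-bounding inequality --- the consistency term contributes $\varepsilon\,\mathsf{ALG}/\lambda_{\min}(Q)$ on the right (since $\sum_t\|x_t\|^2\leq\mathsf{ALG}/\lambda_{\min}(Q)$), and moving $\frac{2\|H\|}{\sigma}\varepsilon\,\mathsf{ALG}$ to the left produces the factor, which is exactly why the condition $\varepsilon<\sigma/(2\|H\|)$ appears. Your $O(C_\ell\|x_0\|)$ heuristic is closest to the paper, but there it arises from the cross terms of Lemma~\ref{lemma:gap} controlled by the exponential stability of Theorem~\ref{thm:blackbox_stability} (which itself needs small $\varepsilon$), not from a bare sum $\sum_t C_\ell\|x_t\|$.
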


The theorem shows that the  adaptive $\lambda$-confident policy is guaranteed to be stable. Furthermore, if the black-box policy is close to an optimal control policy (in the sense that the consistency error $\varepsilon$ is small), then the adaptive $\lambda$-confident policy has a bounded competitive ratio that consists of three components. The first one is a bound inherited from a model-based policy; the second term depends on the sub-optimality gap between a black-box policy and an optimal policy; and the last term encapsulates the loss induced by switching from a policy to another and it scales with the $\ell_2$ norm of an initial state $x_0$ and the nonlinear residuals (depending on the Lipschitz constant $C_{\ell}$).

Our results imply an interesting trade-off between stability and sub-optimality, in the sense that if $\lambda$ is smaller, it is guaranteed to stabilize with a higher rate and if $\lambda$ becomes larger, it is able to have a smaller competitive ratio bound when provided with a high-quality black-box policy.  Different from the linear case, where a cost characterization lemma can be directly applied to bound the difference between the policy costs and optimal costs in terms of the difference between their actions~\cite{li2022robustness}, for the case of nonlinear dynamics~\eqref{eq:nonlinear_dynamic}, we introduce an auxiliary linear problem to derive an upper bound on the dynamic regret, whose value can be decomposed into a quadratic term and a term induced by the nonlinearity. The first term can be bounded via a generalized characterization lemma and becomes the \textit{model-based bound} and \textit{model-free error} in~\eqref{eq:informal_bound}. The second term becomes a \textit{nonlinear dynamics error} via a novel sensitivity analysis of an optimal nonlinear policy based on its Bellman equation. Finally, we use the CartPole problem to demonstrate the efficacy of the adaptive $\lambda$-confident policy.


\textbf{Related work.} Our work is related to a variety of classical and learning-based policies for control and reinforcement learning (RL) problems that focus on combining model-based and model-free approaches.

\textit{Combination of model-based information with model-free methods.} 
Our paper adds to the recent literature seeking to combine model-free and model-based policies for online control. Some prominent recent papers with this goal include the following.  First, in~\cite{PonGuDalLev18}, Q-learning is connected with model predictive control (MPC) whose constraints are Q-functions encapsulating the state transition information. Second, MPC methods with penalty terms learned by model-free algorithms are considered in~\cite{rosolia2017learning}. Third, deep neural network dynamics models are used to initialize a model-free learner to improve the sample efficiency while maintaining the high task-specific performance~\cite{nagabandi2018neural}. Next, using this idea, the authors of~\cite{qu2021exploiting} consider a more concrete dynamical system $x_{t+1}=Ax_t+B u_t +f(x_t)$ (similar to the dynamics in~\eqref{eq:nonlinear_dynamic} considered in this work) where $f$ is a state-dependent function and they show that a model-based initialization of a model-free policy is guaranteed to converge to a near-optimal linear controller. Another approach uses an $\mathcal{H}_{\infty}$ controller integrated into model-free RL algorithms for variance reduction~\cite{cheng2019control}. Finally, the model-based value expansion is proposed in~\cite{feinberg2018model} as a method to incorporate learned dynamics models in model-free RL algorithms.  Broadly, despite many heuristic combinations of model-free and model-based policies demonstrating empirical improvements, there are few theoretical results explaining and verifying the success of the combination of model-free and model-based methods for control tasks. Our work contributes to this goal.

\textit{Combining stabilizing linear controllers.}  The proposed algorithm in this work combines existing controllers and so is related to the literature of combining stabilizing linear controllers.  
A prominent work in this area is~\cite{hespanha2002switching}, which shows that with proper controller realizations, switching between a family of stabilizing controllers uniformly exponentially stabilizes a linear  time-invariant (LTI) system. Similar results are given in~\cite{niemann2004switching}.
The techniques applied in~\cite{niemann2004switching,hespanha2002switching} use the fact that all the stabilizing controllers can be expressed using the Youla parameterization. 
Different from the classical results of switching between or combining stabilizing controllers, in this work, we generalize the idea to the combination of a linear model-based policy and a model-free policy, that can be either linear or nonlinear.

\textit{Learning-augmented online problems.}  Recently, the idea of augmenting robust/competitive online algorithms with machine-learned advice has attracted attention in online problems in settings like online caching \cite{lykouris2018competitive}, ski-rental \cite{purohit2018improving,wei2020optimal}, smoothed online convex optimization~\cite{rutten2022online,christianson2022chasing} and linear quadratic control~\cite{li2022robustness}. In many of these learning-augmented online algorithms, a convex combination of machine-learned (untrusted) predictions and robust decisions is involved. For instance, in~\cite{li2022robustness}, competitive ratio upper bounds of a $\lambda$-confident policy are given for a linear quadratic control problem. The policy $\lambda\pi_{\mathrm{MPC}}+(1-\lambda)\pi_{\mathrm{LQR}}$ combines linearly a linear quadratic regulator $\pi_{\mathrm{LQR}}$ and an MPC policy $\pi_{\mathrm{MPC}}$ with machine-learned predictions where $\lambda\in [0,1]$ measures the confidence of the machine-learned predictions. To this point, no results on learning-augmented controllers for nonlinear control exist.  In this work, we focus on the case of nonlinear dynamics and show a general negativity result (Theorem~\ref{thm:negativity}) such that a simple convex combination between two policies can lead to unstable outputs and then proceed to provide a new approach that yields positive results.

\begin{table}[h]
    \footnotesize
\renewcommand{\arraystretch}{1.3}
    \centering
    \begin{tabular}{l|c|c|c|l|c}
\specialrule{.13em}{.1em}{.1em} 
    & \textbf{Setting}
    &\textbf{Known} & \textbf{Unknown}  & (Partial) \textbf{Assumption}(s) & \textbf{Objective}\\
    \hline
   \cite{berkenkamp2017safe} &  Episodic & $h$ & $f$ & $h,f\in \mathcal{C}^{0,1}$ & Safe exploration\\
   \cite{cheng2019control} & Episodic & $f^{known}$ & $f^{unknown}$ &  \tabincell{l}{$\|\widehat{\pi}-\overline{\pi}\|_2\leq C_{\pi}$ \\Stabilizable $f^{known}$}  & Lyapunov stability \\
\cite{jin2020stability} & Episodic & $A,B$ & $g_t(x)$ & Hurwitz $A$ & Input-ouput stability\\
  \cite{qu2021exploiting} & Episodic &  $A,B$ & $f(x)$ & $f\in \mathcal{C}^{0,1}$, Stabilizable $A,B$ & Lyapunov stability \\
   \cite{chow2018lyapunov,perkins2002lyapunov,zanon2020safe} & Episodic  &  & (C)MDP & Feasible baseline~\cite{chow2018lyapunov} & Safety and stability \\
  This work & $1$-trajectory & $A,B$ & $f_t(x,u)$ & $f_t\in \mathcal{C}^{0,1}$, Stabilizable $A,B$ & Stability, $\mathsf{CR}$ bound \\
\specialrule{.13em}{.1em}{.1em} 
    \end{tabular}
  \label{table:summary}
\end{table}

\textit{Stability-certified RL.} 
Another highly related line of work is the recent research on developing safe RL with stability guarantees. In~\cite{berkenkamp2017safe}, Lyapunov analysis is applied to guarantee the stability of a model-based RL policy. If an $\mathcal{H}_{\infty}$ controller $\widehat{\pi}_{\mathcal{H}_{\infty}}$ is close enough to a model-free deep RL policy $\overline{\pi}_{\mathrm{RL}}$, by combining the two policies linearly $\lambda\widehat{\pi}_{\mathcal{H}_{\infty}}+(1-\lambda)\overline{\pi}_{\mathrm{RL}}$  at each time in each training episode, asymptotic
stability and forward invariance can be guaranteed using Lyapunov analysis but the convergence rate is not provided~\cite{cheng2019control}. In practice,~\cite{cheng2019control} uses an empirical approach to choose a time-varying factor $\lambda$ according to the temporal difference error.  Robust model predictive control (MPC) is combined with deep RL to ensure safety and stability~\cite{zanon2020safe}. Using regulated policy gradient, input-output stability is guaranteed for a continuous nonlinear control model $f_t(x(t))=Ax(t) + Bu(t)+g_t(x(t))$~\cite{jin2020stability}. In those works, a common assumption needs to be made is the ability to access and update the deep RL policy during the episodic training steps. Moreover, in the state-of-the-art results, the stability guarantees are proven, either considering an aforementioned episodic setting when the black-box policy can be improved or customized~\cite{berkenkamp2017safe,jin2020stability}, or assuming a small and bounded output distance between a black-box policy and a stabilizing policy for any input states to construct a Lyapunov equation~\cite{cheng2019control}, which is less realistic. Stability guarantees under different model assumptions such as (constrained) MDPs have been studied~\cite{chow2018lyapunov,perkins2002lyapunov,zanon2020safe}. Different from the existing literature, the result presented in this work is unique and novel in the sense that we consider stability and sub-optimality guarantee for black-box deep policies in a single trajectory such that we can neither learn from the environments nor update the deep RL policy through extensive training steps. Denote by $\mathcal{C}^{0,1}$ the class of Lipschiz continuous functions (with domains, ranges and norms specified according to the contexts), the related results are summarized in the table above.


\section{Background and Model}

\label{sec:problem}

We consider the following infinite-horizon quadratic control problem with nonlinear dynamics:

\begin{align}
\label{eq:system}
\min_{(u_t: t\geq 0)} \sum_{t=0}^{\infty}& x_t^\top Q x_t + u_t^\top R u_t, \
\textrm{subject} \ \textrm{to} \ \eqref{eq:nonlinear_dynamic}
\end{align}
where in the problem $Q, R\succ 0$ are $\nn\times\nn$ and $m \times m$ positive definite matrices and each $f_t:\mathbb{R}^{\nn}\times\mathbb{R}^{\nm}\rightarrow\mathbb{R}^{\nn}$ in~\eqref{eq:nonlinear_dynamic} is an unknown nonlinear function representing state and action-dependent perturbations. An initial state $x_0$ is fixed. We use the following assumptions throughout this paper. Our first assumption is the Lipschitz continuity assumption on the residual functions and it is standard~\cite{qu2021exploiting}.
Note that $\|\cdot\|$ denotes the Euclidean norm throughout the paper.

\begin{assumption}[Lipschitz continuity]
\label{assumption:continuity}
The function $f_t:\mathbb{R}^{\nn}\times\mathbb{R}^{\nm}\rightarrow\mathbb{R}^{\nn}$ is Lipschitz continuous for any $t\geq 0$, i.e., there is a constant $C_{\ell}\geq 0$ such that $\|f_t(x)-f_t(y)\|\leq C_{\ell}\|x-y\|$ for any $x,y\in\mathbb{R}^\nn$ and $t\geq 0$. Moreover, $f(\mathbf{0})=0$.
\end{assumption}

Next, we make a standard assumption on the system stability and cost function~\cite{mania2019certainty,dean2020sample}.

\begin{assumption}[System stabilizability and costs]
\label{assumption:stability}
The pair of matrices $(A,B)$ is stabilizable, i.e., there exists a real matrix $K$ such that the spectral radius $\rho(A-BK)<1$. We assume $Q,R\succeq \sigma I$. Furthermore, denote $\kappa\coloneqq{\max\{2,\|A\|,\|B\|\}}$.
\end{assumption}

In summary, our control agent is provided with a black-box policy $\widehat{\pi}$ and system parameters $A,B,Q,R$. The goal is to utilize $\widehat{\pi}$ and system information to minimize the quadratic costs in~\eqref{eq:system}, without knowing nonlinear residuals $(f_t:t\geq 0)$. Next, we present our policy assumptions.


\textbf{Model-based advice.} 
In many real-world applications, linear approximations of the true nonlinear system dynamics are known, i.e., the known affine part of~\eqref{eq:nonlinear_dynamic} is available to construct a stabilizing policy $\overline{\pi}$.
To construct $\overline{\pi}$, the assumption that $(A,B)$ is stabilizable implies that the following discrete algebraic Riccati equation (DARE) has a unique semi-positive definite solution $P$ that stabilizes the closed-loop system~\cite{wonham1968matrix}:
\begin{align}
\label{eq:dare}
    P=Q+A^\top P A - A^\top PB (R+B^\top P B)^{-1} B^\top PA.
\end{align}

Given $P$, define $K \coloneqq (R+B^\top P B)^{-1} B^\top P A$. The closed-loop system matrix $F\coloneqq A-BK$ must have a spectral radius $\rho(F)$ less than $1$. Therefore, the Gelfand’s formula implies that there must exist a constant $C_F>0$, $\rho\in(0,1)$ such  that $\| F^t\|\leq C_F\rho^t$, for any $t\geq 0$. The model-based advice considered in this work is then defined as a sequence of actions $(u_t:t\geq 0)$ provided by a linear quadratic regulator (LQR) such that $u_t=\overline{\pi}(x_t)=-Kx_t$.

\textbf{Black-box model-free policy.}
To solve the nonlinear control problem in~\eqref{eq:system}, we take advantage of both model-free and model-based approaches. We assume a pre-trained model-free policy, whose policy is denoted by $\widehat{\pi}:\mathbb{R}^{\nn}\rightarrow\mathbb{R}^{\nm}$, is provided beforehand. The model-free policy is regarded as a ``black box'',  whose detail is not the major focus in this paper. The only way we interact with it is to obtain a suggested action $\widehat{u}_t=\widehat{\pi}(x_t)$ when feeding into it the current system state $x_t$. The performance of the model-free policy is not guaranteed and it can make some error, characterized by the following definition, which compares $\widehat{\pi}$ against a clairvoyant optimal controller $\pi_t^*$ knowing the nonlinear residual perturbations in hindsight:

\begin{definition}[$\varepsilon$-\textit{consistency}]
\label{def:epsilon_consistency}
A policy $\pi:\mathbb{R}^{\nn}\rightarrow\mathbb{R}^{\nm}$ is called \textbf{$\varepsilon$-\textit{consistent}} if there exists $\varepsilon>0$ such that for any $x\in\mathbb{R}^{\nn}$ and $t\geq 0$, $\left\|{\pi}(x) - {\pi}_t^*(x)\right\|\leq \varepsilon\|x\|$ where $\pi_t^*$ denotes an optimal policy at time $t$ knowing all the nonlinear residual perturbations $(f_t:t\geq 0)$ in hindsight and $\varepsilon$ is called a \textbf{\textit{consistency error}}.
\end{definition}

The parameter $\varepsilon$ measures the difference between the action given by the oracle policy $\widehat{\pi}$ and the optimal action given the state $x$. There is no guarantee that  $\varepsilon$ is small. With  prior knowledge of the nonlinearity of system gained from data, the sub-optimal model-free policy $\widehat{\pi}$ suffers a \textit{consistency error} $\varepsilon>0$, which can be either small if the black-box policy is trained by unbiased data; or high because of the high variability issue for policy gradient deep RL algorithms ~\cite{cheng2019control,recht2019tour} and distribution shifts of the environments. In these cases, the error $\varepsilon>0$ can be large. In this paper, we augment a black-box model-free policy with stability guarantees using the idea of adaptively switching it to a model-based stabilizing policy $\overline{\pi}$, which often exists provided with exact or estimates of system parameters $A,B,Q$ and $R$. The linear stabilizing policy is conservative and highly sub-optimal as it is neither designed based on the exact nonlinear model nor interacts with the environment like the training of $\widehat{\pi}$ potentially does.

\textbf{Performance metrics.}  Our goal is to ensure stabilization of states while also providing good performance, as measured by the competitive ratio.  Formally, a  policy $\pi$ is (asymptotically) \textit{stabilizing} if it induces a sequence of states $(x_t:t\geq0)$ such that $\|x_t\|\rightarrow 0$ as $t\rightarrow \infty$. If there exist $C>0$ and $0\leq\gamma<1$ such that $\|x_t\|\leq C\gamma^{t}\|x_0\|$ for any $t\geq 0$, the corresponding policy is said to be exponentially stabilizing. To define the competitive ratio, let $\mathsf{OPT}$ be the offline optimal cost of~\eqref{eq:system} induced by optimal control policies $(\pi_t^*:t\geq 0)$ when the nonlinear residual functions $(f_t:t\geq 0)$ are known in hindsight, and $\mathsf{ALG}$ be the cost achieved by an online policy. Throughout this paper we assume $\mathsf{OPT}>0$. We formally define the competitive ratio as follows.

\begin{definition} 
Given a policy, the corresponding \textbf{competitive ratio}, denoted by $\mathsf{CR}$, is defined as the smallest constant $C\geq 1$ such that $\mathsf{ALG}\leq C \cdot \mathsf{OPT}$ for fixed $A,B,Q,R$ satisfying Assumption~\ref{assumption:stability} and any adversarially chosen residual functions $(f_t:t\geq 0)$ satisfying Assumption~\ref{assumption:continuity}.
\end{definition} 


\section{Warm-up: A Naive Convex Combination}
\label{sec:negativity}

The main results in this work focus on augmenting a black-box policy $\widehat{\pi}$ with stability guarantees while minimizing the quadratic costs in~\eqref{eq:system}, provided with linear system parameters $A,B,Q,R$ of a nonlinear system. Before proceeding to our policy, to highlight the challenge of combining model-based advice with model-free policies in this setting we first consider a simple strategy for combining the two via a convex combination.  This is an approach that has been proposed and studied previously, e.g.,~\cite{cheng2019control,li2022robustness}.  However, we show that it can be problematic in that it can yield an unstable policy even when the two policies are stabilizing individually.  Then, in Section~\ref{sec:main}, we propose an approach that overcomes this challenge.

A natural approach for incorporating model-based advice is a convex combination of a model-based control policy $\overline{\pi}$ and a black-box model-free policy $\widehat{\pi}$. The combined policy generates an action $u_t = \lambda \widehat{\pi}(x_t) + (1-\lambda) \overline{\pi}(x_t)$ given a state $x_t$ at each time, where $ \lambda\in [0,1]$. The coefficient $\lambda$ determines a confidence level such that if $\lambda$ is larger, we trust the black-box policy more and vice versa.  
In the following, however, we highlight that, in general, the convex combination of two polices can yield an unstable policy, even if the two policies are stabilizing, with a proof in Appendix~\ref{app:proof_negativity}.

\begin{theorem}
\label{thm:negativity}
Assume $B$ is an $\nn\times\nn$ full-rank matrix with $\nn>1$. For any $\lambda \in (0,1)$ and any linear controller $K_1$ satisfying $A-BK_1\neq 0$, there exists a linear controller $K_2$ that stabilizes the system such that their convex combination $\lambda K_2 + (1-\lambda) K_1$ is unstable, i.e., the spectral radius $\rho(A-B(\lambda K_2 + (1-\lambda)K_1))>1$.
\end{theorem}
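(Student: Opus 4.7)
The plan is to reduce the claim to a purely matrix-theoretic question and then exhibit an explicit rank-one nilpotent perturbation that destabilizes the convex combination. A short computation gives
\[
A - B\bigl(\lambda K_2 + (1-\lambda) K_1\bigr) \;=\; \lambda F_2 + (1-\lambda) F_1,
\]
where $F_i := A - BK_i$. Since $B$ is invertible, the map $K_2 \mapsto F_2$ is a bijection, and requiring $K_2$ to stabilize the plant is equivalent to requiring $\rho(F_2) < 1$. It therefore suffices to show: for any $F_1 \neq 0$ there exists $F_2$ with $\rho(F_2) < 1$ and $\rho(\lambda F_2 + (1-\lambda) F_1) > 1$.

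For the construction I would exploit the non-convexity of the stable set via a single-entry matrix. After a similarity transformation---which preserves the spectrum of both $F_2$ and the combined matrix---I would reduce to the case in which some off-diagonal entry $(F_1)_{ij}$ with $i \neq j$ is nonzero (this uses $n > 1$). I would then set $F_2 := t\, e_j e_i^\top$ for a real parameter $t$ to be chosen later. Because $i \neq j$ one has $F_2^2 = 0$, so $F_2$ is nilpotent, $\rho(F_2) = 0 < 1$, and the associated $K_2 := B^{-1}(A - F_2)$ stabilizes the plant for every value of $t$.

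The main computation is to drive $\rho(M) > 1$ for $M := \lambda F_2 + (1-\lambda) F_1$ by taking $|t|$ large. Expanding $\operatorname{tr}(M^2) = \sum_{a,b} M_{ab} M_{ba}$, every summand is independent of $t$ except for the pair $(i,j)$ and $(j,i)$, whose joint contribution is $2 M_{ij} M_{ji} = 2(1-\lambda)(F_1)_{ij}\bigl[\lambda t + (1-\lambda)(F_1)_{ji}\bigr]$. Choosing the sign of $t$ to match that of $(F_1)_{ij}$ makes $|\operatorname{tr}(M^2)|$ grow linearly in $|t|$. Combining with the bound $|\operatorname{tr}(M^2)| = \bigl|\sum_\mu \mu^2\bigr| \leq n\,\rho(M)^2$ then yields $\rho(M) \to \infty$ as $|t| \to \infty$, which proves the desired instability for all $|t|$ sufficiently large.

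The main obstacle I foresee is the degenerate subcase $F_1 = cI$ with $c \neq 0$: no similarity transformation produces off-diagonal structure, and in that case $\lambda F_2 + (1-\lambda) cI$ has eigenvalues $\lambda\mu + (1-\lambda)c$ over $\mu \in \operatorname{spec}(F_2)$, so the triangle inequality forces $\rho(M) \leq 1$ whenever $|c| \leq 1$. I therefore expect the theorem implicitly to exclude the scalar-multiple-of-identity case, or for the argument to require a supplementary construction (e.g., a block-diagonal $F_2$ mixing a nilpotent perturbation with eigenvalues on the unit circle when $|c| > 1$); either way, the rank-one nilpotent construction above already captures the essential non-convexity phenomenon which is the entire point of the negative result.
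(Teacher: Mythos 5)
Your proposal is correct on its main track and takes a genuinely different route from the paper. The paper's proof splits into two cases: for non-diagonal $F_1$ it hand-builds a lower-triangular stabilizing $F_2$ (diagonal entries $\beta\in(0,1)$, a term $-\frac{1-\lambda}{\lambda}\overline{L}$ that cancels the lower-triangular part of $F_1$ inside the combination, and one arbitrarily large ``singleton'' entry mirrored across the diagonal) and then forces $|\det(\lambda F_2+(1-\lambda)F_1)|\geq 2^{n}$ so that $\rho\geq 2$; for diagonal $F_1$ it proves a separate $2\times 2$ lemma (Lemma~\ref{lemma:2dcase}) with a large \emph{nilpotent} $F_2$ and embeds it for $n>2$. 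Your single construction---conjugate so that some $(F_1)_{ij}\neq 0$ with $i\neq j$, take $F_2=t\,e_je_i^\top$, and drive $|\operatorname{tr}(M^2)|\to\infty$ against the bound $|\operatorname{tr}(M^2)|\leq n\,\rho(M)^2$---handles both of the paper's cases at once, since any non-scalar matrix (in particular a diagonal non-scalar one) is similar to a matrix with a nonzero off-diagonal entry. It avoids the paper's determinant bookkeeping, yields the stronger conclusion $\rho(M)\to\infty$ as $|t|\to\infty$, and your key computation checks out: the only $t$-dependence of $\operatorname{tr}(M^2)$ is the term $2\lambda(1-\lambda)(F_1)_{ij}\,t$, with nonzero coefficient, and $F_2$ remains nilpotent (hence stabilizing, with $K_2=B^{-1}(A-F_2)$) after undoing the similarity. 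Amusingly, the paper's own $2\times2$ matrix in~\eqref{eq:2by2F} is exactly a scaled nilpotent perturbation, so your idea is the paper's diagonal-case trick promoted to the general case.

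Your diagnosis of the degenerate case is also exactly right, and it exposes a genuine defect in the theorem \emph{as stated} rather than in your approach: if $F_1=cI$ with $0<|c|\leq 1$, the spectrum of $\lambda F_2+(1-\lambda)cI$ is $\{\lambda\mu+(1-\lambda)c:\mu\in\operatorname{spec}(F_2)\}$, so $\rho<1$ for \emph{every} stabilizing $F_2$ and no adversarial $K_2$ exists, even though such $K_1$ satisfies the theorem's only hypothesis $A-BK_1\neq 0$. The paper quietly concedes this point---Lemma~\ref{lemma:2dcase} assumes $F_1\neq \gamma I$---but the exclusion never propagates back into the statement of Theorem~\ref{thm:negativity}, so your expectation that the scalar case must be excluded (for $|c|\le 1$) is confirmed by the paper's own hypotheses. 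One small correction to your fallback for $|c|>1$: eigenvalues \emph{on} the unit circle give $\rho(F_2)=1$, which is not stabilizing; take instead, e.g., $F_2=(1-\epsilon)\operatorname{sign}(c)\,I$, whose combination has spectral radius $\lambda(1-\epsilon)+(1-\lambda)|c|>1$ for $\epsilon$ small, so the theorem does survive in the scalar case with $|c|>1$.
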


Theorem~\ref{thm:negativity} brings up an issue with the strategy of combining a stabilizing policy with a model-free policy. Even if both the model-based and model-free policies are stabilizing, the combined controller can lead to unstable state outputs. In general, the space of stabilizing linear controllers $\{K\in\mathbb{R}^{\nn\times\nm}: K \ \mathrm{is} \  \mathrm{stabilizing}\}$ is nonconvex~\cite{zheng2020equivalence}. The result in~Theorem~\ref{thm:negativity} is a stronger statement. It implies that for any arbitrarily chosen linear policy $K_1$ and a coefficient $\lambda\in (0,1)$,
we can always adversarially select a second policy $K_2$ such that their convex combination leads to an unstable system. It is worth emphasizing that the second policy does not necessarily have to be a complicated nonlinear policy. Indeed, in our proof, we construct a linear policy $K_2$ to derive the conclusion. In our problem, the second policy $K_2$ is assumed to be a black-box policy $\widehat{\pi}$ potentially parameterized by a deep neural network, yielding much more uncertainty on a similar convex combination. As a result, we must be careful when combining policies together.

Note that the idea of applying a convex combination of an RL policy and a control-theoretic policy linearly is not a new approach and similar policy combinations have been proposed in previous studies~\cite{cheng2019control,li2022robustness}.
However, in those results, either the model-free policy is required to satisfy specific structures~\cite{li2022robustness} or to be close enough to the stabilizing policy~\cite{cheng2019control} to be combined. In~\cite{li2022robustness}, a learning-augmented policy is combined with a linear quadratic regulator, but the learning-augmented policy has a specific form and it is not a black-box policy. In~\cite{cheng2019control}, a deep RL policy $\widehat{\pi}$ is combined with an $\mathcal{H}^{\infty} $ controller $\overline{\pi}$ and they need to satisfy that for any state $x\in\mathbb{R}^{\nn}$, $\|\widehat{\pi}(x)-\overline{\pi}(x)\|\leq C_{\pi}$ for some $C_{\pi}>0$. However, it is possible that when the state norm $\|x\|$ becomes large, the two policies in practice behave entirely differently. Moreover, it is hard to justify the benefit of combining two policies, conditioned on the fact that they are already similar.  Given that those assumptions are often not satisfied or hard to be verified in practice, we need another approach to guarantee worst-case stability when the black-box policy is biased and in addition ensure sub-optimality if the black-box policy works well.

\section{Adaptive $\lambda$-confident Control}\label{sec:main}


%

Motivated by the challenge highlighted in the previous section, we now propose a general framework that adaptively selects a sequence of monotonically decreasing confidence coefficients $(\lambda_t:t\geq 0)$ in order to switch between black-box and stabilizing model-based policies. We show that it is possible to guarantee a bounded competitive ratio when the black-box policy works well, i.e., it has a small consistency error $\varepsilon$, and guarantee stability in cases when the black-box policy performs poorly.


\begin{wrapfigure}[18]{L}{0.44\textwidth}
\small
\begin{algorithm}[H]
\KwData{System parameters $A,B,Q,R$, $\alpha$}
\DontPrintSemicolon
\For{$t\geq 0$}{

\lIf{$t=0$}{Initialize $\lambda_0\longleftarrow 1$}
\lIf{$\|x_t\|=0$}{$\lambda_t\longleftarrow \lambda_t$}
\Else{Obtain a coefficient $\lambda'$ 

\Comment{{\scriptsize\textsf{\textit{Online learning (Eq.~\eqref{eq:linear_lambda_t})}}}}

\lIf{$\lambda'>0 $ and $\lambda_{t-1}>\alpha$}{$\lambda_{t} \longleftarrow \min\{ \lambda', \lambda_{t-1}-\alpha\}$}
\lElse{
$\lambda_t \longleftarrow 0$
}
}
Generate an action $u_t=\lambda_t\widehat{\pi}(x_t)+(1-\lambda_t)\overline{\pi}(x_t)$

Update state according to~\eqref{eq:nonlinear_dynamic}
}
\caption{Adaptive $\lambda$-confident}
\label{alg:adaptive_policy}
\end{algorithm}
\end{wrapfigure}

The adaptive $\lambda$-confident policy introduced in~Algorithm~\ref{alg:adaptive_policy} involves an input coefficient $\lambda'$ at each time. The value of $\lambda_t$ can either be $\lambda_{t-1}$ decreased by a fixed step size $\alpha$, or a variable learned from known system parameters in~\eqref{eq:system} combined with observations of previous states and actions. In Section~\ref{sec:online_learning}, we consider an online learning approach to generate a value of $\lambda'$ at each time $t$, but it is worth emphasizing that the adaptive policy in~Algorithm~\ref{alg:adaptive_policy} and its theoretical guarantees in Section~\ref{sec:theoretical_guarantees} do not require specifying a detailed construction of $\lambda'$. 

The adaptive  policy differs from the naive convex combination that has been discussed in Section~\ref{sec:negativity} in that it adopts a sequence of time-varying and monotonically decreasing coefficients $(\lambda_t:t\geq 0)$ to combine a black-box policy and a model-based stabilizing policy, where the former policy is adaptively switched to the later one. The coefficient $\lambda_t$ converges to $\lim_{t\rightarrow\infty}\lambda_t={{\lambda}}$, where the limit ${{\lambda}}$ can be a positive value, if the state converges to a target equilibrium ($\mathbf{0}$ under our model assumptions) before $\lambda_t$ decreases to zero. This helps stabilize the system under assumptions on the Lipschitz constant $C_{\ell}$ of unknown nonlinear residual functions and if the black-box policy is near-optimal, a bounded competitive ratio is guaranteed, as we show in the next section.

\subsection{Theoretical guarantees}
\label{sec:theoretical_guarantees}

The theoretical guarantees we obtain are two-fold. First, we show that the adaptive $\lambda$-confident policy in Algorithm~\ref{alg:adaptive_policy} is stabilizing, as stated in Theorem~\ref{thm:stability}. Second, in addition to stability, we show that the policy has a bounded competitive ratio, if the black-box policy used has a small consistency error (Theorem~\ref{thm:competitive}). Note that if a black-box policy has a large consistency error $\varepsilon$, without using model-based advice, it can lead to instability and therefore possibly an unbounded competitive ratio.

\textbf{Stability.}
Before presenting our results, we introduce some new notation for convenience. Denote by $t_0$ the smallest time index when $\lambda_t = 0$ or $x_t=\mathbf{0}$ and note that $\mathbf{0}$ is an equilibrium state. Denote by ${{\lambda}}=\lim_{t\rightarrow\infty}\lambda_t$. Since $(\lambda_t:t\geq 0)$ is a monotonically decreasing sequence and $\lambda_t$ has a lower bound, $t_0$ and ${{\lambda}}$ exist and are unique. Let $H\coloneqq  R+B^{\top}PB$.
Define the parameters
$
\gamma\coloneqq \rho+C_F C_{\ell}(1+\|K\|)$ and
$\mu \coloneqq C_F\left(\varepsilon\left(C_{\ell}+\|B\|\right)+C_{a}^{\mathsf{sys}} C_{\ell}\right)$
where $A,B,Q,R$ are the known system parameters in~\eqref{eq:system}, $P$ has been defined in the Riccati equation~\eqref{eq:dare}; $C_F>0$ and $\rho\in(0,1)$ are constants such that $\| F^t\|\leq C_F\rho^t$, for any $t\geq 0$ as defined in Section~\ref{sec:problem}; $C_{\ell}$ is the Lipschitz constant in Assumption~\ref{assumption:continuity}; $\varepsilon>0$ is the consistency error in Definition~\ref{def:epsilon_consistency}; Finally, $C_{a}^{\mathsf{sys}}, C_{b}^{\mathsf{sys}},C_{c}^{\mathsf{sys}}>0$ are constants that only depend on the known system parameters in~\eqref{eq:system} and they are listed in Appendix~\ref{app:notation}.

%

Given the above notation, the theorem below guarantees stability of the adaptive $\lambda$-confident policy.

\begin{theorem}
\label{thm:stability}
Suppose the Lipschitz constant $C_{\ell}$ satisfies 
$C_{\ell}<\frac{1-\rho}{{C_F}(1+\|K\|)}.
$ The adaptive $\lambda$-confident policy  (Algorithm~\ref{alg:adaptive_policy}) is an exponentially stabilizing policy such that $ \|x_t\| = O\left((\mu/\gamma)^{t_0}\gamma^t\right)\|x_0\|$.

\end{theorem}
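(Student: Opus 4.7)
The plan is to partition the trajectory at the stopping time $t_0$ (the first time at which $\lambda_t=0$ or $x_t=\mathbf{0}$), bound the growth of $\|x_t\|$ by a factor $\mu$ per step for $t<t_0$, bound the decay by a factor $\gamma$ per step for $t\geq t_0$, and then multiply the two bounds. A preliminary observation is that $t_0$ is finite: since Algorithm~\ref{alg:adaptive_policy} decreases $\lambda_t$ by at least $\alpha$ whenever it is updated and otherwise sets it to $0$, one has $t_0\leq \lceil 1/\alpha\rceil$, a deterministic upper bound independent of the trajectory.

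For $t\geq t_0$, the action reduces to $u_t=-Kx_t$ (or the state has already reached the equilibrium $\mathbf{0}$, which is forward invariant since $f_t(\mathbf{0})=\mathbf{0}$ and $\widehat{\pi}(\mathbf{0})=\pi_t^*(\mathbf{0})=\mathbf{0}$ by $\varepsilon$-consistency). The closed-loop recursion is $x_{t+1}=Fx_t+f_t(x_t,-Kx_t)$, which I would unroll as $x_t=F^{t-t_0}x_{t_0}+\sum_{s=t_0}^{t-1}F^{t-1-s}f_s(x_s,-Kx_s)$. Using $\|F^k\|\leq C_F\rho^k$ together with $\|f_s(x,-Kx)\|\leq C_{\ell}(1+\|K\|)\|x\|$, and then invoking a discrete Gronwall inequality, yields $\|x_t\|\leq C_F\gamma^{\,t-t_0}\|x_{t_0}\|$ with $\gamma=\rho+C_FC_{\ell}(1+\|K\|)$. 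The hypothesis $C_{\ell}<(1-\rho)/(C_F(1+\|K\|))$ is exactly what makes $\gamma<1$, so this regime is geometrically contracting.

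For $t<t_0$, I would rewrite $u_t=-Kx_t+\lambda_t(\widehat{\pi}(x_t)+Kx_t)$ so that $x_{t+1}=Fx_t+B\lambda_t(\widehat{\pi}(x_t)-\overline{\pi}(x_t))+f_t(x_t,u_t)$. Using $\varepsilon$-consistency to write $\|\widehat{\pi}(x_t)-\pi_t^*(x_t)\|\leq\varepsilon\|x_t\|$, together with a linear-in-state bound $\|\pi_t^*(x_t)\|\leq C_a^{\mathsf{sys}}\|x_t\|$ on the clairvoyant optimum (which I would derive from the Bellman equation for the nonlinear problem, mirroring the sensitivity argument the authors invoke for their competitive-ratio result), I can control $\|\widehat{\pi}(x_t)-\overline{\pi}(x_t)\|$ and $\|u_t\|$ by constant multiples of $\|x_t\|$. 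Unrolling the recursion over $[0,t_0)$ and reusing the $\rho$-decay of $F^{t-1-s}$, careful bookkeeping produces a per-step amplification of at most $\mu=C_F(\varepsilon(C_{\ell}+\|B\|)+C_a^{\mathsf{sys}}C_{\ell})$, so that $\|x_{t_0}\|\lesssim \mu^{t_0}\|x_0\|$. Combining with the post-$t_0$ bound gives $\|x_t\|=O((\mu/\gamma)^{t_0}\gamma^t)\|x_0\|$ as claimed.

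The main obstacle lies in the pre-$t_0$ analysis. Unlike the post-$t_0$ regime, where the stable closed-loop matrix $F$ does all the work, here $\widehat{\pi}(x_t)$ is an opaque black box and the only handle is $\varepsilon$-consistency, which compares $\widehat{\pi}$ to a clairvoyant nonlinear optimum rather than to an analytic reference. Establishing the linear-in-$\|x_t\|$ bound on $\|\pi_t^*(x_t)\|$ through the nonlinear Bellman equation, and then cleanly separating the $\varepsilon$-dependent (black-box error) and $C_{\ell}$-dependent (nonlinear residual) contributions so that the per-step amplification collapses to exactly $\mu$, is the delicate step, and it is where the system constants such as $C_a^{\mathsf{sys}}$ collected in Appendix~\ref{app:notation} enter.
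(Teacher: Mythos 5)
Your skeleton coincides with the paper's: finiteness of $t_0$ from the $\alpha$-decrement, a post-$t_0$ unroll-plus-Gronwall bound giving the rate $\gamma=\rho+C_FC_{\ell}(1+\|K\|)$ (the paper's Lemma~\ref{lemma:model_based_exponential_stability}), and a pre-$t_0$ per-step deviation argument that combines $\varepsilon$-consistency with a Bellman-based sensitivity bound and then propagates deviations through the stable closed loop (the paper does this bookkeeping via a telescoping comparison with hybrid trajectories $x_t^{(\tau)}$ in Theorem~\ref{thm:blackbox_stability}, which is equivalent to your direct unrolling of $x_{t+1}=Fx_t+B\lambda_t(\widehat{\pi}(x_t)-\overline{\pi}(x_t))+f_t(x_t,u_t)$). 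The one genuine gap is the sensitivity lemma as you state it: $\|\pi_t^*(x_t)\|\leq C_{a}^{\mathsf{sys}}\|x_t\|$ is the wrong inequality and cannot produce your $\mu$. What the argument requires, and what the paper proves as Theorem~\ref{thm:optimal_modelbased_difference} by writing $V_t(x)=x^\top Px+g_t(x)$, differentiating the Bellman recursion for $g_t$, and exploiting the first-order condition at the minimizer $v_t^*$, is a bound on the \emph{difference} from the LQR policy, $\|\pi_t^*(x)-\overline{\pi}(x)\|\leq C_{a}^{\mathsf{sys}}C_{\ell}\|x\|$, in which the factor $C_{\ell}$ is essential.

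To see why your weaker statement fails: with only a norm bound on $\pi_t^*$, the triangle inequality gives $\|\widehat{\pi}(x)-\overline{\pi}(x)\|\leq(\varepsilon+C_{a}^{\mathsf{sys}}+\|K\|)\|x\|$, a constant-order per-step perturbation that does not vanish as $\varepsilon,C_{\ell}\to 0$; your bookkeeping would then yield a pre-$t_0$ amplification of order $\left(\gamma+C_F(\|B\|+C_{\ell})(C_{a}^{\mathsf{sys}}+\|K\|)\right)^{t_0}$ rather than $\mu^{t_0}$, and the claimed $O\left((\mu/\gamma)^{t_0}\gamma^t\right)$ factor is unobtainable. A linear-in-$\|x\|$ bound on $\|\pi_t^*(x)\|$ is also far cheaper to prove (it follows from quadratic upper and lower bounds on the value function alone), precisely because it carries no information about how close $\pi_t^*$ is to $-Kx$; the entire content of the paper's sensitivity analysis is that $\nabla g_t(x)$ and $v_t^*$ are $O(C_{\ell}\|x\|)$, i.e., the optimal policy's deviation from the LQR is proportional to the nonlinearity. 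Since you name the correct $\mu=C_F\left(\varepsilon(C_{\ell}+\|B\|)+C_{a}^{\mathsf{sys}}C_{\ell}\right)$ and say you would mirror the authors' sensitivity argument, the repair is to replace your lemma with the difference bound; the residual coupling $f_t(x_t,u_t)$ versus $f_t(x_t,-Kx_t)$ is then absorbed by the same bound together with Lipschitzness, exactly as in the paper's terms $(a)$ and $(b)$. One last small point: your claim $\|x_{t_0}\|\lesssim\mu^{t_0}\|x_0\|$ is valid only in the regime $\mu\geq\gamma$; when $\mu<\gamma$ the accumulated bound is $\Theta(\gamma^{t_0})\|x_0\|$, which is stronger and still yields the theorem, matching the paper's implicit reading of its own Theorem~\ref{thm:blackbox_stability} bound $\frac{\gamma^t-\mu^t}{1-\mu\gamma^{-1}}\left(C_F+\mu\gamma^{-1}\right)\|x_0\|$.
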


For Theorem~\ref{thm:stability} to hold such that $\gamma<1$, the Lipschitz constant $C_{\ell}$ needs to have an upper bound $\smash{\frac{1-\rho}{{C_F}(1+\|K\|)}}$ where $C_F>0$ and $\rho\in(0,1)$ are constants such that $\| F^t\|\leq C_F\rho^t$, for any $t\geq 0$. Since $A$ and $B$ are stabilizable, such $C_F$ and $\rho$ exist. The upper bound 
only depends on the known system parameters $A,B,Q$ and $R$. A small enough Lipschitz constant is required to guarantee stability. For instance, in~\cite{qu2021exploiting}, convergence exponentially to the equilibrium state is guaranteed when the Lipschitz constant satisfies $\smash{C_{\ell}=O\left(\frac{\sigma^2(1-\rho)^8}{{\kappa}^9{C_F}^{15}}\right)}$. 

\textbf{Competitiveness.}
Define a constant 
$
\smash{\overline{\mathsf{CR}}_{\mathrm{model}}\coloneqq  2\kappa(\frac{{{C_F}}\|P\|}{ 1-\rho})^2/\sigma}
$. The theorem below implies that when the model-free policy error $\varepsilon$ and the Lipschitz constant $C_{\ell}$
of the residual functions are  small enough, Algorithm~\ref{alg:adaptive_policy} is competitive. 

\begin{theorem}
\label{thm:competitive}
Suppose the Lipschitz constant satisfies $C_{\ell}<\min\left\{1, C_{a}^{\mathsf{sys}},  C_{c}^{\mathsf{sys}}\right\}
$.
When the consistency error satisfies $\varepsilon<\min\left\{ \frac{\sigma}{2\|H\|},\frac{1/{C_F}-C^{\mathsf{sys}}_{a}C_{\ell}}{C_{\ell}+\|B\|}\right\}$, the competitive ratio of the adaptive $\lambda$-confident policy  (Algorithm~\ref{alg:adaptive_policy}) is bounded by
    \begin{align*}  \mathsf{CR}(\varepsilon) =  (1-{{\lambda}})\overline{\mathsf{CR}}_{\mathrm{model}}+O\left({1}/\left({{1-\frac{2\|H\|}{\sigma}}\varepsilon}\right)\right) + O(C_{\ell}\|x_0\|). 
    \end{align*}
\end{theorem}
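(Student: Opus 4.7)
The plan is to decompose $\mathsf{ALG}-\mathsf{OPT}$ by introducing an auxiliary linear problem that shares the same sequence of actions $(u_t:t\geq 0)$ but evolves under the purely linear dynamics $\tilde{x}_{t+1}=A\tilde{x}_t + Bu_t$ with $\tilde{x}_0=x_0$. Denote by $\widetilde{\mathsf{ALG}}$, $\widetilde{\mathsf{OPT}}$ the quadratic cost and its offline optimum in this linear auxiliary world, and write $\mathsf{ALG}=\widetilde{\mathsf{ALG}}+(\mathsf{ALG}-\widetilde{\mathsf{ALG}})$ and $\mathsf{OPT}=\widetilde{\mathsf{OPT}}+(\mathsf{OPT}-\widetilde{\mathsf{OPT}})$. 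Using $\mathsf{OPT}\geq \sigma\sum_t \|x_t\|^2$ from $Q,R\succeq \sigma I$, the competitive ratio then splits into a ``linear'' ratio $\widetilde{\mathsf{ALG}}/\widetilde{\mathsf{OPT}}$, a ``model-based'' component tied to $\overline{\mathsf{CR}}_{\mathrm{model}}$, and a ``nonlinear residual'' absorbed into the $O(C_\ell\|x_0\|)$ term.

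For the linear part I would apply a generalized cost characterization lemma for LQR in the spirit of~\cite{li2022robustness}, yielding $\widetilde{\mathsf{ALG}}-\widetilde{\mathsf{OPT}} = \sum_t \|u_t + K\tilde{x}_t\|_H^2$ with $H=R+B^\top PB$. Since $u_t=\lambda_t\widehat{\pi}(x_t)+(1-\lambda_t)\overline{\pi}(x_t)$ and $\lambda_t\downarrow{\lambda}$, the deviation splits additively: the $(1-\lambda_t)$ weight on $\overline{\pi}$ produces the $(1-{\lambda})\overline{\mathsf{CR}}_{\mathrm{model}}$ term after identifying $\sum_t\|Kx_t\|_H^2/\mathsf{OPT}\leq \overline{\mathsf{CR}}_{\mathrm{model}}$ via Assumption~\ref{assumption:stability} and~\eqref{eq:dare}, while the $\lambda_t$ weight on $\widehat{\pi}$ contributes terms of the form $\|\widehat{\pi}(x_t)-\pi_t^*(x_t)\|^2\leq \varepsilon^2\|x_t\|^2$ from Definition~\ref{def:epsilon_consistency}. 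Dividing these by $\mathsf{OPT}\geq \sigma\sum_t\|x_t\|^2$ and performing a Gronwall-style absorption gives the factor $1/(1-\tfrac{2\|H\|}{\sigma}\varepsilon)$, which is well defined thanks to the assumption $\varepsilon<\sigma/(2\|H\|)$.

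For the nonlinear residual terms I would perform a sensitivity analysis of the optimal policy via its Bellman equation: write $V_t^{\star}(x)=x^\top P x + W_t(x)$ where $W_t$ is a correction satisfying a fixed-point equation driven by $(f_s:s\geq t)$. Using the contraction of the Riccati operator and $\|f_t(x,u)-f_t(y,v)\|\leq C_\ell(\|x-y\|+\|u-v\|)$, I would show that under $C_\ell<\min\{1,C_a^{\mathsf{sys}},C_c^{\mathsf{sys}}\}$ the correction is Lipschitz with a small constant, so that $\|\pi_t^{\star}(x)+Kx\|=O(C_\ell\|x\|)$ and hence $\mathsf{OPT}-\widetilde{\mathsf{OPT}}$ as well as $\mathsf{ALG}-\widetilde{\mathsf{ALG}}$ are controlled by $\sum_t C_\ell\|x_t\|^2+C_\ell\|x_t\|\|x_0\|$. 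Invoking Theorem~\ref{thm:stability}, which under the same $C_\ell$ thresholds gives $\|x_t\|=O((\mu/\gamma)^{t_0}\gamma^t)\|x_0\|$ with $\gamma<1$, the tail sums are geometric and collapse to $O(C_\ell\|x_0\|^2)$, producing the $O(C_\ell\|x_0\|)$ term in the competitive ratio after normalizing by $\mathsf{OPT}$.

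The main obstacle will be making the Bellman-based sensitivity argument self-consistent: the correction $W_t$ is defined implicitly through a dynamic programming recursion that also involves the optimal action $\pi_t^{\star}$ it is meant to describe. I expect to need a contractive fixed-point argument on the Lipschitz seminorm of $W_t$, where the contraction factor is a polynomial of $C_\ell,C_F,\rho,\|K\|,\|B\|$ and the $\min\{1,C_a^{\mathsf{sys}},C_c^{\mathsf{sys}}\}$ thresholds are precisely chosen to make this factor strictly less than one. Once the Lipschitz transfer from $f_t$ to $\pi_t^{\star}$ is in place, the rest of the argument is a bookkeeping exercise combining the linear characterization lemma, $\varepsilon$-consistency, and the geometric decay from Theorem~\ref{thm:stability}, so the conceptual difficulty is concentrated in that one nonlinear sensitivity step.
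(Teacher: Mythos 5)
Your scaffold fails at its very first step: the auxiliary linear problem you benchmark against. You transplant the algorithm's actions $(u_t:t\geq 0)$ into the unperturbed system $\tilde{x}_{t+1}=A\tilde{x}_t+Bu_t$; since $u_t$ is computed from the \emph{nonlinear} trajectory $x_t$, it becomes an open-loop input in your auxiliary world, and the mismatch $e_t\coloneqq \tilde{x}_t-x_t$ obeys $e_{t+1}=Ae_t-f_t(x_t,u_t)$ with $e_0=0$, i.e., $e_t=-\sum_{\tau=0}^{t-1}A^{t-1-\tau}f_\tau(x_\tau,u_\tau)$. Assumption~\ref{assumption:stability} grants only stabilizability of $(A,B)$, not stability of $A$, so this mismatch can grow exponentially, $\widetilde{\mathsf{ALG}}$ can be infinite even when $\mathsf{ALG}$ is finite, and neither $\mathsf{ALG}-\widetilde{\mathsf{ALG}}$ nor the trajectory-mismatch contribution $K(\tilde{x}_t-x_t)$ to your deviation term is $O(C_\ell\cdot\,)$ anything. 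The paper avoids this entirely by never changing the trajectory: it reinterprets the realized nonlinear path as a linear system with exogenous disturbance $v_t=f_t(x_t,u_t)$ and applies the generalized cost characterization (Lemma~\ref{lemma:characterization}) relative to the clairvoyant auxiliary policy $\pi'$ of~\eqref{eq:auxiliary}, built from $f_t^*=f_t(x_t^*,u_t^*)$. The benchmark step then works because the true optimal policy incurs exactly $\mathsf{OPT}$ in that auxiliary system while $\pi'$ is optimal for it, so $\mathsf{ALG}-\mathsf{OPT}\leq\mathsf{ALG}-\mathsf{OPT}'$ (Lemma~\ref{lemma:gap}).

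A second concrete gap is your source of the $(1-\lambda)\overline{\mathsf{CR}}_{\mathrm{model}}$ term. The claimed identification $\sum_t\|Kx_t\|_H^2/\mathsf{OPT}\leq\overline{\mathsf{CR}}_{\mathrm{model}}$ is circular (bounding a functional of the algorithm's own trajectory by $\mathsf{OPT}$ is essentially the competitive bound you are trying to prove) and, in your decomposition, the $(1-\lambda_t)$-weighted LQR component cancels against the linear benchmark $-K\tilde{x}_t$, leaving $u_t+K\tilde{x}_t=\lambda_t\left(\widehat{\pi}(x_t)+Kx_t\right)+K(\tilde{x}_t-x_t)$ --- no $(1-\lambda)$-weighted term surfaces at all. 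In the paper the term arises from the $(1-\lambda_t)$-weighted deviation $\left\|\overline{\pi}(x_t)-\pi'(x_t)\right\|^2=\left\|\sum_{\tau\geq t}\left(F^\top\right)^{\tau-t}Pf^*_\tau\right\|^2$ combined with a dedicated lower bound, Lemma~\ref{lemma:optimal_cost_lower_bound}, namely $\mathsf{OPT}\geq\frac{D_0(1-\rho)^2}{C_F^2\|P\|^2}\sum_t\|\psi_t\|^2$, which normalizes the residual energy of the model-based policy against $\mathsf{OPT}$; your proposal contains no counterpart of this lemma, and without it the model-based contribution cannot be folded into a constant multiple of $\mathsf{OPT}$. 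To your credit, you correctly identified two genuine ingredients: the Bellman-equation sensitivity analysis yielding $\|\pi_t^*(x)+Kx\|\leq C_a^{\mathsf{sys}}C_\ell\|x\|$ (Theorem~\ref{thm:optimal_modelbased_difference}, though the paper deploys it to establish the exponential-stability hypothesis of Lemma~\ref{lemma:competitive} via Theorem~\ref{thm:blackbox_stability}, not to perturb $\mathsf{OPT}$), and the self-referential absorption of the $\varepsilon\,\mathsf{ALG}$ term that produces the factor $1/\left(1-\frac{2\|H\|}{\sigma}\varepsilon\right)$.
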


Combining Theorem~\ref{thm:stability} and~\ref{thm:competitive}, our main results are proved when the Lipschitz constant satisfies $C_{\ell}<\min\left\{1, C_{a}^{\mathsf{sys}},  C_{c}^{\mathsf{sys}},\frac{(1-\rho)}{{C_F}(1+\|K\|)}\right\} 
$.  Theorem~\ref{thm:stability} and~\ref{thm:competitive} have some interesting implications. First, if the selected time-varying confidence coefficients converge to a  $\lambda$ that is large, then we trust the black-box policy and use a higher weight in the per-step combination. This requires a slower decaying rate of $\lambda_t$ to zero so as a trade-off, $t_0$ can be higher and this leads to a weaker stability result and vice versa. In contrast, when the nonlinear dynamics in~\eqref{eq:nonlinear_dynamic} becomes linear with unknown constant perturbations,~\cite{li2022robustness} shows a trade-off between robustness and consistency, i.e., a universal competitive ratio bound holds, regardless the error of machine-learned predictions. Different from the linear case where a competitive ratio bound always exists and can be decomposed into terms parameterized by some confidence coefficient $\lambda$, for the nonlinear system dynamics~\eqref{eq:nonlinear_dynamic}, there are additional terms due to the nonlinearity of the system that can only be bounded if the consistency error $\varepsilon$ is small. This highlights a fundamental difference between linear and nonlinear systems, where the latter is known to be more challenging. Proofs of Theorem~\ref{thm:stability} and~\ref{thm:competitive} are provided in Appendix~\ref{app:proof_stability} and~\ref{app:proof_competitiveness}.

\section{Practical Implementation and Experiments}

\subsection{Learning confidence coefficients online}
\label{sec:online_learning}

Our main results in the previous section are stated without specifying a sequence of confidence coefficients $(\lambda_t:t\geq 0)$ for the policy; however in the following we introduce an online learning approach to generate confidence coefficients based on observations of actions, states and known system parameters. The negative result in  Theorem~\ref{thm:negativity} highlights that the adaptive nature of the confidence coefficients in Algorithm \ref{alg:adaptive_policy} are crucial to ensuring stability.  Naturally, learning the values of the confidence coefficients $(\lambda_t:t\geq 0)$ online can further improve performance.

In this section we propose an online learning approach based on a linear parameterization of a black-box model-free policy
$
\widehat{\pi}_t(x) = -K x -H^{-1}B^\top\sum_{\tau=t}^{\infty}\left(F^\top\right)^{t-\tau}P \widehat{f}_{\tau}
$ where $(\widehat{f}_t:t\geq 0)$ are parameters representing estimates of the residual functions for a black-box policy. Note that when $\widehat{f}_t=f_t^*\coloneqq f_t(x_t^*,u_t^*)$ where $x_t^*$ and $u_t^*$ are optimal state and action at time $t$ for an optimal policy, then the model-free policy is optimal. In general, a black-box model-free policy $\widehat{\pi}$ can be nonlinear, the linear parameterization provides an example of how the time-varying confidence coefficients $(\lambda_t:t\geq 0)$ are selected and the idea can be extended to nonlinear parameterizations such as kernel methods.

Under the linear parameterization assumption, for linear dynamics,~\cite{li2022robustness} shows that the optimal choice of $\lambda_{t+1}$ that minimizes the gap between the policy cost and optimal cost for the $t$ time steps is 

\begin{align}
\label{eq:optimal_lambda}
  \lambda_{t+1} =   {\Big(\sum_{s=0}^{t}\left(\eta(f^*;s,t)\right)^{\top} H \left(\eta(\widehat{f};s,t)\right)}\Big)\big/\Big({\sum_{s=0}^{t}\left(\eta(\widehat{f};s,t)\right)^{\top} H \left(\eta(\widehat{f};s,t)\right)}\Big),
\end{align}

where $\eta(f;s,t)\coloneqq \sum_{\tau=s}^{t}\left(F^\top\right)^{\tau-s} P f_\tau$. Compared with a linear quadratic control problem, computing $\lambda_t$ in~\eqref{eq:optimal_lambda} raises two problems. The first is different from a linear dynamical system where true perturbations can be observed, the optimal actions and states are unknown, making the computation of the term $\eta(f^*;s,t-1)$ impossible. The second issue is similar. Since the model-free policy is a black-box, we do not know the parameters $(\widehat{f}_t:t\geq 0)$ exactly. Therefore, we use approximations to compute the terms $\eta\left(f^*;s,t\right)$ and $\eta(\widehat{f};s,t)$ in~\eqref{eq:optimal_lambda}
and the linear parameterization and linear dynamics assumptions are used to derive the approximations  respectively with details provided in Appendix~\ref{app:notation}. Let $\left(BH^{-1}\right)^\dagger$ denote the Moore–Penrose inverse of $BH^{-1}$.  Combining~\eqref{eq:linear_policy} and~\eqref{eq:linear_dynamic} with~\eqref{eq:optimal_lambda} yields the following online-learning choice of a confidence coefficient $\lambda_t=\min\left\{\lambda',\lambda_{t-1} - \alpha\right\}$ where $\alpha>0$ is a fixed step size and

\begin{align}
\label{eq:linear_lambda_t}
   \lambda'\coloneqq  \frac{\sum_{s=1}^{t-1}\left(\sum_{\tau=s}^{t-1}\left(F^\top\right)^{\tau-s}P\left(Ax_{\tau}+Bu_{\tau}-x_{\tau+1}\right)\right)^{\top} B \left(\widehat{u}_s + K x_s\right)}{\sum_{s=0}^{t-1} \left(\widehat{u}_s + K x_s\right)^\top \left(BH^{-1}\right)^\dagger B \left(\widehat{u}_s + K x_s\right)}
\end{align}
based on the crude model information $A,B,Q,R$ and previously observed states, model-free actions and policy actions. This online learning process provides a choice of the confidence coefficient in Algorithm~\ref{alg:adaptive_policy}. It is worth noting that other approaches for generating $\lambda_t$ exist, and our theoretical guarantees apply to any approach.  





\subsection{Experiments}


\begin{figure}[t]
    \centering
    
\includegraphics[scale=0.25]{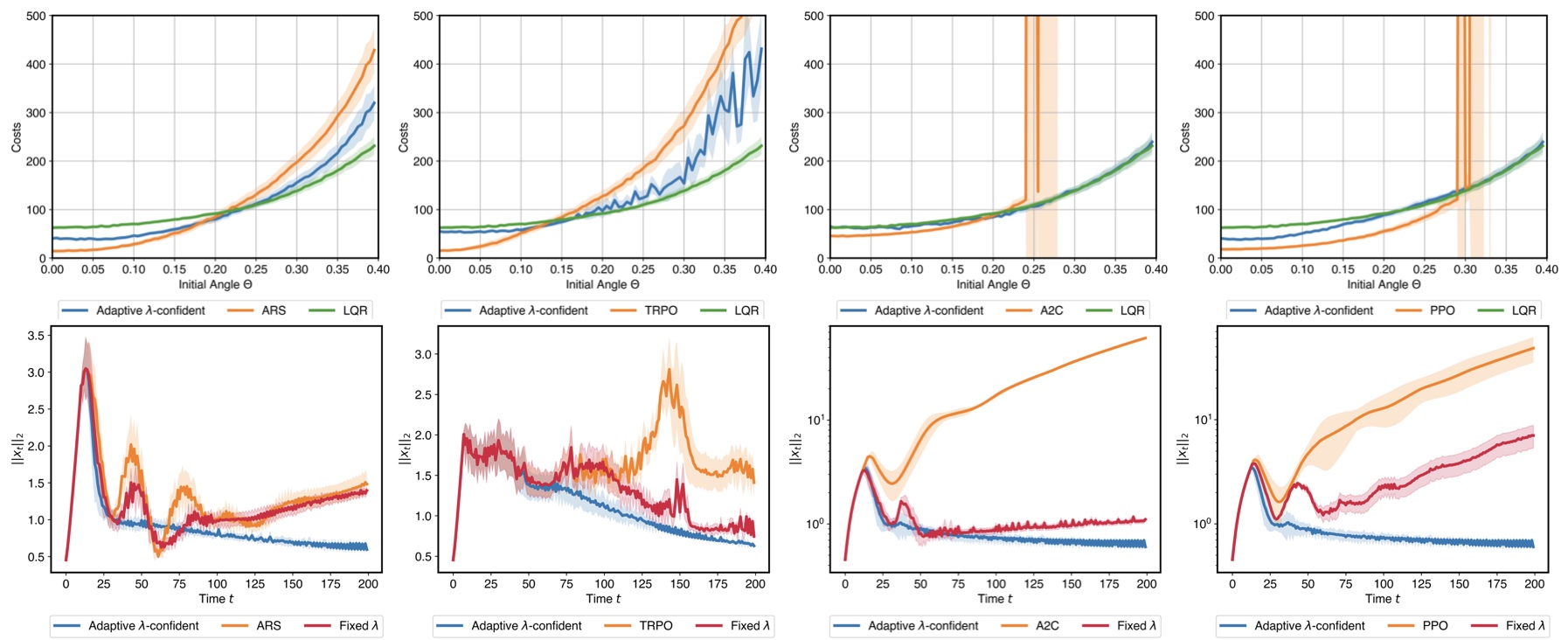}

\caption{Top (\textit{Competitiveness}): costs of pre-trained RL agents, an LQR and the adaptive policy when the initial pole angle $\theta$ (unit: radians) varies. Bottom (\textit{stability}): convergence of $\|x_t\|$ in $t$ with $\theta=0.4$ for pre-trained RL agents, a naive combination (Section~\ref{sec:negativity}) using a fixed $\lambda=0.8$ and the adaptive policy.}

\label{fig:cartpole_main}
\end{figure}
\begin{figure}[t]
    \centering

\includegraphics[scale=0.23]{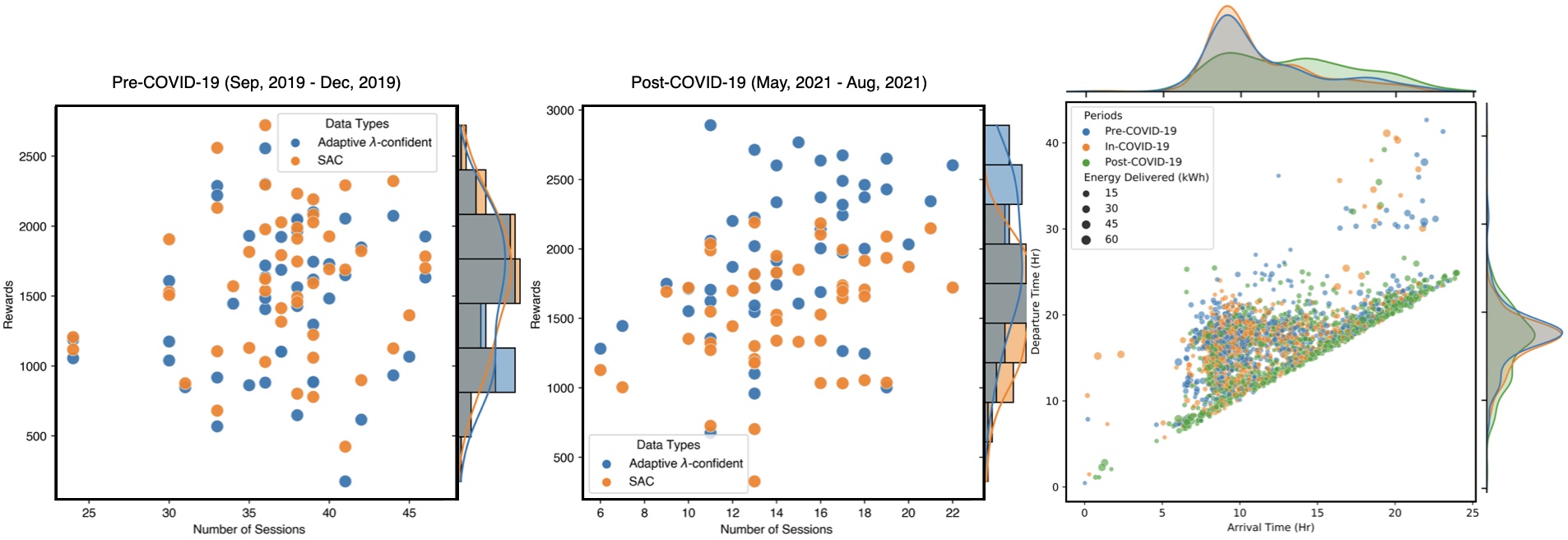}

\caption{Left: total rewards of the adaptive policy and SAC for pre-COVID-19 days and post-COVID-19 days. Right: shift of data distributions due to the work-from-home policy.}

\label{fig:ev_main}
\end{figure}

To demonstrate the efficacy of the adaptive $\lambda$-confident policy (Algorithm~\ref{alg:adaptive_policy}), we first apply it to the CartPole OpenAI gym environment (CartPole-v1, Example~\ref{example:cartpole} in Appendix~\ref{app:exp})~\cite{brockman2016openai}.\footnote{The CartPole environment is modified so that quadratic costs are considered rather than discrete rewards.}  Next, we apply it to an adaptive electric vehicle (EV) charging environment modeled by a real-world dataset~\cite{lee2019acn}.

\textbf{The CartPole problem.}
We use the Stable-Baselines3 pre-trained agents~\cite{stable-baselines3} of A2C~\cite{mnih2016asynchronous}, ARS~\cite{mania2018simple}, PPO~\cite{schulman2017proximal} and TRPO~\cite{schulman2015trust} as four candidate black-box policies.
In Figure~\ref{fig:cartpole_main}, the adaptive policy finds a trade-off between the pre-trained black-box polices and an LQR with crude model information (i.e., about $50\%$ estimation error in the mass and length values). In particular, when $\theta$ increases, it stabilizes the state while the A2C and PPO policies become unstable when the initial angle $\theta$ is large.

\textbf{Real-world adaptive EV charging.} In the EV charging application, a SAC~\cite{haarnoja2018soft} agent is trained with data collected from a pre-COVID-19 period and tested on days before and after COVID-19. Due to a policy change (the work-from-home policy), the SAC agent becomes biased in the post-COVID-19 period (see the right sub-figure in Figure~\ref{fig:ev_main}). With crude model information, the adaptive policy has rewards matching the SAC agent in the pre-COVID-19 period and significantly outperforms the SAC agent in the post-COVID-19 period with an average total award $1951.2$ versus $1540.3$ for SAC. Further details on the hyper-parameters and reward function are included in Appendix~\ref{app:exp}.

\section{Concluding Remarks}
\label{sec:con}

This work considers a novel combination of pre-trained black-box policies with model-based advice from crude model information. A general adaptive policy is proposed, with theoretical guarantees on both stability and sub-optimality. The effectiveness of the adaptive policy is validated empirically. We
believe that the results presented lead to an important first step towards improving the practicality of existing DNN-based algorithms when using them as black-boxes in nonlinear real-world control problems. Exploring  other forms of model-based advice theoretically, and verifying practically other implementations to learn the confidence coefficients online are interesting future directions.



\addcontentsline{toc}{section}{Bibliography}
\bibliographystyle{unsrt}
{\bibliography{main}}

\newpage

\appendix

\section{Experimental Setup and Supplementary Results}
\label{app:exp}

We describe the experimental settings and choices of hyper-parameters and reward/cost functions in the two applications.



\begin{table}[ht]
\footnotesize
\renewcommand{\arraystretch}{1.3}
    \centering
       \caption{Hyper-parameters used in the CartPole problem.}
    \begin{tabular}{l|l}
\specialrule{.11em}{.1em}{.1em} 
   \textbf{Parameter}  & \textbf{Value}  \\
   \hline
   Number of Monte Carlo Tests & $10$ \\
   Initial angle variation (in $\mathrm{rad}$) & $\theta\pm 0.05$ \\
   Cost matrix $Q$ & $I$ \\
   Cost matrix $R$ & $\left[10^{-4}\right]$\\
   Acceleration of gravity $g$ (in $\mathrm{m/s^2}$) & $9.8$\\
   Pole mass $m$ (in $\mathrm{kg}$) & $0.2$ for LQR; $0.1$ for real environment\\
   Cart mass $M$ (in $\mathrm{kg}$) & $2.0$ for LQR; $1.0$ for real environment  \\
   Pole length $l$ (in $\mathrm{m}$) &  $2$\\
   Duration $\tau$ (in $\mathrm{second}$) &  $0.02$\\
   Force magnitude $F$ & $10$ \\
    CPU  & Intel{\textregistered}  i7-8850H \\
\specialrule{.11em}{.1em}{.1em} 
    \end{tabular}
  \label{table:cartpole}
\end{table}

\subsection{The CartPole problem}

\textbf{Problem setting.}
The CartPole problem considered in the experiments is described by the following example.

\begin{example}[The CartPole Problem]
\label{example:cartpole}
In the CartPole problem, the goal of a controller is to stabilize the pole in the upright position. Neglecting friction, the dynamical equations of the CartPole problem are
\begin{align*}
    \ddot{\theta} = \frac{g \sin \theta + \cos \theta\left(\frac{-u-m l \dot{\theta}^2 \sin \theta}{m+M}\right)}{l\left(\frac{4}{3}-\frac{m \cos^2\theta}{m+M}\right)}, \ 
    \ddot{y} = \frac{u+ m l\left(\dot{\theta}^2 \sin\theta - \ddot{\theta}\cos \theta\right)}{m+M}
\end{align*}
where $u$ is the input force; $\theta$ is the angle between the pole and the vertical line; $y$ is the location of the pole; $g$ is the gravitational acceleration; $l$ is the pole length; $m$ is the pole mass; and $M$ is the cart mass. 
Taking $\sin\theta\approx \theta$ and $\cos\theta \approx 1$ and ignoring higher order terms provides a linearized system and the discretized dynamics of the CartPole problem can be represented as for any $t$,
\begin{align*}
\underbrace{\begin{bmatrix}
    y_{t+1}\\ \dot{y}_{t+1} \\ {\theta}_{t+1} \\ \dot{\theta}_{t+1}
    \end{bmatrix}}_{x_{t+1}} = \underbrace{\begin{bmatrix}
    1 & \tau & 0 & 0 \\ 0 & 1 & -\frac{m l g\tau}{\eta(m +M)} & 0 \\ 0 & 0 & 1 & \tau \\ 0 & 0 & \frac{g\tau}{\eta} & 1
    \end{bmatrix}}_{A}\underbrace{\begin{bmatrix}
    {y}_{t}\\ \dot{y}_{t} \\ {\theta}_{t}\\ \dot{\theta}_{t}
    \end{bmatrix}}_{x_t} + \underbrace{\begin{bmatrix}
  0\\ \frac{(m+M)\eta+m l}{(m+M)^2\eta}\tau\\ 0 \\ -\frac{\tau}{(m+M)\eta}
    \end{bmatrix}}_{B} u_t + f_t\left(y_{t},\dot{y}_{t},\theta_t,\dot{\theta}_t,u_t\right)
\end{align*}
where $(y_{t},\dot{y}_{t},\theta_t,\dot{\theta}_t)^\top$ denotes the system state at time $t$; $\tau$ denotes the time interval between state updates; $\eta\coloneqq ({4}/{3})l-{ml}/({m+M})$ and the function $f_t$ measures the difference between the  linearized system and the true system dynamics. Note that $f_t(\mathbf{0})=0$ for all time steps $t\geq 0$.
\end{example}

\textbf{Policy setting.}
The pre-trained agents  Stable-Baselines3~\cite{stable-baselines3} of A2C~\cite{mnih2016asynchronous}, ARS~\cite{mania2018simple}, PPO~\cite{schulman2017proximal} and TRPO~\cite{schulman2015trust} are selected as four candidate black-box policies. The CartPole environment is modified so that quadratic costs are considered rather than discrete rewards to match our control problem~\eqref{eq:system}. The choices of $Q$ and $R$ in the costs and other parameters are provided in Table~\ref{table:cartpole}. Note that we vary the values of $m$ and $M$ in the LQR implementation to model the case of only having crude estimates of linear dynamics. The LQR outputs an action $0$ if $-Kx_t+F'<0$ and $1$ otherwise. A shifted force $F'=15$ is used to model inaccurate linear approximations and noise. The pre-trained RL policies output a binary decision $\{0,1\}$ representing force directions. To use our adaptive policy in this setting, given a system state $x_t$ at each time $t$, we implement the following:
\begin{align*}
    u_t = \lambda_t \left(2\pi_{\mathrm{RL}}(x_t)F-F\right) + (1-\lambda_t)\left(2\pi_{\mathrm{LQR}}(x_t)F-F\right)
\end{align*}
where $F$ is a fixed force magnitude defined in Table~\ref{table:cartpole}; $\pi_{\mathrm{RL}}$ denotes an RL policy;  $\pi_{\mathrm{LQR}}$ denotes an LQR policy and $\lambda_t$ is a confidence coefficient generated based on~\eqref{eq:linear_lambda_t}. Instead of fixing a step size $\alpha$, we set an upper bound $\delta=0.2$ on the learned step size $\lambda'$ to avoid converging too fast to a pure model-based policy.

\begin{table}[t]
\footnotesize
\renewcommand{\arraystretch}{1.3}
    \centering
       \caption{Hyper-parameters used in the real-world EV charging problem.}
    \begin{tabular}{l|l}
\specialrule{.11em}{.1em}{.1em} 
   \textbf{Parameter}  & \textbf{Value}  \\
\hline
\multicolumn{2}{c}{\textit{Problem setting}}\\
\hline
   Number of chargers $\nn$ & $5$ \\
   Line limit $\gamma$ (in $\mathrm{kW}$) & $6.6$ \\
   Duration $\tau$ (in $\mathrm{minute}$)  & $5$ \\
   Reward coefficient $\phi_1$ & $50$ \\
   Reward coefficient $\phi_2$ & $0.01$\\
   Reward coefficient $\phi_3$ & $10$\\
     \hline
\multicolumn{2}{c}{\textit{Policy setting}}\\
\hline
 Discount  $\gamma_{\mathrm{SAC}}$  & $0.9$ \\
 Target smoothing coefficient $\tau_{\mathrm{SAC}}$  & $0.005$ \\
  Temperature parameter  $\alpha_{\mathrm{SAC}}$  & $0.2$ \\
Learning rate  & $3\cdot 10^{-4}$ \\
Maximum number of steps  & $10\cdot 10^{6}$ \\
Reply buffer size  & $10\cdot 10^{6}$ \\
Number of hidden layers (all networks) & $2$\\
Number of hidden units per layer & $256$\\
Number of samples per minibatch & $256$\\
Nonlinearity & $\mathrm{ReLU}$\\
\hline
\multicolumn{2}{c}{\textit{Training and testing data}}\\
\hline
   Pre-COVID-19 (Training)  &May, 2019 - Aug, 2019 \\
   Pre-COVID-19 (Testing)  &Sep, 2019 - Dec, 2019 \\
   In-COVID-19 &Feb, 2020 - May, 2020 \\
   Post-COVID-19 &May, 2021 - Aug, 2021 \\
    CPU  & Intel{\textregistered}  i7-8850H \\
\specialrule{.11em}{.1em}{.1em} 
    \end{tabular}
  \label{table:ev}
\end{table}

\subsection{Real-world adaptive EV charging in tackling COVID-19}

\textbf{Problem setting.}
The second application considered is an EV charging problem modeled by real-world large-scale charging data~\cite{lee2019acn}. The problem is formally described below.

\begin{example}[Adaptive EV charging]
\label{example:ev}
Consider the problem of managing a fleet of electric vehicle supply equipment (EVSE). Let $\nn$ be the number of EV charging stations. Denote by $x_t\in\mathbb{R}_+^\nn$ the charging states of the $\nn$ stations, i.e., $\smash{x_t^{(i)}>0}$ if an EV is charging at station-$i$ and $\smash{x_t^{(i)}}$ (kWh) energy needs to be delivered; otherwise  $\smash{x_t^{(i)}=0}$. Let $u_t\in\mathbb{R}_+^\nn$ be the allocation of energy to the $\nn$ stations. There is a line limit $\gamma>0$ so that $\smash{\sum_{i=1}^{\nn}u_t^{(i)}\leq \gamma}$ for any $i$ and $t$. At each time $t$, new EVs may arrive and  EVs being charged may depart from previously occupied stations. Each new EV $j$ induces a charging session, which can be represented by $s_j\coloneqq (a_j,d_j,e_j, i)$ where at time $a_j$, EV $j$ arrives at station $i$, with a battery capacity $e_j>0$, and depart at time $d_j$. Assuming lossless charging, the system dynamics is
$
    x_{t+1} = x_t + u_t + f_t(x_t,u_t), \ t\geq 0
$
where the nonlinear residual functions $(f_t:t\geq 0)$ represent uncertainty and constraint violations.
Let $\tau$ be the time interval between state updates. Given fixed session information $(s_j:j> 0)$, denote by the following sets containing the sessions that are assigned to a charger $i$ and activated (deactivated) at time $t$:
\begin{align*}
    \mathcal{A}_i &\coloneqq\left\{(j,t):a_j\leq t\leq a_j +\tau, s_j^{(4)}=i\right\},\\
    \mathcal{D}_i &\coloneqq\left\{(j,t):d_j\leq t\leq d_j +\tau, s_j^{(4)}=i\right\}.
\end{align*}
The charging uncertainty is summarized as for any $i=1,\ldots,\nn$,
\begin{align*}
f_t^{(i)}(x_t,u_t)\coloneqq \begin{cases}
s_j^{(3)}
\quad & \text{ if } (j,t)\in\mathcal{A}_i \hfill {\small\textsf{(New sessions are active)}}\\
-x^{(i)}_t-u^{(i)}_t
\quad & \text{ if } (j,t)\in\mathcal{D}_i \text{ or } x^{(i)}_t+u^{(i)}_t<0 \quad \hfill  {\small\textsf{(Sessions end/battery is full)}}\\
\frac{\gamma}{\|u_t\|_1}u^{(i)}_t
\quad & \text{ if } \sum_{i=1}^{\nn}u^{(i)}_t>\gamma \hfill {\small\textsf{(Line limit is exceeded)}}\\
0
\quad & \text{ otherwise }
 \end{cases}.
\end{align*}
\end{example}

\begin{figure}[t]
    \centering
\includegraphics[scale=0.2]{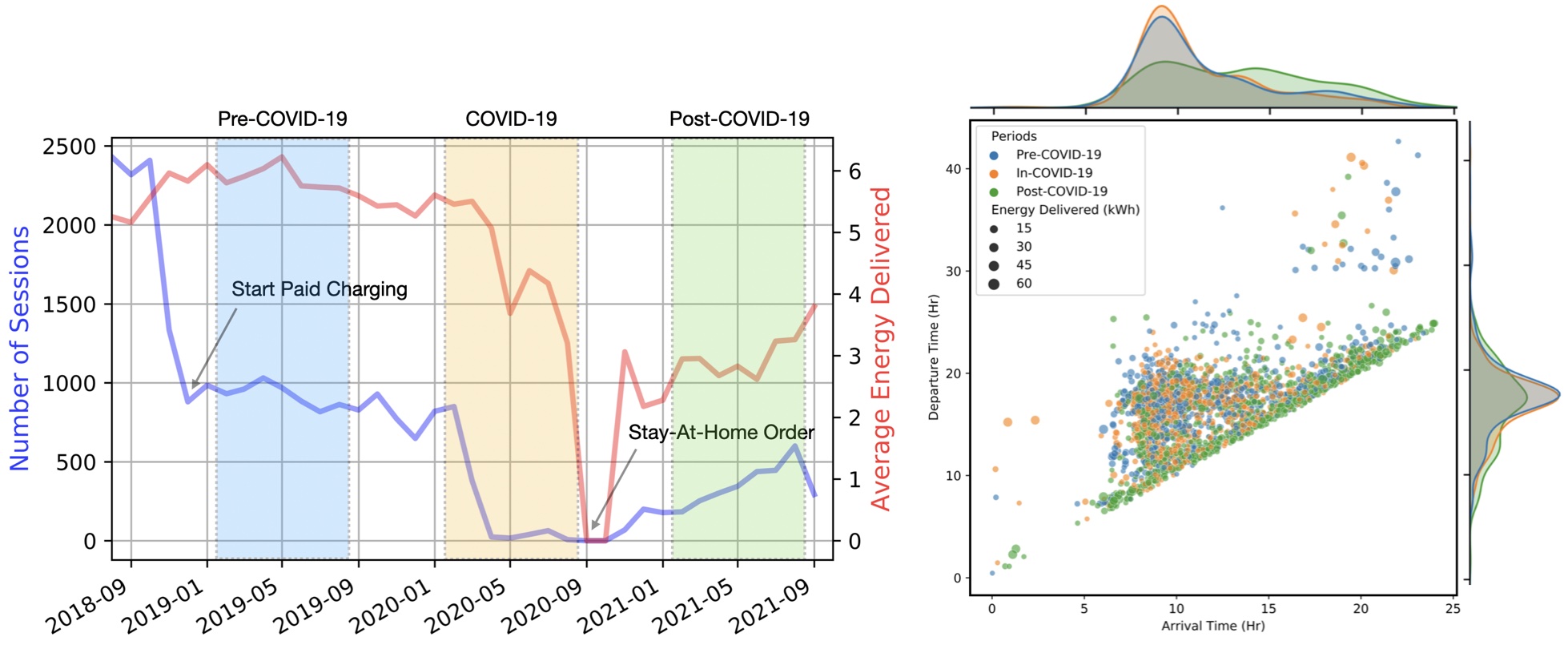}
\caption{Illustration of the impact of COVID-19 on charging behaviors in terms of the total number of charging sessions and energy delivered (left) and distribution shifts (right).}
\label{fig:data_shift}
\end{figure}

Note that the nonlinear residual functions $(f_t:t\geq 0)$ in Example~\ref{example:ev} may not satisfy $f_t(\mathbf{0})=0$ for all $t\geq 0$ in Assumption~\ref{assumption:continuity}. Our experiments further validate that the adaptive policy works well in practice even if some of the model assumptions are violated.
The goal of an EV charging controller is to maximize a system-level reward function including maximizing energy delivery, avoiding a penalty due to uncharged capacities and minimizing electricity costs. The reward function is
\begin{align*}
    r(u_t,x_t)\coloneqq \underbrace{\phi_1 \times \tau  \|u_t\|_2}_{\textrm{Charging rewards}}  -\underbrace{\phi_2 \times \|x_t\|_2}_{\textrm{Unfinished charging}} - \underbrace{\phi_3\times  p_t \|u_t\|_{1}}_{\textrm{Electricity cost}} - \underbrace{\phi_4\times \sum_{i=1}^{\nn} \mathbf{1}((j,t)\in \mathcal{D}_i)\frac{x_t^{(i)}}{e_j}}_{\textrm{Penalty}}
\end{align*}
with coefficients $\phi_1,\phi_2,\phi_3$ and $\phi_4$ shown in Table~\ref{table:ev}. The environment is wrapped as an OpenAI gym environment~\cite{brockman2016openai}. In our implementation, for convenience, the state $x_t$ is in $\mathbb{R}_{+}^{2\nn}$ with additional $\nn$ coordinates representing remaining charging duration. The electricity prices $(p_t:t\geq 0)$ are
average locational marginal prices (LMPs) on the CAISO (California Independent System Operator) day-ahead market in 2016.

\begin{figure}[t]
    \centering
\includegraphics[scale=0.215]{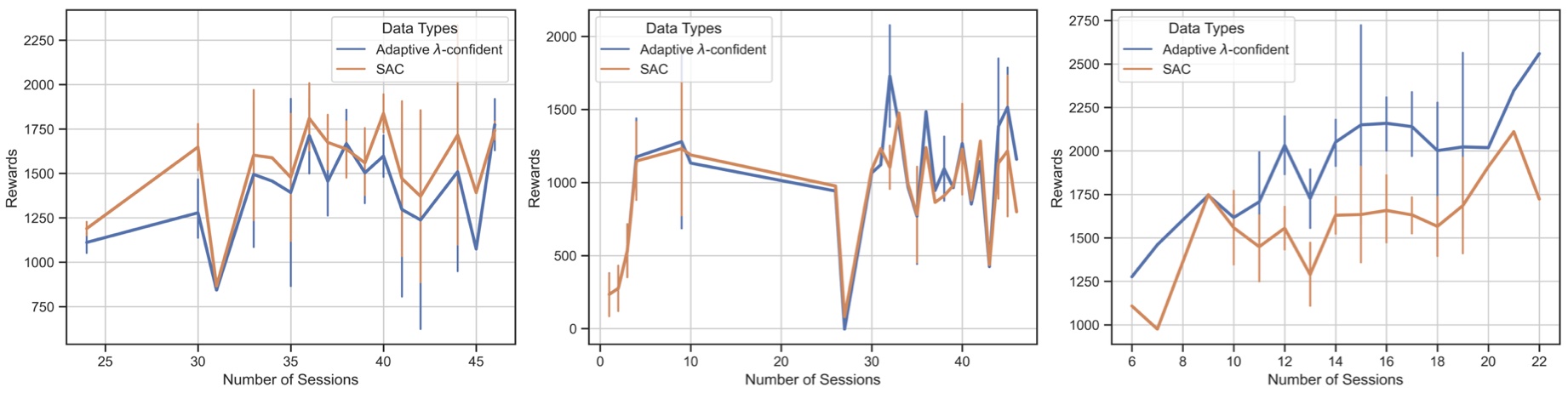}
\caption{Bar-plots of rewards/number of sessions corresponding to testing the SAC policy and the adaptive policy on the EV charging environment based on sessions collected from three time periods.}

\label{fig:reward_curve}
\end{figure}

\begin{figure}[t]
    \centering
\includegraphics[scale=0.205]{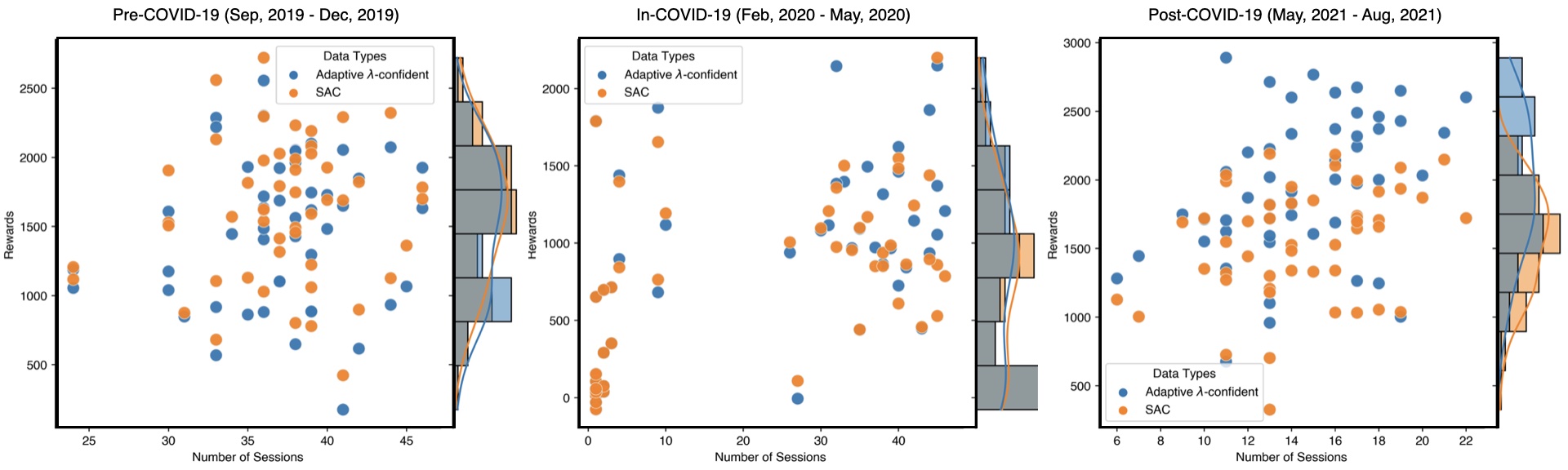}
\caption{Supplementary results of Figure~\ref{fig:ev_main} with additional testing rewards for an in-COVID-19 period.}

\label{fig:EV_full}
\end{figure}

\begin{table}[t]
\footnotesize
\renewcommand{\arraystretch}{1.3}
    \centering
       \caption{Average total rewards for the SAC policy and the adaptive $\lambda$-confident policy (Algorithm~\ref{alg:adaptive_policy}).}
    \begin{tabular}{l|c|c|c}
\specialrule{.11em}{.1em}{.1em} 
\textit{Policy} & \textbf{Pre-COVID-19}  & \textbf{In-COVID-19}  &
\textbf{Post-COVID-19} \\
   \hline
 SAC~\cite{haarnoja2018soft} & $1601.914$ & $765.664$  &  $1540.315$\\
 Adaptive & $1489.338$  & $839.651$ & $1951.192$\\
\specialrule{.11em}{.1em}{.1em} 
    \end{tabular}
  \label{table:rewards}
\end{table}

\textbf{Policy setting.}
We train an SAC~\cite{haarnoja2018soft} policy $\pi_{\mathrm{SAC}}$ for EV charging with $4$-month data collected from a real-world charging garage~\cite{lee2019acn} before the outbreak of COVID-19. The public charging infrastructure has $54$ chargers and we use the charging history to set up our charging environment with $5$ chargers. Knowledge of the linear parts in the nonlinear dynamics $
    x_{t+1} = x_t + u_t + f_t(x_t,u_t), \ t\geq 0
$ is assumed to be known, based on which an LQR controller $\pi_{\mathrm{LQR}}$ is constructed. Our adaptive policy presented in Algorithm~\ref{alg:adaptive_policy} learns a confidence coefficient $\lambda_t$ at each time step $t$ to combine the two policies $\pi_{\mathrm{SAC}}$ and $\pi_{\mathrm{LQR}}$.

\textbf{Impact of COVID-19.} We test the policies on different periods from $2019$ to $2021$. The impact of COVID-19 on the charging behavior is intuitive. As COVID-19 became an outbreak in early Feb, 2020 and later a pandemic in May, 2020, limited Stay at Home Order and curfew were issued, which significantly reduce the number of active users per day. Figure~\ref{fig:data_shift} illustrates the dramatic fall of the total number of monthly charging sessions and total monthly energy delivered between Feb, 2020 and Sep, 2020. Moreover, despite the recovery of the two factors since Jan, 2021, COVID-19 has a long-term impact on lifestyle behaviors. For example, the right sub-figure in Figure~\ref{fig:data_shift} shows that the arrival times of EVs (start times of sessions) are flattened in the post-COVID-19 period, compared to a more concentrated arrival peak before COVID-19. The significant shift of  distributions highly deteriorates the performance of DNN-based model-free policies, e.g., SAC that are trained on normal charging data collected before COVID-19. In this work, we demonstrate that taking advantage of model-based information, the adaptive $\lambda$-confident policy (Algorithm~\ref{alg:adaptive_policy}) is able to correct the mistakes made by DNN-based model-free policies trained on biased data and achieve more robust charging performance.

\textbf{Additional experimental results.}
We provide supplementary experimental results. Besides comparing the periods of pre-COVID-19 and post-COVID-19, we include the testing rewards for an in-COVID-19 period in Figure~\ref{fig:EV_full}, together with the corresponding bar-plots in Figure~\ref{fig:reward_curve}. In addition, the average total rewards for the SAC policy and the adaptive policy are summarized in Table~\ref{table:rewards}.

\section{Notation and Supplementary Definitions}
\label{app:notation}

\subsection{Summary of notation}

A summary of notation is provided in Table~\ref{table:parameter}.

\begin{table}[ht]
\footnotesize
\renewcommand{\arraystretch}{1.3}
    \centering
    \begin{tabular}{c|c}
\specialrule{.15em}{.1em}{.1em} 
  \textbf{Symbol}  & \textbf{Definition}  \\
  \hline
  \multicolumn{2}{c}{\textit{System Model}}\\
  \hline
 $A,B,Q,R$ & Linear system parameters \\
  $P$ & Solution of the DARE~\eqref{eq:dare} \\
  $H$ & $R+B^{\top}PB$ \\
  $K$ & $H^{-1} B^\top P A$\\
  $F$ &  $A-BK$  \\
$\sigma$ &  $Q,R\succeq \sigma I$\\
  $\kappa$ & $\max\{2,\|A\|,\|B\|\}$\\
   $C_F$ and $\rho$ &  $\| F^t\|\leq C_F\rho^t$\\
   $f_t:\mathbb{R}^{\nn}\times\mathbb{R}^{\nm}\rightarrow\mathbb{R}^{\nn}$  & Nonlinear residual functions\\
   $C_{\ell}$ & Lipschitz constant of $f_t$\\
  $\widehat{\pi}$ & Black-box (model-free) policy\\
   $\overline{\pi}$ & Model-based policy (LQR)\\
$\varepsilon$ & Consistency error of a black-box policy\\
   \hline
\multicolumn{2}{c}{\textit{Main Results}}\\
\hline
$C_{a}^{\mathsf{sys}},C_{b}^{\mathsf{sys}},C_{c}^{\mathsf{sys}}$ & Constants defined in~Section~\ref{sec:theorem_constants} \\
$\gamma$ & $\rho+C_F C_{\ell}(1+\|K\|)$\\
$\mu$ & $C_F\left(\varepsilon\left(C_{\ell}+\|B\|\right)+C_{a}^{\mathsf{sys}} C_{\ell}\right)$\\
$\mathsf{ALG}$ & Algorithm cost\\
$\mathsf{OPT}$ & Optimal cost\\
$\mathsf{CR}(\varepsilon)$ & $2\kappa(\frac{{{C_F}}\|P\|}{ 1-\rho})^2/\sigma$\\
$\lambda$ & $\lim_{t\rightarrow\infty}\lambda_t$\\
$t_0$ & The smallest time index when $\lambda_t = 0$ or $x_t=\mathbf{0}$\\
\specialrule{.15em}{.1em}{.1em} 
    \end{tabular}
  \vspace{3pt}
\caption{Symbols used in this work.}
  \label{table:parameter}
\end{table}
\subsection{Constants in Theorem~\ref{thm:stability} and~\ref{thm:competitive}}
\label{sec:theorem_constants}
Let $H\coloneqq  R+B^{\top}PB$. With $\sigma>0$ defined in Assumption~\ref{assumption:stability}, the parameters  $C_{a}^{\mathsf{sys}}, C_{b}^{\mathsf{sys}},C_{c}^{\mathsf{sys}}>0$ in the statements of Theorem~\ref{thm:stability} and~\ref{thm:competitive} (Section~\ref{sec:main}) are the following constants that only depend on the known system parameters in~\eqref{eq:system}:

\begin{align}
\nonumber
C_{a}^{\mathsf{sys}}\coloneqq & 1/\Big(2C_F\|R+B^\top PB\|^{-1}\Big(\|PF\|+(1+\|K\|)\left(\|PB\|+\|P\|\right)\\
  \label{eq:policy_constant}
 & \quad +\frac{C_{b}^{\mathsf{sys}}}{2}\|B+ I\|(1+\|F\|+\|K\|)\Big)\Big),\\
  \nonumber
C_{b}^{\mathsf{sys}}\coloneqq & \frac{2{{C_F}^2}\|P\|(\rho+\overline{C})\left(\rho+(1+\|K\|)\right)}{1-(\rho+\overline{C})^2}\sqrt{\frac{\|Q+K^\top RK\|}{\sigma}},\\
\nonumber
C_{c}^{\mathsf{sys}}\coloneqq &{\|H\|}/{\left(4\left\| PB\right\|+2\|P\|+C_\nabla(\|B\|+1)\|B\|\right)}.
\end{align}

\subsection{Approximations in online learning steps~Section~\ref{sec:online_learning}}

The following approximations of $ \eta(\widehat{f};s,t)$ and $  \eta(f^*;s,t)$ in~\eqref{eq:optimal_lambda} are used to derive the expression of $\lambda'$ in~\eqref{eq:linear_lambda_t} for learning the confidence coefficients online:
\begin{align}
\label{eq:linear_policy}
  \eta(\widehat{f};s,t) &\approx  \sum_{\tau=s}^{\infty}\left(F^\top\right)^{\tau-s} P \widehat{f}_\tau = -\left(H^{-1}B^{\top}\right)^\dagger\left(\widehat{u}_s + Kx_s \right),\\
  \label{eq:linear_dynamic}
  \eta(f^*;s,t) &=\sum_{\tau=s}^{t}\left(F^\top\right)^{\tau-s} P f^*_\tau \approx  \sum_{\tau=s}^{t}\left(F^\top\right)^{\tau-s} P (x_{\tau+1}-Ax_{\tau}-Bu_{\tau}).
\end{align}

\section{Useful Lemmas}

The following lemma generalizes the results in~\cite{yu2020competitive}.

\begin{lemma}[Generalized cost characterization lemma]
\label{lemma:characterization}
Consider a linear quadratic control problem below where $Q,R\succ 0$ and the pair of matrices $(A,B)$ is stabilizable:
\begin{align}
\nonumber
\min_{(u_t: t\geq 0)} \sum_{t=0}^{\infty}& (x_t^\top Q x_t + u_t^\top R u_t), \
\textrm{subject} \ \textrm{to} \ x_{t+1} = Ax_t+Bu_t+v_t \textrm{ for any } t\geq 0.
\end{align}
If at each time $t\geq 0$, $u_t =  -Kx_t-H^{-1}B^{\top}W_t+\eta_t$ where $\eta_t\in\mathbb{R}^\nm$,
then the induced cost is
\begin{align*}
& x_0^{\top}Px_0 + 2x_0^{\top}F^{\top}V_0 + \sum_{t=0}^{\infty}\eta_t^{\top} H \eta_t+ \sum_{t=0}^{\infty}\left(v_t^{\top}P v_t + 2v_t^{\top}F^{\top}V_{t+1}\right)\\
    & + \sum_{t=0}^{\infty}\left(W_t^{\top}B H^{-1} B^{\top}(W_t-2V_t) + 2\eta_t^{\top} B^{\top}(V_t-W_t) \right) + O(1)
\end{align*}
where $P$ is the unique solution of the  DARE in~\eqref{eq:dare}, $H \coloneqq R+B^{\top}PB$, $F\coloneqq A-B(R+B^\top P B)^{-1} B^\top PA = A-BK$, $W_t \coloneqq {\sum_{\tau=t}^{\infty}}\left(F^{\top}\right)^{\tau}Pw_{t+\tau}$  and
    $V_t \coloneqq {\sum_{\tau=t}^{\infty}}\left(F^{\top}\right)^{\tau}Pv_{t+\tau}$.
\end{lemma}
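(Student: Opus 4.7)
The argument follows the classical value-function telescoping technique used in the derivation of the standard cost characterization lemma, with two extra bookkeeping twists to handle (i) the presence of the policy-side term $-H^{-1}B^{\top}W_t$ in $u_t$, and (ii) the nontrivial disturbance $v_t$ in the dynamics. The plan is to set $e_t \coloneqq -H^{-1}B^{\top}W_t+\eta_t$ so that the closed-loop dynamics become $x_{t+1}=Fx_t+Be_t+v_t$, and then to use the Lyapunov-like function $\phi(x)=x^{\top}Px$ together with the telescoping identity $\sum_{t=0}^{\infty}\bigl(\phi(x_t)-\phi(x_{t+1})\bigr)=\phi(x_0)-\lim_{T\to\infty}\phi(x_T)$, absorbing the limit term into the $O(1)$ remainder under the stabilizing assumption $\rho(F)<1$.

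\textbf{Per-step identity.} First I would compute $x_t^{\top}Qx_t+u_t^{\top}Ru_t+\phi(x_t)-\phi(x_{t+1})$ by expanding $x_{t+1}^{\top}Px_{t+1}$ from the closed-loop dynamics. Two purely algebraic consequences of the DARE are the workhorses: (a) $Q+K^{\top}RK=P-F^{\top}PF$, and (b) $B^{\top}PF=RK$ (obtained from $HK=B^{\top}PA$ together with $H=R+B^{\top}PB$). Identity (a) kills the pure $x_t^{\top}(\cdot)x_t$ terms and (b) eliminates the state-control cross term $-2e_t^{\top}RKx_t+2x_t^{\top}F^{\top}PBe_t$. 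What survives is the clean per-step quantity
\begin{equation*}
e_t^{\top}He_t+v_t^{\top}Pv_t+2\,x_t^{\top}F^{\top}Pv_t+2\,e_t^{\top}B^{\top}Pv_t.
\end{equation*}

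\textbf{Handling the remaining $x_t$-dependent cross term.} The only term that still couples to the state trajectory is $\sum_t 2x_t^{\top}F^{\top}Pv_t$. I would eliminate it by summation-by-parts against the auxiliary sequence $\psi_t\coloneqq 2x_t^{\top}F^{\top}V_t$. Using the defining recursion $V_t=Pv_t+F^{\top}V_{t+1}$, a direct computation yields $\psi_t-\psi_{t+1}=2x_t^{\top}F^{\top}Pv_t-2(Be_t+v_t)^{\top}F^{\top}V_{t+1}$, so telescoping (with the boundary at infinity swept into $O(1)$) gives
\begin{equation*}
\sum_{t=0}^{\infty}2x_t^{\top}F^{\top}Pv_t=2x_0^{\top}F^{\top}V_0+\sum_{t=0}^{\infty}2(Be_t+v_t)^{\top}F^{\top}V_{t+1}+O(1).
\end{equation*}
Merging $2e_t^{\top}B^{\top}Pv_t$ with the newly produced $2e_t^{\top}B^{\top}F^{\top}V_{t+1}$ and applying the recursion $Pv_t+F^{\top}V_{t+1}=V_t$ collapses these into the single term $2e_t^{\top}B^{\top}V_t$.

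\textbf{Final substitution.} At this stage the total cost reads $x_0^{\top}Px_0+2x_0^{\top}F^{\top}V_0+\sum_t\bigl(e_t^{\top}He_t+v_t^{\top}Pv_t+2v_t^{\top}F^{\top}V_{t+1}\bigr)+\sum_t 2e_t^{\top}B^{\top}V_t+O(1)$. Plugging in $e_t=-H^{-1}B^{\top}W_t+\eta_t$ expands $e_t^{\top}He_t=W_t^{\top}BH^{-1}B^{\top}W_t-2\eta_t^{\top}B^{\top}W_t+\eta_t^{\top}H\eta_t$ and $2e_t^{\top}B^{\top}V_t=-2W_t^{\top}BH^{-1}B^{\top}V_t+2\eta_t^{\top}B^{\top}V_t$; gathering terms according to whether they carry $W_t$, $V_t$, or $\eta_t$ reproduces exactly $W_t^{\top}BH^{-1}B^{\top}(W_t-2V_t)+2\eta_t^{\top}B^{\top}(V_t-W_t)$ in the stated formula. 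The main obstacle I foresee is not the algebra per se but the careful justification of the infinite-sum manipulations and of the limit terms $\lim_T x_T^{\top}Px_T$ and $\lim_T 2x_T^{\top}F^{\top}V_T$ being $O(1)$; these require invoking $\rho(F)<1$ together with the absolute convergence of the series defining $V_t$ and $W_t$, which is standard but needs to be noted explicitly so that every telescoping step is legitimate.
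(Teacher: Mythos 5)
Your proof is correct and is essentially the paper's own argument in forward-telescoping form: the paper runs the same completion of squares as a backward induction on the cost-to-go ansatz $\mathsf{COST}_t(x)=x^{\top}Px+p_t^{\top}x+q_t$, where its linear coefficient $p_t=2F^{\top}V_t$ is exactly your summation-by-parts sequence $\psi_t=2x_t^{\top}F^{\top}V_t$, its $q_t$-recursion accumulates exactly your surviving per-step terms via the same recursion $V_t=Pv_t+F^{\top}V_{t+1}$, and the final substitution $e_t=-H^{-1}B^{\top}W_t+\eta_t$ is identical. One orientation typo to fix in your prose: the per-step quantity should be the stage cost plus $\phi(x_{t+1})-\phi(x_t)$ (not $\phi(x_t)-\phi(x_{t+1})$), so that the DARE identity $Q+K^{\top}RK=P-F^{\top}PF$ cancels the quadratic term and telescoping contributes $+\phi(x_0)$; your displayed surviving terms $e_t^{\top}He_t+v_t^{\top}Pv_t+2x_t^{\top}F^{\top}Pv_t+2e_t^{\top}B^{\top}Pv_t$ and your final formula already correspond to this correct orientation.
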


\begin{proof}
Denote by $\mathsf{COST}_t(x_t;\eta,w)$ the terminal cost at time $t$ given a state $x_t$ with fixed action perturbations $\eta\coloneqq (\eta_t:t\geq 0)$ and state perturbations $w\coloneqq (w_t: t\geq 0)$. 
We assume $\mathsf{COST}_t(x_t;\eta,w)
\coloneqq x_t^{\top}P x_t + p_t^{\top} x_t + q_t$. 
Similar to the proof of Lemma 13 in~\cite{yu2020competitive}, using the backward induction, the cost can be rewritten as
\begin{align*}
\mathsf{COST}_t(x_t;\eta,w) =& \mathsf{COST}_{t+1}(x_{t+1};\eta,w) + x_t^{\top} Q x_t + u_t^{\top} R u_t \\
=&x_t^{\top} Q x_t + u_t^{\top} R u_t + (Ax_t+Bu_t+v_t)^{\top} P (Ax_t+Bu_t+v_t) \\
& \quad + p_{t+1}^{\top}(Ax_t+Bu_t+v_t) + q_{t+1}\\
=& x_t^{\top}Qx_t + (Ax_t+v_t)^{\top}P(Ax_t+v_t) + (Ax_t+v_t)^{\top}p_{t+1}+q_{t+1}\\
&\quad + \underbrace{u_t^{\top}(R+B^{\top}PB)u_t}_{(a)} + \underbrace{2u_t^{\top}B^{\top}(PAx_t + Pv_t+p_{t+1}/2)}_{(b)}.
\end{align*}
Denote by $H \coloneqq R+B^{\top}PB$. Noting that $u_t=-Kx_t+G_t+\eta_t$ where we denote 
\begin{align*}
    G_t \coloneqq H^{-1}B^{\top}W_t, \  W_t \coloneqq {\sum_{\tau=t}^{\infty}}\left(F^{\top}\right)^{\tau}Pw_{t+\tau} \ \text{and} \
    V_t \coloneqq {\sum_{\tau=t}^{\infty}}\left(F^{\top}\right)^{\tau}Pv_{t+\tau},
\end{align*}
it follows that
\begin{align*}
    (a)=&\left(Kx_t+G_t-\eta_t\right)^{\top}H\left(Kx_t+G_t-\eta_t\right)-2(Kx_t)+2(Kx_t)^{\top}H(G_t-\eta_t)\\
&\quad + (G_t-\eta_t)^{\top}(R+B^{\top}PB)(G_t-\eta_t)\\
(b)=&-2(Kx_t)^{\top}H(Kx_t) -2x_t^{\top}K^{\top} B^{\top} P v_t -x_t^{\top}K^{\top}B^{\top}p_{t+1}-2(Kx_t)^{\top}H(G_t-\eta_t)\\
&\quad -2(G_t-\eta_t)^{\top}B^{\top}(Pv_t+p_{t+1}/2),
\end{align*}
implying  
\begin{align*}
\mathsf{COST}_t(x_t;\eta,w)=&x_t^{\top} (Q+A^{\top}PA-K^{\top}HK)x_t + x_t^{\top}F^{\top}(2Pv_t +p_{t+1})  \\
&+ (G_t-\eta_t)^{\top} H (G_t-\eta_t) - 2(G_t-\eta_t)^{\top}B^{\top}(Pv_t+p_{t+1}/2)\\
&+ v_t^{\top}P v_t + v_t^{\top}p_{t+1} + q_{t+1}.
\end{align*}
According to the DARE in~\eqref{eq:dare}, since $K^{\top}HK= A^\top PB (R+B^\top P B)^{-1} B^\top PA$, we get
\begin{align*}
\mathsf{COST}_t(x_t;\eta,w)&=x_t^{\top}Px_t +  x_t^{\top}F^{\top}(2Pv_t +p_{t+1})  + (G_t-\eta_t)^{\top} H (G_t-\eta_t)  \\
&\quad -  2(G_t-\eta_t)^{\top}B^{\top}(Pv_t+p_{t+1}/2) + v_t^{\top}P v_t + v_t^{\top}p_{t+1} + q_{t+1},
\end{align*}
which implies
\begin{align}
\label{eq:v_t}
    p_t =& 2F^{\top}\left(Pv_t + p_{t+1}\right) =2{\sum_{\tau=t}^{\infty}} \left(F^{\top}\right)^{\tau+1}P v_{t+\tau} = 2F^{\top}V_t,\\
    \nonumber
    q_t =&  q_{t+1} + v_t^{\top}P v_t + 2v_t^{\top}F^{\top}V_{t+1} +  (G_t-\eta_t)^{\top} H (G_t-\eta_t)  \\
    \nonumber
&\quad - 2(G_t-\eta_t)^{\top}B^{\top}(Pv_t+p_{t+1}/2)\\
\nonumber
=&  q_{t+1} + v_t^{\top}P v_t + 2v_t^{\top}F^{\top}V_{t+1} +  (G_t-\eta_t)^{\top} H (G_t-\eta_t)  \\
\nonumber
&\quad - 2G_t^{\top}B^{\top}V_t + 2\eta_t^{\top} B^{\top}V_t\\
\label{eq:q_t}
=&  q_{t+1} + v_t^{\top}P v_t + 2v_t^{\top}F^{\top}V_{t+1} + G_t^{\top} B^{\top}(W_t-2V_t) + 2\eta_t^{\top} B^{\top}(V_t-W_t) + \eta_t^{\top} H \eta_t.
\end{align}
Therefore,~\eqref{eq:v_t} and~\eqref{eq:q_t} together imply the following general cost characterization:
\begin{align*}
    \mathsf{COST}_t(x_t;\eta,w) =& x_0^{\top}Px_0 + x_0^{\top}p_0 + q_0\\
    = & x_0^{\top}Px_0 + 2x_0^{\top}F^{\top}V_0 + \sum_{t=0}^{\infty}\eta_t^{\top} H \eta_t+ \sum_{t=0}^{\infty}\left(v_t^{\top}P v_t + 2v_t^{\top}F^{\top}V_{t+1}\right)\\
    & + \sum_{t=0}^{\infty}\left(G_t^{\top} B^{\top}(W_t-2V_t) + 2\eta_t^{\top} B^{\top}(V_t-W_t) \right).
\end{align*} 
Rearranging the terms above completes the proof.
\end{proof}


To deal with the nonlinear dynamics in~\eqref{eq:nonlinear_dynamic}, we consider an auxiliary linear system, with a fixed perturbation $w_t=f_t(x_t^*,u_t^*)$ for all $t\geq 0$ where each $x_t^*$ denotes an optimal  state and $u_t^*$ an optimal action, generated by an optimal policy $\pi^*$. We define a sequence of linear policies $\left(\pi'_t: t\geq 0\right)$ where $\pi_t':\mathbb{R}^{\nn}\rightarrow\mathbb{R}^{\nm}$ generates an action
\begin{align}
\label{eq:auxiliary}
u_t'=\pi_t'(x_t)\coloneqq-(R+B^{\top}PB)^{-1}B^{\top}\left(PAx_t+\sum_{\tau=t}^{\infty}\left(F^\top\right)^{\tau-t} P f_{\tau}(x_\tau^*,u_\tau^*) \right),
\end{align}
which is an optimal policy for the auxiliary linear system.
Utilizing Lemma~\ref{lemma:characterization}, the gap between the optimal cost  and algorithm cost for the system in~\eqref{eq:system} can be characterized below.

\begin{lemma}
\label{lemma:gap}
For any $\eta_t\in\mathbb{R}^\nm$, if at each time $t\geq 0$, a policy $\pi:\mathbb{R}^{\nn}\rightarrow\mathbb{R}^{\nm}$ takes an action
$u_t = \pi(x_t) = \pi_t'(x_t) + \eta_t$,
then the gap between the optimal cost $\mathsf{OPT}$ of the nonlinear system~\eqref{eq:system} and the algorithm cost $\mathsf{ALG}$ induced by selecting control actions $(u_t: t\geq 0)$ equals to
\begin{align}
\nonumber
\mathsf{ALG} -\mathsf{OPT}  
\leq &\sum_{t=0}^{\infty}\eta_t^{\top} H \eta_t  + O(1)\\
\nonumber
& + 2\sum_{t=0}^{\infty}\eta_t^{\top}B^{\top}\left({\sum_{\tau=t}^{\infty}}\left(F^{\top}\right)^{\tau}P(f_{t+\tau}-f^*_{t+\tau})\right)\\
\nonumber
& + \sum_{t=0}^{\infty}\left(f_t^{\top}Pf_t -\left(f_t^*\right)^{\top}Pf_t^* \right) 
+ 2x_0^{\top}\left(\sum_{t=0}^{\infty}\left(F^{\top}\right)^{t+1}P\left(f_{t}-f^*_{t}\right)\right)
\\
\nonumber
& + 2\sum_{t=0}^{\infty}\left(f_t^{\top}{\sum_{\tau=0}^{\infty}}\left(F^{\top}\right)^{\tau+1}Pf_{t+\tau+1} -\left(f_t^*\right)^{\top}{\sum_{\tau=0}^{\infty}}\left(F^{\top}\right)^{\tau+1}P\left(f_{t+\tau+1}^*\right)\right) \\
\label{eq:gap}
& + 2\sum_{t=0}^{\infty}\left({\sum_{\tau=t}^{\infty}}\left(F^{\top}\right)^{\tau}Pf^*_{t+\tau}\right)BH^{-1}B^{\top}\left({\sum_{\tau=t}^{\infty}}\left(F^{\top}\right)^{\tau}P(f^*_{t+\tau}-f_{t+\tau})\right)
\end{align}
where $H\coloneqq R+B^\top P B$ and $F\coloneqq A-BK$. For any $t\geq 0$, we write $f_t\coloneqq f_t(x_t,u_t)$ and $f_t^*\coloneqq f_t(x^*_t,u^*_t)$ where $(x_t: t\geq 0)$ denotes a trajectory of states generated by the policy $\pi$ with  actions $(u_t^*: t\geq 0)$ and  $(x_t^*: t\geq 0)$ denotes an optimal trajectory of states generated by optimal actions $(u_t^*: t\geq 0)$.
\end{lemma}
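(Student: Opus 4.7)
The plan is to apply the generalized cost characterization lemma (Lemma~\ref{lemma:characterization}) twice, once to the algorithm's trajectory and once to the optimal trajectory, and then subtract. First I set up notation that matches the hypotheses of Lemma~\ref{lemma:characterization}. Along the algorithm's trajectory $(x_t,u_t)$, the closed-loop dynamics can be rewritten as $x_{t+1}=Ax_t+Bu_t+v_t$ with $v_t\coloneqq f_t=f_t(x_t,u_t)$, and the action takes exactly the form required by Lemma~\ref{lemma:characterization}, namely $u_t=-Kx_t-H^{-1}B^{\top}W_t^\star+\eta_t$, where $W_t^\star\coloneqq\sum_{\tau=t}^{\infty}(F^{\top})^{\tau-t}Pf_{\tau}^\star$ corresponds to $w_t=f_t^\star$ in the notation of Lemma~\ref{lemma:characterization}. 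The associated auxiliary sequence is $V_t\coloneqq\sum_{\tau=t}^{\infty}(F^{\top})^{\tau-t}Pf_{\tau}$ (evaluated on the true, algorithm-dependent residuals).

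Second, I apply Lemma~\ref{lemma:characterization} to the optimal trajectory $(x_t^\star,u_t^\star)$. Because the true nonlinear dynamics along this trajectory coincide with the linear dynamics $x_{t+1}^\star=Ax_t^\star+Bu_t^\star+f_t^\star$ when we freeze the perturbation to $v_t=f_t^\star$, the quadratic cost of $(x_t^\star,u_t^\star)$ equals $\mathsf{OPT}$ and can be evaluated by Lemma~\ref{lemma:characterization} with $v_t=w_t=f_t^\star$. Writing $u_t^\star = \pi_t'(x_t^\star)+\eta_t^\star$ with $\eta_t^\star\coloneqq u_t^\star+Kx_t^\star+H^{-1}B^{\top}W_t^\star$, and observing that the choice $v_t=w_t$ forces $V_t^\star=W_t^\star$ in that application, the lemma gives an exact expression for $\mathsf{OPT}$ in which the $\eta_t^\star$-dependent terms collapse neatly: the cross term $2(\eta_t^\star)^{\top}B^{\top}(V_t^\star-W_t^\star)$ vanishes and the remaining $\sum_t(\eta_t^\star)^{\top}H\eta_t^\star\ge 0$ is exactly the slack I will throw away to produce the inequality in the lemma statement.

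Third, I subtract the two expressions term by term. The quadratic state terms $x_0^{\top}Px_0$ cancel; the $2x_0^{\top}F^{\top}(V_0-V_0^\star)$ piece reassembles into the boundary term $2x_0^{\top}\sum_t(F^{\top})^{t+1}P(f_t-f_t^\star)$; the diagonal pieces $\sum_t(f_t^{\top}Pf_t-f_t^{\star\top}Pf_t^\star)$ appear directly; the cross terms $2\sum_t(f_t^{\top}F^{\top}V_{t+1}-f_t^{\star\top}F^{\top}V_{t+1}^\star)$ give the $(F^{\top})^{\tau+1}$-indexed double sums in~\eqref{eq:gap}; the $W_t^\star$-bilinear terms combine to produce $2\sum_t W_t^{\star\top}BH^{-1}B^{\top}(V_t^\star-V_t)$, which is exactly the final line of~\eqref{eq:gap}; and the algorithm's action-error term $\sum_t\eta_t^{\top}H\eta_t$ and the linear-in-$\eta_t$ term $2\sum_t\eta_t^{\top}B^{\top}(V_t-W_t^\star)$ appear as stated. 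Dropping the nonnegative $\sum_t(\eta_t^\star)^{\top}H\eta_t^\star$ coming from $\mathsf{OPT}$ yields the desired upper bound, with the residual boundary contributions absorbed into the $O(1)$ term from Lemma~\ref{lemma:characterization}.

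The main obstacle I anticipate is the bookkeeping in the third step: Lemma~\ref{lemma:characterization} must be applied with $v_t$ and $w_t$ chosen differently for the two trajectories (so $W_t=W_t^\star$ but $V_t$ depends on $f_t$ for the algorithm, while $V_t^\star=W_t^\star$ for the optimum), and keeping these four summation sequences straight so that every term in~\eqref{eq:gap} is recovered with the correct sign and coefficient is the delicate part. Verifying the summability/tail behavior (justifying the $O(1)$ term inherited from Lemma~\ref{lemma:characterization}) should be routine given stabilizability of $(A,B)$ and the geometric decay of $F^t$, but is worth noting as part of the legitimacy of the infinite-horizon rearrangement.
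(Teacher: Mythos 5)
Your proposal is correct and takes essentially the same approach as the paper: both arguments run Lemma~\ref{lemma:characterization} on the algorithm's trajectory with $v_t=f_t$, $w_t=f_t^*$ and compare against the auxiliary linear system with frozen perturbations $f_t^*$, then subtract term by term to obtain~\eqref{eq:gap}. The only cosmetic difference is that you evaluate $\mathsf{OPT}$ directly via the characterization with a residual $\eta_t^\star$ and discard the nonnegative slack $\sum_t (\eta_t^\star)^{\top} H \eta_t^\star$, whereas the paper invokes the optimality of the auxiliary policy $\pi'$ to assert $\mathsf{OPT}'\leq\mathsf{OPT}$ --- the same fact, which your computation in effect proves.
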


\begin{proof}
Note that with optimal trajectories of states and actions fixed, the optimal controller $\pi^*$ induces the same cost $\mathsf{OPT}$ for both the nonlinear system in~\eqref{eq:system} and the auxiliary linear system. Moreover, the linear controller defined in~\eqref{eq:auxiliary} induces a cost ${\mathsf{OPT}'}$ that is smaller than $\mathsf{OPT}$ when running both in the auxiliary linear system since the constructed linear policy is optimal. Therefore, according to Lemma~\ref{lemma:characterization},
$
\mathsf{ALG}-\mathsf{OPT}\leq \mathsf{ALG}-\mathsf{OPT}'
$
and
applying Lemma~\ref{lemma:characterization} with $v_t=f_t(x_t,u_t)$ and $w_t=f_t(x_t^*,u_t^*)$ for all $t\geq 0$,~\eqref{eq:gap} is obtained.
%
\end{proof}

\section{Proof of Theorem~\ref{thm:negativity}}
\label{app:proof_negativity}
\begin{proof}
Fix an arbitrary controller $K_1$ with a closed-loop system matrix $F_1\coloneqq A-BK_1\neq 0$. We first consider the case when $F_1$ is not a diagonal matrix, i.e.,  $A-BK_1$ has at least one non-zero off-diagonal entry. Consider the following closed-loop system matrix for the second controller $K_2$:
\begin{align}
\label{eq:construction_of_F}
F_2\coloneqq
    \left(
    \begin{array}{ccccc}
    \beta  &  & \cdots  \\
      & \beta       &   & \text{\Large  0} \\
      &               & \ddots                \\
      & -\frac{1-\lambda}{\lambda}\overline{L} &   & \beta           \\
      &               &   &   & \beta
    \end{array}
    \right) + S\left(F_1\right)
\end{align} 
where $0<\beta<1$ is a value of the diagonal entry; $\overline{L}$ is the lower triangular part of the closed-loop system matrix  $F_1$. The matrix $S(F_1)$ is a singleton matrix that depends on $F_1$, whose only non-zero entry $S_{i+k,i}$ corresponding to the first non-zero off-diagonal entry $(i,i+k)$ in the upper triangular part of $F_1$ searched according to the order $i=1,\ldots,\nn$ and $k=1,\ldots,\nn$ with $i$ increases first and then $k$. Such a non-zero entry always exists because in this case $F_1$ is not a diagonal matrix. If the non-zero off-diagonal entry appears to be in the lower triangular part, we can simply transpose $F_2$ so without loss of generality we assume it is in the upper triangular part of $F_1$.
Since $F_1$ is a lower triangular matrix, all of its eigenvalues equal to $0<\beta<1$, implying that the linear controller $K_2$ is stabilizing.
Then, based on the construction of $F_1$ in~\eqref{eq:construction_of_F}, the linearly combined controller $K\coloneqq\lambda K_2 + (1-\lambda) K_1$ has a closed-loop system matrix $F$ which is upper-triangular, whose determinant satisfies
\begin{align}
\nonumber
    \det(F)=&\det\left(\lambda F_2 + (1-\lambda) F_1\right) \\
    \label{eq:det_1}
    = & (-1)^{2i+k} \det(F')\\
    \label{eq:det_2}
    =& (-1)^{2i+k}\times (-1)^{k+1}(1-\lambda)\lambda^{\nn-1} S_{i+k,i}(F_1)_{i,i+k}\beta^{\nn-2}\\
    \label{eq:det_3}
    =&- S_{i+k,i}(1-\lambda)\lambda^{\nn-1}\beta^{\nn-2}
\end{align}
where the term $(-1)^{2i+k}$ in~\eqref{eq:det_1} comes from the computation of the determinant of $F$ and $F'$ is a sub-matrix of $F$ by eliminating the $i+k$-th row and $i$-th column. The term $(-1)^{k+1}$ in~\eqref{eq:det_2}  appears because $F'$ is a permutation of an upper triangular matrix with $\nn-2$ diagonal entries being $\beta$'s and one remaining entry being $(F_1)_{i,i+k}$  since the entries $S_{j,j+k}$ are zeros all $j<i$; otherwise another non-zero entry $S_{i+k,i}$ would be chosen according to our search order.

Continuing from~\eqref{eq:det_3}, since $S_{i+k,i}$ can be selected arbitrarily, setting $S_{i+k,i}=\frac{{-2^{\nn}}{\beta(\lambda\beta)^{-\nn+1}}}{1-\lambda}$ gives
\begin{align*}
  2^{\nn} \leq \det(F) \leq |\rho(F)|^{\nn},
\end{align*}
implying that the spectral radius $\rho(F)\geq 2$. Therefore $K=\lambda K_2 + (1-\lambda) K_1$ is an unstable controller. It remains to prove the theorem in the case when $F_1$ is a diagonal matrix. In the following lemma, we  consider the case when $\nn=2$, and extend  it to the general case.

\begin{lemma}
\label{lemma:2dcase}
For any $\lambda\in(0,1)$ and any diagonal matrix $F_1\in \mathbb{R}^{2\times2}$ with a spectral radius $\rho(F_1)<1$ $F_1\neq \gamma I$ for any $\gamma\in \mathbb{R}$, there exists a matrix $F_2\in \mathbb{R}^{2\times2}$ such that $\rho(F_2)<1$ and $\rho(\lambda F_1+(1-\lambda)F_2)>1$.
\end{lemma}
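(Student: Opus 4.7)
The plan is to exploit a feature specific to $2 \times 2$ matrices: the spectral radius is determined entirely by the trace and the determinant, so individual entries of $F_2$ can be made arbitrarily large while $\rho(F_2)$ is held fixed. I would construct a one-parameter family of traceless matrices $F_2$ whose eigenvalues are pinned at $\pm i\beta$ for some $\beta \in (0,1)$, and then show that the characteristic polynomial of $\lambda F_1 + (1-\lambda) F_2$ acquires a real eigenvalue that diverges as the parameter grows.

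Without loss of generality I would assume $F_1 = \operatorname{diag}(a, b)$ with $a > b$: the hypothesis $F_1 \neq \gamma I$ forces $a \neq b$, and the case $a < b$ follows by conjugating with the permutation that swaps the two coordinates. For a fixed $\beta \in (0, 1)$ and a parameter $p > 0$ to be determined, I would set $F_2 \coloneqq \begin{pmatrix} p & p^2 + \beta^2 \\ -1 & -p \end{pmatrix}$. A direct check gives $\operatorname{tr}(F_2) = 0$ and $\det(F_2) = -p^2 + (p^2 + \beta^2) = \beta^2$, so the eigenvalues of $F_2$ are $\pm i\beta$ and $\rho(F_2) = \beta < 1$ for every $p$.

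For the combined matrix $G \coloneqq \lambda F_1 + (1-\lambda) F_2$, expanding carefully shows $\operatorname{tr}(G) = \lambda(a+b)$ and, after the $(1-\lambda)^2 p^2$ contributions cancel, $\det(G) = \lambda^2 ab - \lambda(1-\lambda) p (a - b) + (1-\lambda)^2 \beta^2$. The discriminant of the characteristic polynomial of $G$ then reduces to $\Delta = \lambda^2(a-b)^2 + 4\lambda(1-\lambda) p (a - b) - 4(1-\lambda)^2 \beta^2$, which is linear and unbounded in $p$ because $a - b > 0$. Consequently the larger real eigenvalue $\mu_+ = (\lambda(a+b) + \sqrt{\Delta})/2$ of $G$ tends to $+\infty$, and selecting any $p$ past the explicit threshold obtained from $\mu_+ > 1$ forces $\rho(G) > 1$.

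The main obstacle I anticipate is conceptual rather than computational: one has to believe that $F_2$ can have enormous off-diagonal entries while remaining strictly stable. This is precisely where the diagonal case in Theorem~\ref{thm:negativity} departs from the non-diagonal case, since the lower-triangular cancellation trick used there is unavailable; the trace-zero, determinant-$\beta^2$ normalization provides the substitute by pinning the spectrum of $F_2$ regardless of how large $p$ becomes. The hypothesis $a \neq b$ is likewise essential, since if $a = b$ the $p$-linear term in $\Delta$ vanishes, the eigenvalues of $G$ become complex with $|\mu|^2 = \det(G) \leq \lambda^2 a^2 + (1-\lambda)^2 \beta^2 < 1$, and $\rho(G)$ cannot exceed one.
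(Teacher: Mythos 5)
Your proof is correct and takes essentially the same approach as the paper's proof of Lemma~\ref{lemma:2dcase}: construct an explicit $2\times 2$ matrix $F_2$ whose spectrum is pinned strictly inside the unit disk while its off-diagonal entries grow without bound, then verify by direct computation that the convex combination acquires a large positive real eigenvalue through the diagonal gap $a-b$. The differences are cosmetic---the paper uses the single nilpotent rank-one matrix $\frac{4}{\lambda(1-\lambda)(a-b)}\begin{pmatrix}1&1\\-1&-1\end{pmatrix}$ with a one-shot scaling instead of your traceless $\pm i\beta$ family with $p\to\infty$---and if anything your trace--determinant computation is marginally cleaner, since it only lower-bounds $\rho(G)$ by the larger real root and needs only $a>b$, whereas the paper additionally normalizes $a>0$ without comment.
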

\begin{proof}[Proof of Lemma~\ref{lemma:2dcase}]
Suppose $\nn=2$ and 
\begin{align*}
    F_1=\begin{bmatrix}
    a& 0\\
    0& b
    \end{bmatrix}
\end{align*}
is a diagonal matrix and without loss of generality assume that $a>0$ and $a>b$. Let 
\begin{align}
\label{eq:2by2F}
    F_2=\frac{4}{\lambda(1-\lambda)(a-b)}\begin{bmatrix}
    1 & 1\\
    -1 & -1
    \end{bmatrix}.
\end{align}
Notice that $\rho(F_2)=0<1$ satisfies the constraint. Then
\begin{align}
\nonumber
   \rho\left(\lambda F_1+(1-\lambda)F_2\right)=&\rho(\lambda b I +\lambda \mathrm{Diag}(a-b,0)+(1-\lambda)F_2)\\
   \label{eq:equality_rho}
    =&\lambda b + \rho(\lambda \mathrm{Diag}(a-b,0)+(1-\lambda)F_2)
\end{align}
where we have used the assumption that $a>0$ and $a>b$ to derive the equality in~\eqref{eq:equality_rho} and the notion $\mathrm{Diag}(a-b,0)$ denotes a diagonal matrix whose diagonal entries are $a-b$ and $0$ respectively.
The eigenvalues of $\lambda \mathrm{Diag}(a-b,0)+(1-\lambda)F_2$ are 
\begin{align*}
    \frac{\lambda(a-b)\pm\sqrt{(\lambda(a-b)+\frac{8}{\lambda(a-b)})^2-4(\frac{4}{\lambda(a-b)})^2}}{2}=\frac{\lambda(a-b)\pm\sqrt{(\lambda(a-b))^2+16}}{2}.
\end{align*}
Since $a-b>0$, the spectral radius of $\lambda F_1+(1-\lambda)F_2$ satisfies
\begin{align*}
     \rho\left(\lambda F_1+(1-\lambda)F_2\right)=\lambda b + \frac{\lambda(a-b)+\sqrt{(\lambda(a-b))^2+16}}{2}>-1+\frac{\sqrt{16}}{2}=1.
\end{align*}
\end{proof}

Applying Lemma~\ref{lemma:2dcase}, when $F_1$ is an $\nn\times\nn$ matrix with $\nn>2$, we can always create an $\nn\times\nn$ matrix $F_2$ whose first two columns and rows form a sub-matrix that is the same as~\eqref{eq:2by2F} and the remaining entries are zeros. Therefore the spectral radius of the convex combination $\lambda F_1 + (1-\lambda)F_2$ is greater than one. This completes the proof.
\end{proof}

\section{Stability Analysis}
\label{app:proof_stability}

\subsection{Proof outline}

\begin{figure}[h]
    \centering
\includegraphics[scale=0.23]{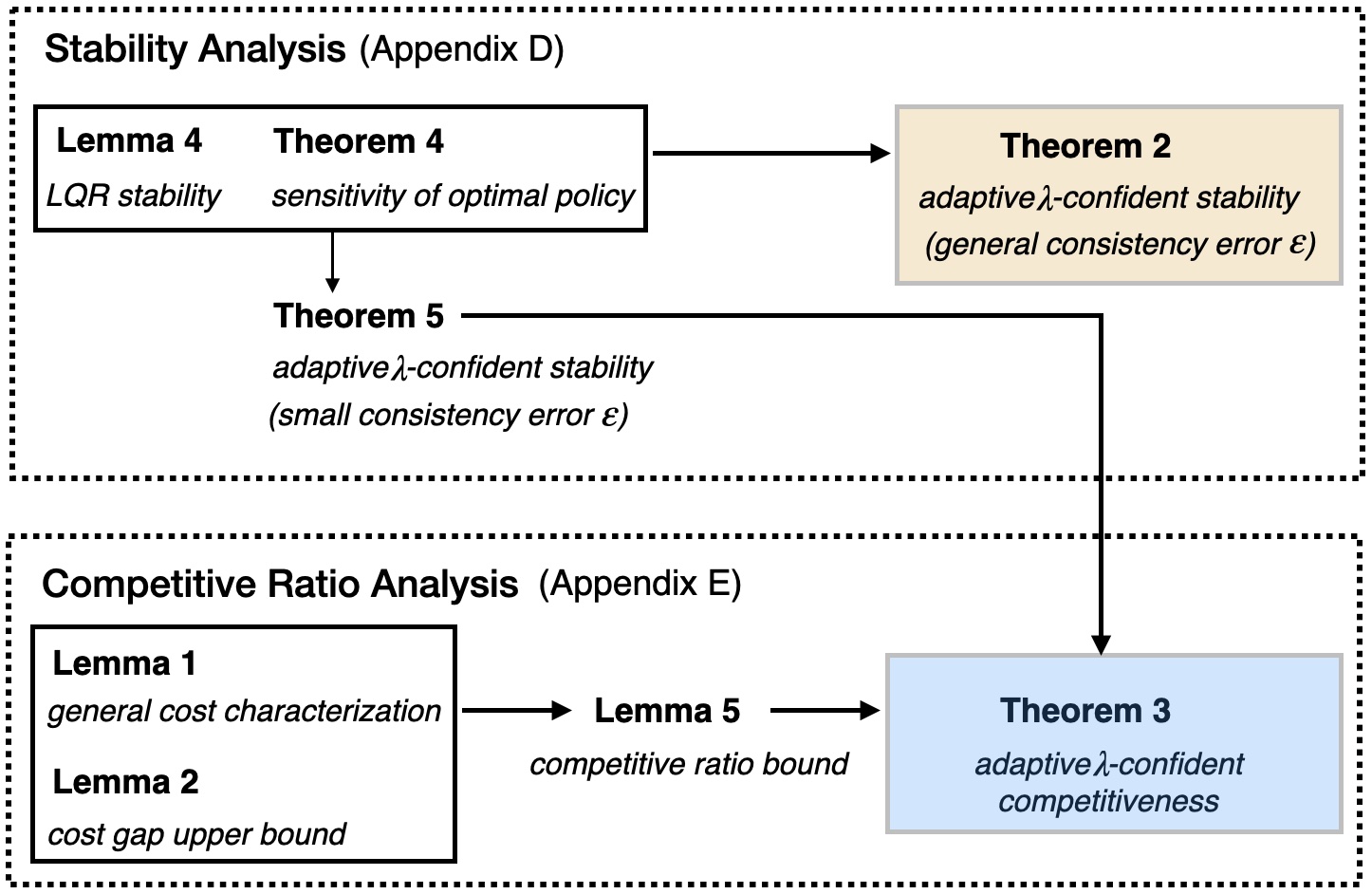}
\caption{Outline of proofs of Theorem~\ref{thm:stability} and~\ref{thm:competitive} with a stability analysis presented in Appendix~\ref{app:proof_stability} and a competitive ratio analysis presented in Appendix~\ref{app:proof_competitiveness}. Arrows denote implications.}
\label{fig:proof_outline}
\end{figure}

In the sequel, we present the proofs of Theorem~\ref{thm:stability} and~\ref{thm:competitive}. An outline of the proof structure is provided in Figure~\ref{fig:proof_outline}.
The proof of our main results contains two parts -- the \textit{stability analysis}  and \textit{competitive ratio analysis}. First, we prove that Algorithm~\ref{alg:adaptive_policy} guarantees a stabilizing policy, regardless of the prediction error $\varepsilon$ (Theorem~\ref{thm:stability}). Second, in our competitive ratio analysis, we provide a competitive ratio bound in terms of $\varepsilon$ and $\lambda$. We show in Lemma~\ref{lemma:competitive} that the competitive ratio bound is bounded if the adaptive policy is exponentially stabilizing and has a decay ratio that scales up with $C_{\ell}$, which holds assuming the prediction error $\varepsilon$ for a black-box model-free policy is small enough, as shown in Theorem~\ref{thm:blackbox_stability}. Theorem~\ref{thm:blackbox_stability} is proven based on a sensitivity analysis of an optimal policy $\pi^*$ in Theorem~\ref{lemma:model_based_exponential_stability}.

We first analyze the model-based policy $\widehat{\pi}(x)=-Kx$ where $K \coloneqq (R+B^\top P B)^{-1} B^\top P A$ and $P$ is a unique solution of the Riccati equation in~\eqref{eq:dare}.
\begin{lemma}
\label{lemma:model_based_exponential_stability}
Suppose the Lipschitz constant $C_{\ell}$, $K$ and the closed-loop matrix $F\coloneqq A-BK$ satisfy $\rho+C_F C_{\ell}(1+\|K\|)<1$ where $\left\| F^t\right\|\leq C_F\rho^t$ for any $t\geq 0$. Then the model-based policy $\widehat{\pi}(x)=-Kx$  exponentially stabilizes the system such that $    \|x_t\|\leq C_F  \left(\rho+{C_F\overline{C}}\right)^{t}\|x_0\|$ for any $t\geq 0$.
\end{lemma}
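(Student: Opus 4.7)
The plan is to show, by an unrolling argument followed by a direct induction, that the closed-loop trajectory under $u_t=-Kx_t$ grows no faster than $\gamma^t$ with $\gamma\coloneqq\rho+C_FC_\ell(1+\|K\|)<1$. First, I would close the loop: substituting $u_t=-Kx_t$ into \eqref{eq:nonlinear_dynamic} yields $x_{t+1}=Fx_t+g_t(x_t)$ where $g_t(x)\coloneqq f_t(x,-Kx)$ and $F=A-BK$. Since Assumption~\ref{assumption:continuity} gives $f_t(\mathbf 0)=0$ and Lipschitz continuity with constant $C_\ell$ on the joint argument, one obtains $\|g_t(x)\|\le C_\ell(1+\|K\|)\|x\|$. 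Iterating the recursion produces the variation-of-constants formula
\begin{equation*}
x_t=F^tx_0+\sum_{s=0}^{t-1}F^{t-1-s}g_s(x_s),
\end{equation*}
and taking norms together with $\|F^k\|\le C_F\rho^k$ gives
\begin{equation*}
\|x_t\|\le C_F\rho^t\|x_0\|+C_FC_\ell(1+\|K\|)\sum_{s=0}^{t-1}\rho^{t-1-s}\|x_s\|.
\end{equation*}

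Next I would carry out an induction on $t$ to convert this implicit estimate into the explicit exponential bound $\|x_t\|\le C_F\gamma^t\|x_0\|$. The base case $t=0$ follows from $C_F\ge\|F^0\|=1$. For the inductive step, substituting $\|x_s\|\le C_F\gamma^s\|x_0\|$ into the sum above and evaluating the geometric series $\sum_{s=0}^{t-1}\rho^{t-1-s}\gamma^s=(\gamma^t-\rho^t)/(\gamma-\rho)$, with $\gamma-\rho=C_FC_\ell(1+\|K\|)$, collapses the two terms exactly into $C_F\gamma^t\|x_0\|$. Because the hypothesis $\rho+C_FC_\ell(1+\|K\|)<1$ forces $\gamma\in(0,1)$, this delivers exponential stability with the claimed rate; identifying $\overline C\coloneqq C_\ell(1+\|K\|)$ recovers the statement verbatim.

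I do not anticipate a serious obstacle here: the Lipschitz bound on $g_t$ is a one-line consequence of Assumption~\ref{assumption:continuity}, and the discrete Gronwall-type inequality admits the clean closed form above because the geometric sum telescopes against $\gamma-\rho$. The only delicate point is handling the Lipschitz constant of the action-augmented map $x\mapsto(x,-Kx)$; I would treat it with the crude estimate $\|(x,-Kx)\|\le(1+\|K\|)\|x\|$ so that the constant matches the one appearing in $\gamma$ in Theorem~\ref{thm:stability}. No additional structural assumptions beyond stabilizability of $(A,B)$ (which provides $C_F$ and $\rho$ via Gelfand's formula, as already noted in Section~\ref{sec:problem}) are needed.
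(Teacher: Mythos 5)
Your proposal is correct and follows essentially the same route as the paper's proof: close the loop to get $x_{t+1}=Fx_t+f_t(x_t,-Kx_t)$, unroll to the variation-of-constants formula, apply $\|F^k\|\leq C_F\rho^k$ together with the Lipschitz estimate $\|f_\tau(x_\tau,-Kx_\tau)\|\leq C_\ell(1+\|K\|)\|x_\tau\|$, and resolve the resulting convolution inequality. The only cosmetic difference is in that last step: the paper introduces the auxiliary quantity $S_t$ and bounds it via the multiplicative recursion $S_t\leq\left(1+\frac{C_F\overline{C}}{\rho}\right)S_{t-1}$, whereas you run a strong induction with the exact geometric sum $\sum_{s=0}^{t-1}\rho^{t-1-s}\gamma^s=(\gamma^t-\rho^t)/(\gamma-\rho)$, which telescopes against $\gamma-\rho=C_F\overline{C}$ and closes exactly to the same bound $C_F\left(\rho+C_F\overline{C}\right)^t\|x_0\|$ (with the degenerate case $C_\ell=0$, where $\gamma=\rho$, being trivial since the residual term then vanishes).
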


\begin{proof}
Let $u_t=\widehat{\pi}(x_t)=-Kx_t$ for all $t\geq 0$ and  let $F\coloneqq A-BK$. It follows that
\begin{align}
\label{eq:stability_proof_dynamic}
    x_{t+1}=Ax_t+Bu_t+f_t(x_t,u_t)=(A-BK)x_t+f_t(x_t,-Kx_t)=Fx_t+f_t(x_t,-Kx_t).
\end{align}
Rewriting~\eqref{eq:stability_proof_dynamic} recursively, for any $t\geq 0$,
\begin{align}
\label{eq:stability_proof_1}
     x_t=F^{t}x_0+\sum_{\tau=0}^{t-1}F^{t-1-\tau} f_{\tau}(x_\tau,-Kx_\tau).
\end{align}

Since $(f_t:t\geq 0)$ are Lipschitz continuous with a constant $C_{\ell}$ (Assumption~\ref{assumption:continuity}), we have
\begin{align}
\nonumber
    \|x_t\|&\leq\left\|F^{t}x_0\right\|+\left\|\sum_{\tau=0}^{t-1}F^{t-1-\tau} f_{\tau}(x_\tau,-Kx_\tau)\right\|\\
    \nonumber
    &\leq\left\|F^{t}x_0\right\|+\sum_{\tau=0}^{t-1}\left\|F^{t-1-\tau} f_{\tau}(x_\tau,-Kx_\tau)\right\|\\
    \label{eq:stability_proof_2}
    &\leq C_F\rho^t\underbrace{\left(\left\|x_0\right\|+C_{\ell} (1+\|K\|)\sum_{\tau=0}^{t-1}\rho^{-1-\tau}\|x_{\tau}\|\right)}_{:=S_t}
\end{align}
where $C_F>1$ is a constant such that $\left\| F^t\right\|\leq C_F\rho^t$ for any $t\geq 0$. Denote by $\overline{C}\coloneqq C_{\ell} (1+\|K\|)$. Then, using~\eqref{eq:stability_proof_2},
\begin{align*}
S_t&=  S_{t-1}+\overline{C}\rho^{-t} \|x_{t-1}\|\leq  S_{t-1}+\frac{C_F\overline{C}}{\rho} S_{t-1} = \left(1+\frac{C_F\overline{C}}{\rho}\right)S_{t-1}.
\end{align*} 
Therefore, noting that  $S_1=\left(1+{\overline{C}}/{\rho}\right)\|x_0\|$, recursively we obtain $$S_t\leq \left(1+\frac{C_F\overline{C}}{\rho}\right)^{t-1}\left(1+\frac{\overline{C}}{\rho}\right)\|x_0\|,$$ which implies
\begin{align*}
\|x_t\|\leq C_F \rho^t S_t \leq & C_F \rho^t \left(1+\frac{C_F\overline{C}}{\rho}\right)^{t-1}\left(1+\frac{\overline{C}}{\rho}\right)\|x_0\|\\
= &C_F  \left(\rho+{C_F\overline{C}}\right)^{t-1}\left(\rho+{\overline{C}}\right)\|x_0\|\\
\leq &C_F  \left(\rho+{C_F\overline{C}}\right)^{t}\|x_0\|.
\end{align*}
\end{proof}

Next, based on Lemma~\ref{lemma:model_based_exponential_stability}, we consider the stability of the convex-combined policy $\pi=\lambda\widehat{\pi}+(1-\lambda)\overline{\pi}$ where $\overline{\pi}$ is a model-free policy satisfying the $\varepsilon$-\textit{consistency} in Definition~\ref{def:epsilon_consistency} and $\overline{\pi}$ is a model-based policy. 

\begin{theorem}
\label{thm:optimal_modelbased_difference}
Let $\pi^*$ be an optimal policy and $\overline{\pi}(x)=-Kx$ be a linear model-based policy. It follows that for any $t\geq 0$, $\|\pi_t^*(x)-\overline{\pi}(x)\|\leq C^{\mathsf{sys}}_{a} C_{\ell}\|x\|$ for some constant $C^{\mathsf{sys}}_{a}>0$ where $C_{\ell}$ is the Lipschitz constant  defined in Assumption~\ref{assumption:continuity} and
\begin{align}
\nonumber
C^{\mathsf{sys}}_{a}\coloneqq & 2\|R+B^\top PB\|^{-1}\Big(\|PF\|+(1+\|K\|)\left(\|PB\|+\|P\|\right)\\
  \nonumber
 & \quad +\frac{C^{\mathsf{sys}}_{b}}{2}\|B+ I\|(1+\|F\|+\|K\|)\Big),\\
  \nonumber
C^{\mathsf{sys}}_{b}\coloneqq & \frac{2{{C_F}^2}\|P\|(\rho+\overline{C})\left(\rho+(1+\|K\|)\right)}{1-(\rho+\overline{C})^2}\sqrt{\frac{\|Q+K^\top RK\|}{\sigma}}.
\end{align}
\end{theorem}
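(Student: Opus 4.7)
The plan is to derive a closed-form expression for $\pi_t^*(x)$ by combining Bellman's principle of optimality with the auxiliary linear problem from~\eqref{eq:auxiliary}, and then bound the difference $\pi_t^*(x) - \overline{\pi}(x) = \pi_t^*(x) + Kx$ term by term. Fixing a state $x$ at time $t$, I would run the optimal nonlinear policy forward to obtain a trajectory $(\tilde{x}_\tau,\tilde{u}_\tau)_{\tau \ge t}$ together with realized residuals $\tilde{f}_\tau := f_\tau(\tilde{x}_\tau,\tilde{u}_\tau)$. A key observation is that this trajectory is \emph{also} optimal for the auxiliary linear system driven by the fixed perturbation sequence $(\tilde{f}_\tau)$: the two problems share identical stage costs and identical state updates along this path, so Bellman's principle forces their optimal actions at $(t,x)$ to coincide. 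This yields the identity
\begin{align*}
\pi_t^*(x) \;=\; -Kx \;-\; H^{-1}B^\top \sum_{\tau=t}^{\infty}(F^\top)^{\tau-t} P \,\tilde{f}_\tau \;+\; \mathsf{(nonlinear \ correction)},
\end{align*}
where the correction arises because $\tilde{f}_\tau$ depends on $\tilde{u}_\tau$ through a Jacobian of order $C_\ell$; this correction produces the factor $\|B+I\|(1+\|F\|+\|K\|)$ appearing in $C_{a}^{\mathsf{sys}}$.

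Second, I would bound the trajectory norms by comparing the optimal value with the cost of the suboptimal policy $\overline{\pi}(y) = -Ky$. Under the upper bound on $C_\ell$ stated in Theorem~\ref{thm:stability}, Lemma~\ref{lemma:model_based_exponential_stability} applied from initial state $x$ at time $t$ gives geometric decay with rate $\rho + \overline{C} < 1$, so the cost accumulated by $\overline{\pi}$ starting from $x$ is at most $\tfrac{C_F^2\|Q+K^\top R K\|}{1-(\rho+\overline{C})^2}\|x\|^2$. By optimality this upper-bounds the optimal value $V_t^*(x)$, and the lower bound $V_t^*(x) \ge \sigma \sum_{\tau\ge t}(\|\tilde{x}_\tau\|^2 + \|\tilde{u}_\tau\|^2)$ coming from $Q,R\succeq \sigma I$ yields
\begin{align*}
\sum_{\tau\ge t}(\|\tilde{x}_\tau\|^2 + \|\tilde{u}_\tau\|^2) \;\le\; \frac{C_F^2\|Q+K^\top R K\|}{\sigma(1-(\rho+\overline{C})^2)}\|x\|^2.
\end{align*}
Combining this with $\|\tilde{f}_\tau\| \le C_\ell(\|\tilde{x}_\tau\|+\|\tilde{u}_\tau\|)$ from Assumption~\ref{assumption:continuity} and $\|F^k\|\le C_F\rho^k$, a Cauchy--Schwarz step produces $\sum_{\tau\ge t}\rho^{\tau-t}(\|\tilde{x}_\tau\|+\|\tilde{u}_\tau\|) \lesssim C_{b}^{\mathsf{sys}}\|x\|$, explaining both the $\sqrt{\|Q+K^\top RK\|/\sigma}$ factor and the $(1-(\rho+\overline{C})^2)^{-1}$ factor in $C_{b}^{\mathsf{sys}}$.

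Finally, assembling the three pieces---the closed-form identity, the Lipschitz bound on $\tilde{f}_\tau$, and the trajectory bound---and multiplying by the operator norms $\|H^{-1}\|$, $\|B\|$, $\|P\|$ recovers the bound $\|\pi_t^*(x)-\overline{\pi}(x)\|\le C_{a}^{\mathsf{sys}} C_\ell \|x\|$ with $C_{a}^{\mathsf{sys}}$ as defined. The terms $\|PF\|$ and $(1+\|K\|)(\|PB\|+\|P\|)$ in $C_{a}^{\mathsf{sys}}$ come from the direct contribution of the $(F^\top)^{\tau-t}P\tilde{f}_\tau$ sum after a zeroth-order separation $\tilde{f}_\tau \approx f_\tau(\tilde{x}_\tau,-K\tilde{x}_\tau)$, while the $C_{b}^{\mathsf{sys}}/2$ term absorbs the residual sum generated by the nonlinear correction.

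The main obstacle I anticipate is the rigorous treatment of the nonlinear correction in the Bellman first-order condition. Formally, one must differentiate $V_{t+1}^*(Ax+Bu+f_t(x,u))$ with respect to $u$, and the extra Jacobian $\nabla_u f_t$ couples the minimizer $\pi_t^*(x)$ to the infinite tail of future residuals in a self-referential way. Showing that this coupling does not amplify under the infinite-horizon summation---i.e.\ that the fixed-point equation for $\pi_t^*(x)$ has a unique solution controlled by $C_\ell\|x\|$---is the delicate step, and it is precisely where the smallness hypotheses on $C_\ell$ encoded in $C_{a}^{\mathsf{sys}}$, $C_{b}^{\mathsf{sys}}$ become essential.
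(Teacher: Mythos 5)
There is a genuine gap at the foundation of your first step. Your claimed ``key observation''---that the optimal nonlinear trajectory is also optimal for the auxiliary linear system driven by the frozen residuals $(\tilde f_\tau)$, so that Bellman's principle ``forces their optimal actions at $(t,x)$ to coincide''---is false. Freezing the perturbations changes the problem in an essential way: in the nonlinear problem the choice of $u$ influences all future residuals $f_\tau(x,u)$, while in the frozen problem it does not, so the two stationarity conditions differ precisely by the Jacobian terms $\nabla f$. What is true, and is exactly how Lemma~\ref{lemma:gap} uses the auxiliary system, is only an inequality: the nonlinear-optimal trajectory is \emph{feasible} for the frozen linear problem and achieves cost $\mathsf{OPT}$ there, hence $\mathsf{OPT}'\le\mathsf{OPT}$; optimality does not transfer. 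Your text is internally inconsistent on this point---if the actions coincided there would be no ``nonlinear correction''---and since that correction is the entire content of the theorem (the quantity $\pi_t^*(x)+Kx$ being bounded is exactly this deviation, up to the explicit $H^{-1}B^\top\sum_\tau (F^\top)^{\tau-t}P\tilde f_\tau$ term, which is itself only $O(C_\ell\|x\|)$ after the trajectory bound), your ``identity'' is not a starting point but a restatement of what must be proved.

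The step you defer is, in fact, the paper's proof. The paper does not route through the auxiliary policy for this theorem at all: it writes $V_t(x)=x^\top Px+g_t(x)$, substitutes $u=-Kx+v$ into the Bellman equation, and extracts the first-order condition for the minimizer $v_t^*$; the crux is then the sensitivity bound $\|\nabla g_t(x)\|\le C_\nabla C_\ell\|x\|$, obtained by recursively unrolling $\nabla g_t$ along the optimal trajectory, bounding the matrix products $\prod_{k}\bigl(F+\nabla f_k(\cdot)[I,-K]\bigr)$ by $C_F(\rho+\overline{C})^{\tau+1-t}$ via a binomial expansion, and applying Cauchy--Schwarz against the trajectory energy bound $\sum_\tau\bigl(\|x_t^{(\tau)}\|^2+\|u_t^{(\tau)}\|^2\bigr)\le \frac{C_F^2\|Q+K^\top RK\|}{\sigma(1-(\rho+\overline{C})^2)}\|x\|^2$. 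This energy bound is the one part of your plan that does match the paper: your second step is essentially the derivation around~\eqref{eq:value_bound}, comparing the optimal value against the cost of $\overline{\pi}$ via Lemma~\ref{lemma:model_based_exponential_stability} and using $Q,R\succeq\sigma I$. With $\nabla g_t$ controlled, the first-order condition is rearranged so that the $\|v_t^*\|$ terms on the right-hand side are absorbed using the smallness of $C_\ell$, yielding $\|v_t^*\|\le C^{\mathsf{sys}}_{a}C_\ell\|x\|$ with $C^{\mathsf{sys}}_{b}$ entering through $C_\nabla$. You correctly identify this self-referential coupling as the delicate step, but naming it is where your proposal stops: you supply no mechanism for bounding the value-function perturbation gradient, and without it neither $C^{\mathsf{sys}}_{b}$ nor the fixed-point absorption can be established.
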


\begin{proof}
We use $\pi^*_t(x)=-Kx+h_t(x)$ to characterize an optimal policy at each time $t\geq 0$. Consider the following Bellman optimality equation:
\begin{align}
   V_t(x) =  \min_u \{ x^\top Q x + u_t^\top R u_t +  V_{t+1}(Ax + Bu + f_t(x,u))\}
\end{align}
where $V_t:\mathbb{R}^{\nn}\rightarrow\mathbb{R}_+$ denotes the optimal value function.
Using lemma \ref{lemma:model_based_exponential_stability}, \begin{align}
\nonumber
    V_t(x)\le& x^\top (Q+K^\top R K)x + V_{t+1}(F^\top x+f_t(x,u))\\
    \nonumber
    \le& V_{\infty}(0)+ \|Q+K^\top RK\|\sum_{t=0}^\infty {{C_F}^2}(\rho+C_F\overline{C})^{2t}\|x\|^2\\
    \label{eq:value_bound}
    =&V_{\infty}(0)+\frac{{{C_F}^2}\|Q+K^\top RK\|}{1-(\rho+C_F\overline{C})^2}\|x\|^2.
\end{align}
Let $x_{t}^{(\tau)}$ be the $(t+\tau)$-th state when applying optimal control with $x_t=x$ and write $u_t^{(\tau)}=\pi_{t+\tau}^*(x_t^{(\tau)})$ for all $t\geq 0$. Rearranging the terms in~\eqref{eq:value_bound}, it follows that
\begin{align*}
\frac{{{C_F}^2}\|Q+K^\top RK\|}{1-(\rho+C_F\overline{C})^2}\|x\|^2\geq & V_t(x)-V_{(\infty)}(0)
 = \sum_{\tau=t}^\infty \left(x_t^{(\tau-t)}\right)^{\top} Q x_t^{(\tau-t)}+\left(u_t^{(\tau-t)}\right)^\top Ru_t^{(\tau-t)} \\
    \geq&\lambda_{\min}(Q)\sum_{\tau=0}^\infty\|x_t^{(\tau)}\|^2+\lambda_{\min}(R)\sum_{\tau=0}^\infty \|u_t^{(\tau)}\|^2.
\end{align*}
Write $V_{t}(x)=x^\top P x+g_t(x)$ with $P$ denoting the solution of the Riccatti equation in~\eqref{eq:dare}. Then
\begin{align*}
    & x^\top P x + g_t(x) \\
    &= \min_{u}\Big[ x^\top Q x + u^\top R u + (Ax+Bu)^\top P(Ax+Bu)\\
    & \quad +2(Ax+Bu)^\top P f_t(x,u) + f_t(x,u)^\top P f_t(x,u) +  g_{t+1}(Ax+Bu+f_t(x,u)) \Big]\\
    &= \min_{u}\Big[ x^\top (Q + A^\top PA) x + u^\top (R+B^\top P B) u + 2 u^\top B^\top PAx  +2(Ax+Bu)^\top P f_t(x,u)\\
    &\quad + f_t(x,u)^\top P f_t2(x,u)+ g_{t+1}(Ax+Bu+f_t(x,u)) \Big]\\
    &= \min_{u}\Big[ x^\top (Q + A^\top PA - A^\top PB (R+B^\top PB)^{-1}B^\top PA ) x \\
    &\quad + (u+ (R+B^\top P B))^{-1}B^\top PAx )^\top (R+B^\top P B) (u+(R+B^\top PB)^{-1}B^\top PAx) \\
    &\quad+2(Ax+Bu)^\top P f_t(x,u) + f_t(x,u)^\top P f_t(x,u)+ g_{t+1}(Ax+Bu+f_t(x,u)) \Big].
\end{align*}
Since $P$ is the solution of the DARE in~\eqref{eq:dare}, letting $u=-Kx+v$ and $F\coloneqq A-BK$,
\begin{align*}
    g_t(x) &= \min_{v} \Big[  v^\top (R+ B^\top PB)v + 2  x^\top F^{\top} P f_t(x,-Kx+v) + 2 v^\top  B^\top P f_t(x,-Kx+v)  \\
    &\quad + f_t(x,-Kx+v)^\top P f_t(x,-Kx+v) + g_{t+1}(F x+ Bv+ f_t(x,-Kx+v)) \Big].
\end{align*}
Denoting by $v_t^*$ an optimal solution,
\begin{align*}
     g_t(x)&= (v_t^*)^\top (R+ B^\top PB)v_t^* + 2  x^\top F^\top P f_t(x,-Kx+v_t^*) + 2 (v_t^*)^\top  B^\top P f_t(x,-Kx+v_t^*)  \\
     &\quad + f_t(x,-Kx+v_t^*)^\top P f_t(x,-Kx+v_t^*) + g_{t+1}\left(F x+ Bv_t^*+ f_t(x, -Kx+v_t^*)\right).
\end{align*}
Denote by $\nabla g_t$ the Jacobian of $g_t$. We obtain \begin{align*}
    \nabla g_t(x)&= 2(R+B^\top PB)v_t^*\nabla v_t^*\\
    &\quad +2F^{\top}Pf_t(x,-Kx+v_t^*)\\
    &\quad +2x^\top F^{\top}P\nabla f_t(x,-Kx+v_t^*)[I,-K+\nabla v_t^*]\\
    &\quad +2\nabla (v_t^*)^\top B^\top Pf_t(x,-Kx+v_t^*)\\ &\quad +2(v_t^*)^\top B^\top P\nabla f_t(x,-Kx+v_t^*)[I,-K+\nabla v_t^*]\\
    &\quad +2Pf_t(x,-Kx+v_t^*)\nabla f_t(x,-Kx+v_t^*)[I,-K+\nabla v_t^*]\\
    &\quad +\nabla g_{t+1}(Fx+Bv_t^*+f_t(x,-Kx+v_t^*))(F+B\nabla v_t^*+\nabla f_t(x,-Kx+v_t^*)[I,-K+\nabla v_t^*])
\end{align*}
Noting that $v_t^*$ is a minimizer, the Jacobian of $g_t$ with respect to $v$ takes zero at $v=v_t^*$: \begin{equation}
\label{eq:gradient_of_v}
    \begin{split}
    &2(R+B^\top PB)v+2x^\top F^{\top}P\nabla f_t(x,-Kx+v)[0,I]+2B^\top Pf_t(x,-Kx+v)\\
    &+2v^\top B^\top P\nabla f_t(x,-Kx+v)[0,I]+2Pf_t(x,-Kx+v)\nabla f_t(x,-Kx+v)[0,I]\\
    &+(B+\nabla f_t(x,-Kx+v)[0,I])^\top\nabla g_{t+1}(F x+Bv+f_t(x,-Kx+v))|_{v=v_t^*}=0
    \end{split}
\end{equation}
Substituting above into the Jacobian of $g_t$, we get \begin{align*}
    \nabla g_t(x)&=2F^{\top}Pf_t(x,-Kx+v_t^*)\\
    &\quad +2x^\top F^{\top}P\nabla f_t(x,-Kx+v_t^*)[I,-K]\\
    & \quad +2(v_t^*)^\top B^\top P\nabla f_t(x,-Kx+v_t^*)[I,-K]\\
    & \quad +2Pf_t(x,-Kx+v_t^*)\nabla f_t(x,-Kx+v_t^*)[I,-K]\\
    &\quad +\nabla g_{t+1}(F x+Bv_t^*+f_t(x,-Kx+v_t^*))(F+\nabla f_t(x,-Kx+v_t^*)[I,-K])\\
    &= 2F^{\top}Pf_t(x,-Kx+v_t^*)+2(\nabla f_t(x,-Kx+v_t^*)[I,K])^\top Px\\
    &\quad +(F+\nabla f_t(x,-Kx+v_t^*)[I,-K])^\top\nabla g_{t+1}(x_t^{(1)}),
    \end{align*}
which implies that
\begin{align}  
\nonumber
      \nabla g_t(x)  &= \prod_{\tau=t}^\infty(F+\nabla f_{\tau}(x^{(\tau-t)}_{t},u^{(\tau-t)}_{t})[I,-K])^\top g_{\infty}(0)\\
      \nonumber
    &\quad + \sum_{\tau=t}^\infty\prod_{k=t}^{\tau}(F+\nabla f_{k}(x^{(k-t)}_t, u^{(k-t)}_t)[I,-K])^\top 2F^{^\top}Pf_{\tau}(x^{(\tau-t)}_{t},u^{(\tau-t)}_{t})\\
    \label{eq:term_proof_thm4}
    &\quad +\sum_{\tau=t}^\infty\prod_{k=t}^{\tau}(F+\nabla f_{k}(x_t^{(k-t)}, u_t^{(k-t)})[I,-K])^\top 2(\nabla f_{\tau}(x^{(\tau-t)}_{\tau},u^{(\tau-t)}_{\tau})[I,-K])^\top P^*_{\tau+1}.
\end{align}
Note that for any sequence of pairs $\left((x_t^{(\tau)},u_t^{(\tau)}):\tau\geq 0\right)$, \begin{align}
\nonumber
    &\left\|\prod_{k=t}^{\tau}F+\nabla f_k(x_t^{(k-t)},u_t^{(k-t)}[I,-K]\right\|\\
    \nonumber
    \leq &\sum_{k=t}^{\tau+1}\left\|F^{\tau+1-k}\sum_{\mathsf{S}\subseteq \{t,\ldots,\tau\},\left|\mathsf{S}\right|=k-t}\prod_{s\in \mathsf{S}}\nabla f_s(x_t^{(s-t)},u_t^{(s-t)})[I,-K]\right\|\\
    \nonumber
    \leq & \sum_{k=t}^{\tau+1} C_F\rho^{\tau+1-k}\sum_{\mathsf{S}\subseteq \{t,\ldots,\tau\},\left|\mathsf{S}\right|=k-t}\prod_{s\in \mathsf{S}}\left\|\nabla f_s(x_t^{(s-t)},u_t^{(s-t)})[I,-K]\right\|.
\end{align}
Since the residual functions are Lipschitz continuous as in Assumption~\ref{assumption:continuity}, their Jacobians satisfy $\|\nabla f_s(x_t^{(s-t)},u_t^{(s-t)})\|\leq C_{\ell}$ for any $x_t^{(s-t)}$ and $u_t^{(s-t)}$. Letting $\overline{C}\coloneqq C_{\ell} (1+\|K\|)$, we get
\begin{align}
\nonumber
&\left\|\prod_{k=t}^{\tau}F+\nabla f_k(x_t^{(k-t)},u_t^{(k-t)})[I,-K]\right\|\\
\label{eq:term_1_proof_thm4}
    \leq &\sum_{k=t}^{\tau+1} C_F\rho^{\tau+1-k}\sum_{\mathsf{S}\subseteq \{t,\ldots,\tau\},\left|\mathsf{S}\right|=k-t}\prod_{s\in \mathsf{S}}\overline{C}
    =C_F(\rho+\overline{C})^{\tau+1-t}.
\end{align}
Therefore, using~\eqref{eq:term_1_proof_thm4},
\begin{align}
\nonumber
    &\left\|\nabla f_{\tau}(x_t^{(\tau+1t-)},u_t^{(\tau+1-t)})[I,-K]\prod_{k=t}^{\tau}F+\nabla f_k(x_t^{(k-t)},u_t^{(k-t)})[I,-K]\right\|\\
    \nonumber
    \leq &\left\|\nabla f_{\tau}(x_t^{(\tau+1t-)},u_t^{(\tau+1-t)})[I,-K]\right\|\left\|\prod_{k=t}^{\tau}F+\nabla f_k(x_t^{(k-t)},u_t^{(k-t)})[I,-K]\right\|\\
    \label{eq:term_2_proof_thm4}
    \leq& C_F\overline{C}(\rho+\overline{C})^{\tau+1-t}.
\end{align}
Combing~\eqref{eq:term_1_proof_thm4} and~\eqref{eq:term_2_proof_thm4} with~\eqref{eq:term_proof_thm4},
\begin{align}
\nonumber
   & \|\nabla g_t(x)\|\leq  \left\|\prod_{\tau=t}^\infty(F+\nabla f_{\tau}(x^{(\tau-t)}_{t},u^{(\tau-t)}_{t})[I,-K])^\top g_{\infty}(0)\right\|\\
    \nonumber
    &+ \underbrace{\left\|\sum_{\tau=t}^\infty\prod_{k=t}^{\tau}(F+\nabla f_{k}(x_t^{(k-t)}, u_t^{(k-t)})[I,-K])^\top 2F^{\top}Pf_{\tau}(x^{(\tau-t)}_t,u_t^{(\tau-t)})\right\|}_{(a)}\\
    \nonumber
    & +\underbrace{\left\|\sum_{\tau=t}^\infty\prod_{k=t}^{\tau}(F+\nabla f_{k}(x_t^{(k-t)}, u_t^{(k-t)})[I,-K])^\top 2(\nabla f_{\tau}(x_t^{(\tau-t)},u_t^{(\tau-t)})[I,-K])^\top P x_t^{(\tau+1-t)}\right\|}_{(b)}.
\end{align}
Since $g_{\infty}(0)=0$, the first term in the inequality above is $0$. The second term satisfies
\begin{align}
\nonumber
   (a)\leq & 2C_FC_\ell \|P\|\rho\sum_{\tau=t}^\infty (\rho+\overline{C})^{\tau+1-t}\|(x_t^{(\tau-t)},u_t^{(\tau-t)})\|\\
   \nonumber
   = & 2C_FC_\ell\|P\|\rho\sum_{\tau=0}^\infty(\rho+\overline{C})^{\tau+1}\|x_t^{(\tau)},u_t^{(\tau)}\|\\
   \label{eq:term_a_proof_thm4}
    \leq & 2C_F C_\ell\|P\|\rho\sqrt{\sum_{\tau=1}^\infty(\rho+\overline{C})^{2\tau}}\sqrt{\sum_{\tau=0}^\infty\left\|x_t^{(\tau)}\right\|^2+\left\|u_t^{(\tau)}\right\|^2}\\
    \nonumber
    \leq & 2C_FC_\ell\|P\|\rho\sqrt{\frac{(\rho+\overline{C})^2}{1-(\rho+\overline{C})^2}}\sqrt{\frac{{{C_F}^2}\|Q+K^\top RK\|}{\min\left\{\lambda_{\min}(Q),\lambda_{\min}(R)\right\}(1-(\rho+\overline{C})^2)}\|x\|^2}\\
    \label{eq:term_a_2_proof_thm4}
    =&\frac{2{{C_F}^2}\|P\|\rho C_\ell(\rho+\overline{C})}{1-(\rho+\overline{C})^2}\sqrt{\frac{\|Q+K^\top RK\|}{\sigma}}\|x\|
\end{align}
where we have used the Cauchy to derive~\eqref{eq:term_a_proof_thm4}; $\lambda_{\min}(Q)$ and $\lambda_{\min}(R)$ are smallest eigenvalues of the matrices $Q$ and $R$ respectively and~\eqref{eq:term_a_2_proof_thm4} follows from Assumption~\ref{assumption:stability}.
Similarly, the third term satisfies
\begin{align}
\nonumber
    (b)\leq & 2C_F\overline{C}\|P\|\sum_{\tau=t}^\infty (\rho+\overline{C})^{\tau+1-t}\left\|x_t^{(\tau+1-t)}\right\|\\
    \nonumber
    \leq & 2C_F\overline{C}\|P\|\sqrt{\sum_{\tau=1}^\infty(\rho+\overline{C})^{2\tau}}\sqrt{\sum_{\tau=1}^\infty\left\|x_t^{(\tau)}\right\|^2}\\
    \nonumber
    \leq & 2C_F\overline{C}\|P\|\sqrt{\frac{(\rho+\overline{C})^2}{1-(\rho+\overline{C})^2}}\sqrt{\frac{{{C_F}^2}\|Q+K^\top RK\|}{\lambda_{\min}(Q)(1-(\rho+\overline{C})^2}\|x\|^2}\\
    \label{eq:term_b_proof_thm4}
    \leq & \frac{2{{C_F}^2}\|P\|\overline{C}(\rho+\overline{C})}{1-(\rho+\overline{C})^2}\sqrt{\frac{\|Q+K^\top RK\|}{\sigma}}\|x\|.
\end{align}
Putting~\eqref{eq:term_a_2_proof_thm4} and~\eqref{eq:term_b_proof_thm4} together, we conclude that 
\begin{align}
\label{eq:proof_csineq}
\left\|\nabla g_t(x)\right\|\leq &  \frac{2{{C_F}^2}\|P\|(\rho+\overline{C})\left(\rho C_\ell+\overline{C}\right)}{1-(\rho+\overline{C})^2}\sqrt{\frac{\|Q+K^\top RK\|}{\sigma}}\|x\|
  =: C_\nabla C_{\ell}\|x\|.
\end{align}
Rewriting \eqref{eq:gradient_of_v} as 
\begin{align*}
    -v_t^*& =
    (R+B^\top PB)^{-1}x^\top F^{\top}P\nabla f_t(x,-Kx+v_t^*)[0,I]\\
    &\ +(R+B^\top PB)^{-1}B^\top Pf_t(x,-Kx+v_t^*)\\
    &\ +(R+B^\top PB)^{-1}(v_t^*)^\top B^\top P\nabla f_t(x,-Kx+v_t^*)[0,I]\\
    &\ +(R+B^\top PB)^{-1}Pf_t(x,-Kx+v_t^*)\nabla f_t(x,-Kx+v_t^*)[0,I]\\
    &\ +\frac{1}{2}(R+B^\top PB)^{-1}(B+\nabla f_t(x,-Kx+v_t^*)[0,I])^\top\nabla g_t(F x+Bv+f_t(x,-Kx+v_t^*))
\end{align*}
and taking the Euclidean norm on both sides, we obtain 
\begin{align}
\nonumber
    \|v_t^*\|\leq & \left\|(R+B^\top PB)^{-1}x^\top F^{\top}P\nabla f_t(x,-Kx+v_t^*)[0,I]\right\|\\
    \nonumber
    &\quad + \left\|(R+B^\top PB)^{-1}B^\top Pf_t(x,-Kx+v_t^*)\right\|\\
    \nonumber
    &\quad +\left\|(R+B^\top PB)^{-1}(v_t^*)^\top B^\top P\nabla f_t(x,-Kx+v_t^*)[0,I]\right\|\\
    \nonumber
    &\quad +\left\|(R+B^\top PB)^{-1}Pf_t(x,-Kx+v_t^*)\nabla f_t(x,-Kx+v_t^*)[0,I]\right\|\\
    \nonumber
    &\quad +\Big\|\frac{1}{2}(R+B^\top PB)^{-1}(B+\nabla f_t(x,-Kx+v_t^*)[0,I])^\top\\
    \nonumber
    &\qquad\qquad \nabla g_t\left(F x+Bv+f_t(x,-Kx+v_t^*)\right)\Big\|\\
    \nonumber
    \leq &\left\|R+B^\top PB\right\|^{-1}\Big(C_\ell\|PF\|\|x\|+\overline{C}\left\|B^\top P\right\|\|x\|\\
    \nonumber
    &\quad +2C_\ell\left\|B^\top P\right\|\|v_t^*\|+\overline{C}\|P\|\|x\|+C_\ell\|P\|\|v_t^*\|\\
    \label{eq:term_final_proof_thm4}
    &\quad + \frac{1}{2}C_\nabla C_\ell \|B+C_\ell I\|(\|F\|\|x\|+\|B\|\|v_t^*\|+\overline{C}\|x\|+C_\ell\|x\|)\Big).
\end{align}
Finally, assuming $C_\ell\leq\min\left\{1,\frac{\|R+B^\top PB\|}{4\left\|B^\top P\right\|+2\|P\|+C_\nabla(\|B\|+1)\|B\|}\right\}$,~\eqref{eq:term_final_proof_thm4} yields
\begin{align*}
    &\left\|\pi_t^*(x)-\overline{\pi}(x)\right\|\leq C_\ell\|x\|\times\\
    & \underbrace{\left(2{\|R+B^\top PB\|^{-1}\left((\|PF\|+(1+\|K\|)\left(\|PB\|+\|P\|\right)+\frac{C_\nabla}{2}\|B+ I\|(2+\|F\|+\|K\|)\right)}\right)}_{ =: C^{\mathsf{sys}}_{a}}
\end{align*}
where the constant $C_\nabla$ is defined as
\begin{align*}
    C_\nabla\coloneqq \frac{2{{C_F}^2}\|P\|(\rho+\overline{C})\left(\rho+(1+\|K\|)\right)}{1-(\rho+\overline{C})^2}\sqrt{\frac{\|Q+K^\top RK\|}{\sigma}}.
\end{align*}
\end{proof}

\begin{theorem}
\label{thm:blackbox_stability}
Let $\gamma\coloneqq  \left(\rho+{C_F C_{\ell}(1+\|K\|)}\right)$.
Suppose the black-box policy $\widehat{\pi}$ is $\varepsilon$-consistent with
the consistency constant $\varepsilon$ satisfying $\varepsilon<\frac{1/{C_F}-C^{\mathsf{sys}}_{a}C_{\ell}}{C_{\ell}+\|B\|}$.
Suppose the Lipschitz constant $C_{\ell}$ satisfies $
 C_{\ell}<\min\left\{1, 1/({C_F} C_{a}^{\mathsf{sys}}),  C_{c}^{\mathsf{sys}} ,{(1-\rho)}/{({C_F}(1+\|K\|))}\right\}$. 
Let $(x_t: t\geq 0)$ denote a trajectory of states generated by the adaptive $\lambda$-confident policy $\pi_t=\lambda_t \widehat{\pi} + (1-\lambda_t)\overline{\pi}$ (Algorithm~\ref{alg:adaptive_policy}). Then it follows that  $\pi_t$ is an exponentially stabilizing policy such that
$
\|x_t\|\leq   \frac{\gamma^t-\mu^t}{1-\mu\gamma^{-1}}\left({C_F}+\mu\gamma^{-1}\right)\|x_0\|
$ for any $t\geq 0$
where $\mu\coloneqq {C_F}\left(\varepsilon\left(C_{\ell}+\|B\|\right)+C^{\mathsf{sys}}_{a} C_{\ell}\right)$.
\end{theorem}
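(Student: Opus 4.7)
The plan is to treat the adaptive policy as a $\lambda_t$-weighted perturbation of the stabilizing linear policy $\overline{\pi}(x)=-Kx$, and invoke Lemma~\ref{lemma:model_based_exponential_stability} (pure model-based stability at rate $\gamma$) together with Theorem~\ref{thm:optimal_modelbased_difference} (the sensitivity bound $\|\pi_t^*(x)-\overline{\pi}(x)\|\leq C_a^{\mathsf{sys}}C_\ell\|x\|$) to control the deviation introduced by the black-box $\widehat{\pi}$.

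First I would rewrite $u_t=-Kx_t+\lambda_t\Delta_t$ with $\Delta_t\coloneqq\widehat{\pi}(x_t)-\overline{\pi}(x_t)$, so the closed-loop dynamics become $x_{t+1}=Fx_t+\lambda_tB\Delta_t+f_t(x_t,u_t)$. The triangle inequality together with Definition~\ref{def:epsilon_consistency} and Theorem~\ref{thm:optimal_modelbased_difference} gives $\|\Delta_t\|\leq(\varepsilon+C_a^{\mathsf{sys}}C_\ell)\|x_t\|$. Splitting $f_t(x_t,u_t)=f_t(x_t,-Kx_t)+[f_t(x_t,u_t)-f_t(x_t,-Kx_t)]$ via Assumption~\ref{assumption:continuity} bounds the first piece by $C_\ell(1+\|K\|)\|x_t\|$ (the quantity already absorbed into $\gamma$ inside Lemma~\ref{lemma:model_based_exponential_stability}), and the second piece by $C_\ell\lambda_t\|\Delta_t\|$.

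Next I would introduce the pure model-based trajectory $y_t$ with $y_0=x_0$ and $y_{t+1}=Fy_t+f_t(y_t,-Ky_t)$, for which Lemma~\ref{lemma:model_based_exponential_stability} yields $\|y_t\|\leq C_F\gamma^t\|x_0\|$. Setting $e_t\coloneqq\lambda_tB\Delta_t+[f_t(x_t,u_t)-f_t(x_t,-Kx_t)]$, the deviation $z_t\coloneqq x_t-y_t$ satisfies $z_0=0$ and $z_{t+1}=Fz_t+[f_t(x_t,-Kx_t)-f_t(y_t,-Ky_t)]+e_t$, whose bracket is Lipschitz in $z_t$ with constant $C_\ell(1+\|K\|)$. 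Rerunning the discrete Gronwall computation from Lemma~\ref{lemma:model_based_exponential_stability} on this $z_t$ recursion, but now with an external source $e_t$, gives $\|z_t\|\leq C_F\sum_{\tau=0}^{t-1}\gamma^{t-1-\tau}\|e_\tau\|$, and combining with $\|y_t\|$ produces the Volterra-type inequality
\begin{align*}
\|x_t\|\leq C_F\gamma^t\|x_0\|+C_F\sum_{\tau=0}^{t-1}\gamma^{t-1-\tau}\|e_\tau\|,\qquad \|e_\tau\|\leq\lambda_\tau(\|B\|+C_\ell)(\varepsilon+C_a^{\mathsf{sys}}C_\ell)\|x_\tau\|.
\end{align*}
Closing this self-referential bound by induction with the ansatz $K_{t+1}=\gamma K_t+(C_F\gamma+\mu)\mu^t$, $K_0=0$, solves in closed form to $K_t=(C_F+\mu\gamma^{-1})(\gamma^t-\mu^t)/(1-\mu\gamma^{-1})$, which is exactly the claimed bound. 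The hypothesis on $C_\ell$ enforces $\gamma<1$ (the same constraint already needed in Lemma~\ref{lemma:model_based_exponential_stability}), while rearranging $\varepsilon<(1/C_F-C_a^{\mathsf{sys}}C_\ell)/(C_\ell+\|B\|)$ is literally the condition $\mu<1$, so $\|x_t\|$ decays exponentially at effective rate $\max(\gamma,\mu)<1$.

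\textbf{Main obstacle.} The hard part will be recovering the sharp two-rate convolutional form $(\gamma^t-\mu^t)/(\gamma-\mu)$ with the clean $\mu=C_F[\varepsilon(C_\ell+\|B\|)+C_a^{\mathsf{sys}}C_\ell]$ from the statement, rather than the looser one-rate bound $C_F(\gamma+c)^t$ with $c=C_F(\|B\|+C_\ell)(\varepsilon+C_a^{\mathsf{sys}}C_\ell)$ that a naive single-pass Gronwall on the Volterra inequality would deliver. Achieving this will require keeping the ``background'' nonlinearity $f_\tau(x_\tau,-Kx_\tau)$ (which should compound only into $\gamma$ through Lemma~\ref{lemma:model_based_exponential_stability}) strictly separated from the adaptive-policy-induced perturbation $e_\tau$ (which should contribute cleanly to $\mu$) throughout the induction, instead of collapsing them into a single effective Lipschitz constant.
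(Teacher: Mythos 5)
Up to its final step, your proposal is the paper's proof in different notation: the paper also compares the adaptive trajectory against the pure model-based one, also routes $\widehat{\pi}-\overline{\pi}$ through the optimal policy via Definition~\ref{def:epsilon_consistency} plus Theorem~\ref{thm:optimal_modelbased_difference}, and arrives at exactly your Volterra inequality $\|x_t\|\leq C_F\gamma^t\|x_0\|+C_F\sum_{\tau=0}^{t-1}\gamma^{t-1-\tau}\|e_\tau\|$ with $\|e_\tau\|$ proportional to $\|x_\tau\|$. It gets there by a telescoping sum over hybrid trajectories $x_t^{(\tau)}$ (combined policy for the first $\tau$ steps, then $\overline{\pi}$) together with the incremental contraction $\|x_t-x_t'\|\leq C_F\gamma^t\|x_0-x_0'\|$ in initial conditions, which is your $z_t$ error-dynamics Gronwall in disguise, so the two derivations are equivalent in content. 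One small discrepancy even works in your favor: your source bound $(\|B\|+C_\ell)(\varepsilon+C_a^{\mathsf{sys}}C_\ell)\|x_\tau\|$ is the honest propagation of the action-difference bound of Theorem~\ref{thm:optimal_modelbased_difference} through the dynamics, whereas the paper's $\mu$ carries $C_a^{\mathsf{sys}}C_\ell$ without the $(\|B\|+C_\ell)$ factor because it quotes that action bound directly as the state bound $\|x_t'-x_t^{(t-1)}\|\leq C_a^{\mathsf{sys}}C_\ell\|x_{t-1}\|$; note, however, that the stated threshold on $\varepsilon$ certifies the paper's $\mu<1$, not your larger constant.

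The genuine gap is exactly where you predicted it, and your proposed fix does not work. The induction with the ansatz $K_{t+1}=\gamma K_t+(C_F\gamma+\mu)\mu^t$ does not close against the Volterra inequality: the inductive step needs $\mu\sum_{\tau\leq t}\gamma^{t-\tau}K_\tau\leq \gamma K_t+(C_F\gamma+\mu)\mu^t$, and even granting that $K_t$ dominates the recursion at step $t$, you are left requiring $\mu K_t\leq(C_F\gamma+\mu)\mu^t$, which fails for large $t$ because $K_t$ grows like $\gamma^t$ when $\gamma>\mu$. Indeed the equality version of the recursion solves exactly: with $a_0=\|x_0\|$ one finds $a_{t+1}=\gamma a_t+\mu a_t$ for $t\geq 1$, so the sharpest consequence of the inequality is $\|x_t\|\leq(C_F\gamma+\mu)(\gamma+\mu)^{t-1}\|x_0\|$ --- precisely the one-rate bound you call naive --- and a direct check at $t=3$ shows the two-rate expression $(C_F\gamma+\mu)\frac{\gamma^t-\mu^t}{\gamma-\mu}$ is strictly below this worst case, so no bookkeeping that keeps the background nonlinearity separate from $e_\tau$ can recover it from that inequality. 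You should also know the paper performs this very step by fiat: from $\|x_t-\overline{x}_t\|\leq\mu\sum_{\tau<t}\gamma^{t-\tau-1}\|x_\tau\|$ it writes ``therefore'' and asserts the two-rate bound with no intervening argument, so your reconstruction faithfully reproduces the paper's pipeline up to and including its unjustified last step. What actually follows under the stated hypotheses ($\gamma<1$ and $\mu<1$ separately) is exponential stability only when additionally $\gamma+\mu<1$. As a cosmetic matter, both your $K_0=0$ and the theorem's displayed bound vanish at $t=0$, an off-by-one you inherited from the statement itself.
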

\begin{proof}
We first introduce a new symbol $x_t^{(\tau)}$, which is the $t$-th state of a trajectory generated by the combined policy $\pi_t(x)=\lambda_t\widehat{\pi}(x)+(1-\lambda_t)\overline{\pi}(x)$ for the first $\tau$ steps and then switch to the model-based policy $\overline{\pi}$ for the remaining steps. Let $(\overline{x}_t: t\geq 0)$ be a trajectory of states generated by a model-based policy $\overline{\pi}$. As illustrated in Figure~\ref{fig:fixed_lambda}, for any $t\geq 0$,
\begin{align}
\label{eq:proof_telescope}
     x_t-\overline{x}_t&=\sum_{\tau=0}^{t-1}x_t^{(\tau+1)}-x_t^{(\tau)}.
\end{align}
\begin{figure}
    \centering
\includegraphics[scale=0.2]{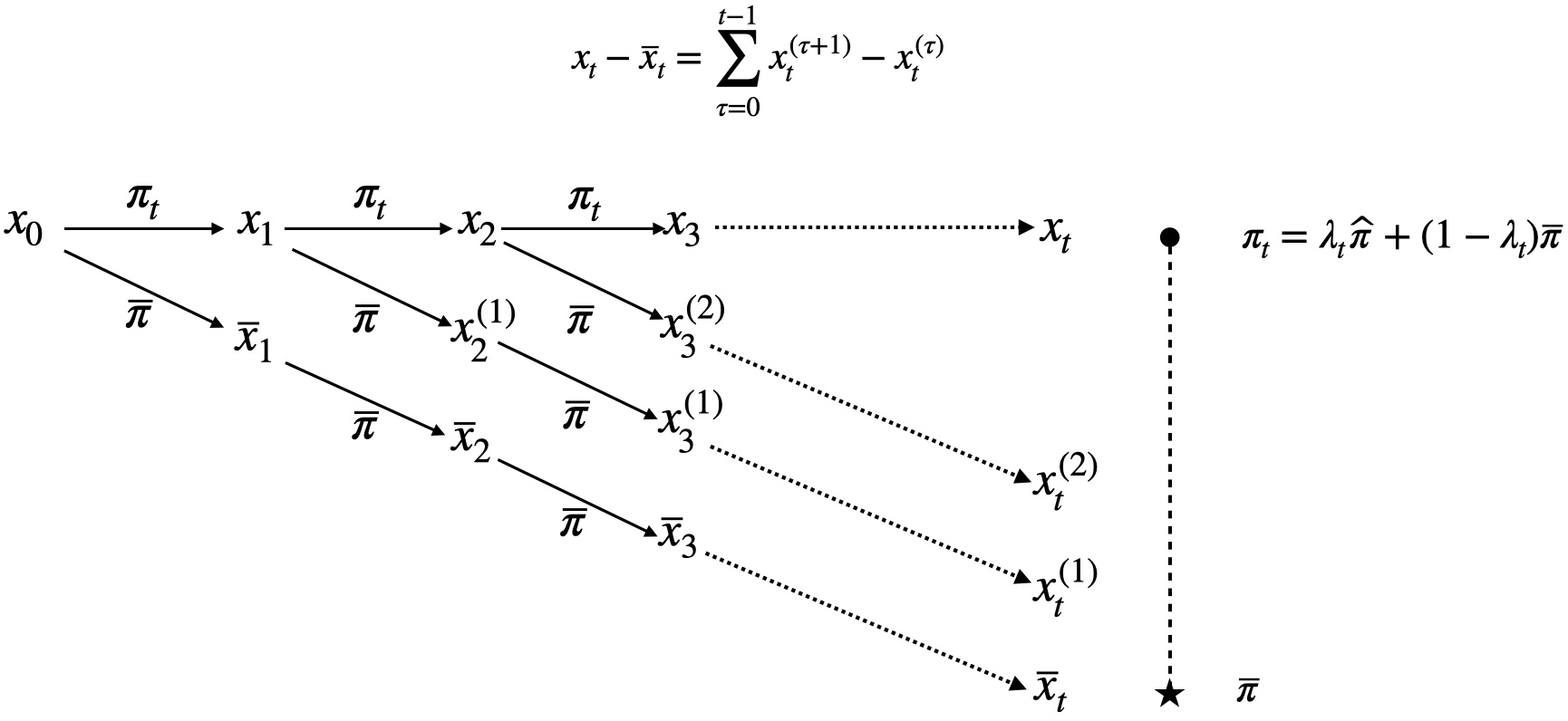}
\caption{Telescoping sum of $x_t-x_t^*$.}
\label{fig:telescoping}
\end{figure}

Let $(x_t: t\geq 0)$ and $(x'_t: t\geq 0)$ denote the trajectories of states at time $t$ generated by the model-based policy $\widehat{\pi}(x)=-Kx$ when the initial states are $x_0$ and $x_0'$ respectively. Then~\eqref{eq:stability_proof_1} leads to
\begin{align*}
     x_t-x'_t=F^{t}(x_0-x'_0)+\sum_{\tau=0}^{t-1}F^{t-1-\tau} \left(f_{\tau}(x_\tau,-Kx_\tau)-f_{\tau}(x'_\tau,-Kx'_\tau)\right),
\end{align*}
yielding
\begin{align*}
    \|x_t-x'_t\| \leq  C_F\rho^t {\left(\left\|x_0-x'_0\right\|+C_{\ell} (1+\|K\|)\sum_{\tau=0}^{t-1}\rho^{-1-\tau}\left\|x_{\tau}-x'_{\tau}\right\|\right)}
\end{align*}
where we have used the Lipschitz continuity of $(f_t:t\geq 0)$ and Assumption~\ref{assumption:stability} so that $\left\| F^t\right\|\leq C_F\rho^t$ for any $t\geq 0$. The same argument as in Lemma~\ref{lemma:model_based_exponential_stability} gives that for any $t\geq 0$,
\begin{align*}
 \|x_t-x'_t\| \leq C_F  \left(\rho+{C_F\overline{C}}\right)^{t}\|x_0-x'_0\|
\end{align*}
where $\overline{C}\coloneqq C_{\ell} (1+\|K\|)$. Continuing from~\eqref{eq:proof_telescope}, $x_t-x_t^*$ can be represented by a telescoping sum (illustrated in Figure~\ref{fig:telescoping}):
\begin{align*}
\left\|x_t-\overline{x}_t\right\|&=\left\|\sum_{\tau=0}^{t-1}x_t^{(\tau+1)}-x_t^{(\tau)}\right\|\leq\sum_{\tau=0}^{t-1}\left\|x_t^{(\tau+1)}-x_t^{(\tau)}\right\|\leq \sum_{\tau=0}^{t-1}C_F\gamma^{t-\tau-1}\left\|x_{\tau+1}^{(\tau+1)}-x_{\tau+1}^{(\tau)}\right\|.
\end{align*}
For any $t\geq 0$,
\begin{align*}
&x_t^{(t)}-x_t^{(t-1)} = x_t-x_t^{(t-1)} \\
=& (x_t - \lambda_t x_t') + \left(\lambda_t x_t'-x_t^{(t-1)}\right)\\
=& \lambda_t\left(\widehat{x}_t - x_t'\right) + (1-\lambda_t)\left(\overline{x}_t-x_t^{(t-1)}\right) + \lambda_t\left(x_t'-x_t^{(t-1)}\right)\\
=& \lambda_t\left(\widehat{x}_t - x_t'\right) + \lambda_t\left(x_t'-x_t^{(t-1)}\right)
\end{align*}
where $x_t'$, $\widehat{x}_t$ and $\overline{x}_t$ are the states generated by running an optimal policy $\pi^*$, a model-free policy $\widehat{\pi}$ and a model-based policy $\overline{\pi}$ respectively for one step with the same initial state $x_{t-1}$. Note that $x_t=\lambda_t\widehat{x}_t + (1-\lambda_t)\overline{x}_t$ and $\overline{x}_t=x_t^{(t-1)}$. Therefore,
\begin{align*}
 \lambda_t\left(\widehat{x}_t - x_t'\right)
= &\lambda_t \big( \underbrace{B(\widehat{\pi}(x_{t-1})-{\pi}^*(x_{t-1}))}_{:=(a)}+\underbrace{f_{t-1}(x_{t-1},\widehat{\pi}(x_{t-1}))-f_{t-1}(x_{t-1},{\pi}^*(x_{t-1}))}_{:=(b)}\big).
\end{align*}
For (a), we obtain the following bound:
\begin{align}
   \label{eq:stability_proof_4}
   \| (a)\|\leq \|B\|\|\widehat{\pi}(x_{t-1})-{\pi}^*(x_{t-1})\|
   \leq & \varepsilon\|B\|\|x_{t-1}\|
\end{align}
and~\eqref{eq:stability_proof_4} holds since the model-free policy $\widehat{\pi}$ is $\varepsilon$-consistent (Definition~\ref{def:epsilon_consistency}). Similarly, for (b), since the functions $(f_t:t\geq 0)$ are  Lipschitz continuous (Assumption~\ref{assumption:continuity}),
\begin{align}
   \label{eq:stability_proof_5}
    \| (b)\|\leq  \varepsilon C_{\ell}\|x_{t-1}\|.
\end{align}
Applying Theorem~\ref{thm:optimal_modelbased_difference},
$
    \left\|x_t'-x_t^{(t-1)}\right\|\leq C^{\mathsf{sys}}_{a} C_{\ell}\|x_{t-1}\|.
$
Combining~\eqref{eq:stability_proof_4} and~\eqref{eq:stability_proof_5} and applying Lemma~\ref{lemma:model_based_exponential_stability},
$
\left\|x_t-\overline{x}_t\right\| \leq  \mu\sum_{\tau=0}^{t-1}\gamma^{t-\tau-1} \|x_{\tau}\|
$
where $\mu\coloneqq {C_F}\left(\varepsilon\left(C_{\ell}+\|B\|\right)+C^{\mathsf{sys}}_{a} C_{\ell}\right)$, therefore,
\begin{align*}
    \frac{\|x_t\|}{\gamma^{t}}\leq \frac{1-(\mu\gamma^{-1})^{t}}{1-\mu\gamma^{-1}}\left({C_F}+\mu\gamma^{-1}\right)\|x_0\|.
\end{align*}
Hence, if $\mu<1$,
$
\|x_t\|\leq   \frac{\gamma^t-\mu^t}{1-\mu\gamma^{-1}}\left({C_F}+\mu\gamma^{-1}\right)\|x_0\|,
$
the linearly combined policy  $\pi_t$ is an exponentially stabilizing policy.
\end{proof}



\subsection{Proof of Theorem~\ref{thm:stability}}


To show the stability results, noting that the policy is switched to the model-based policy after $t\geq t_0$. Since the Lipschitz constant $C_{\ell}$ satisfies $C_{\ell}<\frac{1-\rho}{C_F(1+\|K\|)}$ where $\left\| F^t\right\|\leq C_F\rho^t$ for any $t\geq 0$, then the model-based policy $\overline{\pi}$ is exponentially stable as shown in Lemma~\ref{lemma:model_based_exponential_stability}. Let $\mu\coloneqq C_F\left(\varepsilon\left(C_{\ell}+\|B\|\right)+C^{\mathsf{sys}}_{a} C_{\ell}\right)$ and $\gamma\coloneqq\rho+C_F C_{\ell}(1+\|K\|)<1$. Applying Theorem~\ref{thm:blackbox_stability} and Lemma~\ref{lemma:model_based_exponential_stability}, for any $t\geq 0$,
\begin{align*}
     \|x_t\|\leq  \frac{\mu^{t_0}}{\mu\gamma^{-1}-1}\left(C_F+\mu\gamma^{-1}\right)(\rho+C_F C_{\ell}(1+\|K\|))^{t-t_0}\|x_0\|.
    \end{align*}
Since $\lambda_{t+1}<\lambda_{t}-\alpha$ for all $t\geq 0$ with some $\alpha>0$, $t_0$ is finite and the adaptive $\lambda$-confident policy is an exponentially stabilizing policy.


\section{Competitive Ratio Analysis}
\label{app:proof_competitiveness}

\subsection{Proof of Theorem~\ref{thm:competitive}}



Before proceeding to the proof of Theorem~\ref{thm:competitive}, we prove the following lemma that will be useful.

\begin{lemma}
\label{lemma:competitive}
Suppose $\varepsilon\leq {\lambda_{\min}(Q)}/{(2\|H\|)}$. If the adaptive $\lambda$-confident policy  $\pi_t=\lambda_t \widehat{\pi} + (1-\lambda_t)\overline{\pi}$ (Algorithm~\ref{alg:adaptive_policy}) is an exponentially stabilizing policy, then the competitive ratio of the linearly combined policy is $       \mathsf{CR}(\varepsilon) =  O((1-{{\lambda}}){\overline{\mathsf{CR}}_{\mathrm{model}}})+O\left({1}/\left({{1-\frac{2\|H\|}{\sigma}}\varepsilon}\right)\right) + O(C_{\ell}\|x_0\|)$.
\end{lemma}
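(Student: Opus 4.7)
The plan is to apply Lemma~\ref{lemma:gap} to bound $\mathsf{ALG}-\mathsf{OPT}$, then divide by a lower bound on $\mathsf{OPT}$. First I would write the algorithm's action as $u_t=\pi_t'(x_t)+\eta_t$ for the auxiliary linear-optimal policy $\pi_t'$ from~\eqref{eq:auxiliary}. Using $\pi_t'(x_t)=-Kx_t-H^{-1}B^\top W_t^\ast$ with $W_t^\ast \coloneqq \sum_{\tau\ge 0}(F^\top)^\tau P f_{t+\tau}^\ast$, a one-line rearrangement produces the convex decomposition
\begin{equation*}
\eta_t \;=\; \lambda_t\bigl(\widehat{\pi}(x_t)-\pi_t'(x_t)\bigr)+(1-\lambda_t)\bigl(\overline{\pi}(x_t)-\pi_t'(x_t)\bigr)\;=:\;\lambda_t a_t + (1-\lambda_t) b_t,
\end{equation*}
so Jensen's inequality for the convex quadratic $y\mapsto y^\top H y$ gives the central bound $\eta_t^\top H\eta_t\le \lambda_t a_t^\top H a_t+(1-\lambda_t)b_t^\top H b_t$.

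Next I would analyze the two pieces separately. The model-based residual $b_t=H^{-1}B^\top W_t^\ast$ is exactly the gap between $\overline{\pi}$ and the auxiliary linear optimum, so $\sum_t b_t^\top H b_t$ is the excess cost of pure LQR on the auxiliary linear system. Expanding $W_t^\ast$, applying $\|F^t\|\le C_F\rho^t$ and Lipschitzness of $f_t^\ast$ bounds this excess by $O(\overline{\mathsf{CR}}_{\mathrm{model}})\cdot \mathsf{OPT}$, matching the definition of $\overline{\mathsf{CR}}_{\mathrm{model}}$. Since $(\lambda_t)$ is monotonically decreasing to $\lambda$, we have $1-\lambda_t\le 1-\lambda$ for every $t$, which upgrades the weighted sum to $\sum_t(1-\lambda_t)b_t^\top H b_t\le (1-\lambda)\,O(\overline{\mathsf{CR}}_{\mathrm{model}})\,\mathsf{OPT}$, yielding the first term of the claimed ratio. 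For the model-free residual I would split $a_t=(\widehat{\pi}(x_t)-\pi_t^\ast(x_t))+(\pi_t^\ast(x_t)-\pi_t'(x_t))$; the first summand is $\le\varepsilon\|x_t\|$ by $\varepsilon$-consistency, and the second is $O(C_\ell(\|x_t\|+\|x_t^\ast\|))$ by Theorem~\ref{thm:optimal_modelbased_difference} together with the geometric bound on $W_t^\ast$ inherited from exponential stability of the optimal trajectory.

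All remaining ``mixing'' terms in~\eqref{eq:gap}---the boundary $x_0^\top(\cdot)$, the differences $f_t^\top Pf_t-(f_t^\ast)^\top Pf_t^\ast$, and the cross-sums in $f_t-f_t^\ast$---are controlled uniformly via the exponential stability hypothesis. Indeed, that hypothesis gives $\|x_t\|=O(\gamma^t\|x_0\|)$ for some $\gamma<1$, and the same exponential decay holds for $\|x_t^\ast\|$ under the Lipschitz constraint on $C_\ell$ through Lemma~\ref{lemma:model_based_exponential_stability} applied along the optimal trajectory. Lipschitzness with $f_t(\mathbf{0})=0$ then forces $\|f_t\|,\|f_t^\ast\|=O(C_\ell\gamma^t\|x_0\|)$, so every nested geometric sum in~\eqref{eq:gap} is $O(C_\ell\|x_0\|^2)$. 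Dividing by the trivial lower bound $\mathsf{OPT}\ge \sigma\|x_0\|^2$ (from the first-step state cost $x_0^\top Qx_0$) converts this absolute bound into the additive $O(C_\ell\|x_0\|)$ term in the ratio.

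The step I expect to be the main obstacle is producing the denominator $1-2\|H\|\varepsilon/\sigma$ in front of the model-free contribution. Combining the bound $a_t^\top H a_t=O(\|H\|\varepsilon^2\|x_t\|^2+\|H\|C_\ell^2(\|x_t\|^2+\|x_t^\ast\|^2))$ with the Jensen inequality above yields $\sum_t\lambda_t a_t^\top Ha_t\le \|H\|\varepsilon^2\sum_t\|x_t\|^2+O(C_\ell^2\|x_0\|^2)$, and bounding $\sum_t\|x_t\|^2$ in terms of $\mathsf{OPT}$ requires comparing the algorithm's trajectory to the optimal one. Using $\sum_t\|x_t\|^2\le (2/\sigma)\mathsf{OPT}+(2/\sigma)\sum_t(x_t-x_t^\ast)^\top Q(x_t-x_t^\ast)$ together with a chaining argument through the dynamics (bounding $\|u_t-u_t^\ast\|^2$ by $\varepsilon^2\|x_t\|^2$ and propagating) produces a self-referential inequality of the schematic form $\sum_t\|x_t\|^2\le (2/\sigma)\mathsf{OPT}+(2\|H\|\varepsilon/\sigma)\sum_t\|x_t\|^2+\text{(stability error)}$; the hypothesis $\varepsilon\le \sigma/(2\|H\|)$ is precisely what makes the coefficient on the right strictly less than one, and rearranging yields the $1/(1-2\|H\|\varepsilon/\sigma)$ factor. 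Assembling the three pieces and dividing by $\mathsf{OPT}$ then gives the stated competitive ratio.
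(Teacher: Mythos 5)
Your overall architecture matches the paper's proof: invoke Lemma~\ref{lemma:gap} with the auxiliary optimal linear policy~\eqref{eq:auxiliary}, decompose $\eta_t=\lambda_t(\widehat{\pi}(x_t)-\pi_t'(x_t))+(1-\lambda_t)(\overline{\pi}(x_t)-\pi_t'(x_t))$, use monotonicity $1-\lambda_t\leq 1-\lambda$ on the model-based piece, bound the model-free piece via $\varepsilon$-consistency, lower-bound $\mathsf{OPT}$ against $\sum_t\|\sum_{\tau\geq t}(F^\top)^{\tau-t}Pf^*_\tau\|^2$ (your ``excess cost of pure LQR on the auxiliary system'' is exactly the quantity controlled by the paper's Lemma~\ref{lemma:optimal_cost_lower_bound}, which goes through the dynamics identity $f_t^*=x^*_{t+1}-Ax^*_t-Bu^*_t$ rather than Lipschitzness, but either route works), and absorb the remaining cross terms of~\eqref{eq:gap} into $O(C_{\ell}\|x_0\|)$ using stability. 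Your handling of $\pi_t^*-\pi_t'$ via Theorem~\ref{thm:optimal_modelbased_difference} plus the geometric bound on $W_t^*$ deviates from the paper's cost-comparison bound~\eqref{eq:proof_term1}, but is a legitimate (arguably cleaner) substitute.

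The genuine gap is exactly the step you flag as the main obstacle: your derivation of the $1/(1-2\|H\|\varepsilon/\sigma)$ denominator. You propose a self-referential inequality for $\sum_t\|x_t\|^2$ obtained by ``chaining'' the algorithm's trajectory against the optimal trajectory, but this inequality is asserted, not derived, and the sketch does not go through as written. Your ingredient $\|u_t-u_t^*\|^2\leq\varepsilon^2\|x_t\|^2$ conflates $u_t^*=\pi_t^*(x_t^*)$ with $\pi_t^*(x_t)$; controlling the difference requires a Lipschitz bound on the unknown optimal nonlinear policy together with propagation of $\|x_t-x_t^*\|$ through the closed loop, none of which you establish, and such error propagation can diverge without exactly the kind of careful constants your sketch omits. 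Moreover, even granting the chaining, your per-step bound $a_t^\top Ha_t=O(\|H\|\varepsilon^2\|x_t\|^2+\cdots)$ would produce an $\varepsilon^2$-flavored coefficient, not the exact factor $2\|H\|\varepsilon/\sigma$ you write down. The paper avoids all of this with one line: since $\mathsf{ALG}\geq\lambda_{\min}(Q)\sum_t\|x_t\|^2\geq\sigma\sum_t\|x_t\|^2$, the term $\varepsilon\sum_t\|x_t\|^2$ is bounded by $(\varepsilon/\sigma)\,\mathsf{ALG}$ directly; substituting into the regret bound gives $\mathsf{ALG}-\mathsf{OPT}\leq\frac{2\|H\|\varepsilon}{\sigma}\mathsf{ALG}+O\bigl((1-\lambda)\overline{\mathsf{CR}}_{\mathrm{model}}\bigr)\mathsf{OPT}+O(\mathsf{OPT})+O(C_{\ell}\|x_0\|)$, and moving the $\mathsf{ALG}$ term to the left yields the denominator exactly, with the hypothesis $\varepsilon\leq\lambda_{\min}(Q)/(2\|H\|)$ keeping it positive. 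No trajectory comparison is needed. A secondary flaw of the same type: you invoke Lemma~\ref{lemma:model_based_exponential_stability} ``along the optimal trajectory,'' but that lemma concerns the policy $-Kx$, not $\pi^*$; what is actually available for the optimal trajectory is the summability $\sum_t\|x_t^*\|^2\leq\mathsf{OPT}/\sigma$ (from $Q\succeq\sigma I$), which suffices for all the mixing sums and is how they should be handled.
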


\begin{proof}[Proof of Lemma~\ref{lemma:competitive}]
Let $H\coloneqq R+B^\top P B$.
Fix any sequence of residual functions $(f_t:t\geq 0)$.
Denote by $f_t\coloneqq f_t(x_t,u_t)$, $f_t^*\coloneqq f_t(x^*_t,u^*_t)$ and $x^*_t$ and $u_t^*$ the offline optimal state and action at time $t$. With $u_t = \lambda_t \widehat{u}_t + (1-\lambda_t) \overline{u}_t$ at each time $t$, Lemma~\ref{lemma:gap} implies that the \textit{dynamic regret} can be bounded by
\begin{align}
\nonumber
\mathsf{DynamicRegret} & \coloneqq \mathsf{ALG} - \mathsf{OPT} 
\leq \sum_{t=0}^{\infty}\eta_t^{\top} H \eta_t  + O(1)\\
\nonumber
& + 2\sum_{t=0}^{\infty}\eta_t^{\top}B^{\top}\left({\sum_{\tau=t}^{\infty}}\left(F^{\top}\right)^{\tau}P(f_{t+\tau}-f^*_{t+\tau})\right)\\
\nonumber
& + \sum_{t=0}^{\infty}\left(f_t^{\top}Pf_t -\left(f_t^*\right)^{\top}Pf_t^* \right) 
+ 2x_0^{\top}\left(\sum_{t=0}^{\infty}\left(F^{\top}\right)^{t+1}P\left(f_{t}-f^*_{t}\right)\right)
\\
\nonumber
& + 2\sum_{t=0}^{\infty}\left(f_t^{\top}{\sum_{\tau=0}^{\infty}}\left(F^{\top}\right)^{\tau+1}Pf_{t+\tau+1} -\left(f_t^*\right)^{\top}{\sum_{\tau=0}^{\infty}}\left(F^{\top}\right)^{\tau+1}P\left(f_{t+\tau+1}^*\right)\right) \\
\label{eq:1.0}
& + 2\sum_{t=0}^{\infty}\left({\sum_{\tau=t}^{\infty}}\left(F^{\top}\right)^{\tau}Pf^*_{t+\tau}\right)BH^{-1}B^{\top}\left({\sum_{\tau=t}^{\infty}}\left(F^{\top}\right)^{\tau}P(f^*_{t+\tau}-f_{t+\tau})\right).
\end{align}
Consider the auxiliary linear policy defined in~\eqref{eq:auxiliary}.
Provided with any state $x\in\mathbb{R}^{\nn}$, the linearly combined policy $\pi_t$ is given by
\begin{align}
\nonumber
\pi_t(x)&=\lambda_t\widehat{\pi}(x)+(1-\lambda_t)\overline{\pi}(x)=\pi'(x)+\lambda_t(\widehat{\pi}(x)-\pi'(x))+(1-\lambda_t)\left(\overline{\pi}(x)-\pi'(x)\right),
\end{align}
implying $\eta_t=\lambda_t(\widehat{\pi}(x_t)-\pi'(x_t))+(1-\lambda_t)\left(\overline{\pi}(x_t)-\pi'(x_t)\right)$ in~\eqref{eq:1.0}. Moreover, $\sum_{t=0}^{\infty}\eta_t^{\top} H\eta_t \leq \sum_{t=0}^{\infty}\|H\|\|\eta_t\|^2$, therefore, denoting by ${{\lambda}}\coloneqq\lim_{t\rightarrow\infty}\lambda_t$,
\begin{align}
\label{eq:quadratic_bound}
\sum_{t=0}^{\infty}\eta_t^{\top} H\eta_t\leq & 2\|H\|\left(  \sum_{t=0}^{\infty} \left\|\widehat{\pi}(x_t)-\pi'(x_t)\right\|^2
+ (1-{{\lambda}})\sum_{t=0}^{\infty}\left\|\overline{\pi}(x_t)-\pi'(x_t)\right\|^2\right),
\end{align}
where in~\eqref{eq:quadratic_bound} we have used the the Cauchy–Schwarz inequality.
Since the model-free policy $\widehat{\pi}$ is $\varepsilon$-consistent, it follows that
\begin{align}
\nonumber
\left\|\widehat{\pi}(x_t)-\pi'(x_t)\right\|^2\leq& 2\left\|{\pi}_t^*(x_t)-\pi'(x_t)\right\|^2+2\left\|\widehat{\pi}(x_t)-\pi_t^*(x_t)\right\|^2\\
\nonumber
\leq &2\left\|{\pi}_t^*(x_t)-\pi'(x_t)\right\|^2+2\varepsilon\|x_t\|^2.
\end{align}
Furthermore, since the cost $\mathsf{OPT'}$ induced by the auxiliary linear policy~\eqref{eq:auxiliary} is smaller than $\mathsf{OPT}$,
\begin{align}
\label{eq:proof_term1}
\sum_{t=0}^{\infty}\left\|{\pi}_t^*(x_t)-\pi'(x_t)\right\|^2\leq \frac{\mathsf{OPT}+\mathsf{OPT}'}{\lambda_{\min}(R)}\leq \frac{2\mathsf{OPT}}{\lambda_{\min}(R)}
\end{align}
where $\lambda_{\min}(R)>0$ denotes the smallest eigenvalue of $R\succ 0$.

The linear quadratic regulator is $\overline{\pi}(x)=-Kx=-(R+B^{\top}PB)^{-1}B^{\top}PAx$ for a given state $x\in\mathbb{R}^{\nn}$, we have for all $t\geq 0$,
\begin{align}
\label{eq:proof_term2}
\left\|\overline{\pi}(x_t)-\pi'(x_t)\right\|^2 = \left\| \sum_{\tau=t}^{\infty}\left(F^\top\right)^{\tau-t} P f_{\tau}^*\right\|^2.
\end{align}
Plugging~\eqref{eq:proof_term1} and~\eqref{eq:proof_term2} into~\eqref{eq:quadratic_bound},
\begin{align}
\label{eq:eta_bound}
\sum_{t=0}^{\infty}\eta_t^{\top} H\eta_t 
\leq & 2\|H\|\Bigg(\left( \frac{2\mathsf{OPT}}{\lambda_{\min}(R)}+ \varepsilon \sum_{t=0}^{\infty}\left\|x_t\right\|^2 \right)
+ \sum_{t=0}^{\infty}\Bigg\|\sum_{\tau=t}^{\infty}\left(F^\top\right)^{\tau-t} P f_{\tau}^*\Bigg\|^2\Bigg).
\end{align}
The algorithm cost $\mathsf{ALG}$ can be bounded by
$
    \mathsf{ALG} \geq \sum_{t=0}^{\infty}x_t^{\top}Qx_t+ u_t^{\top}Ru_t\geq \sum_{t=0}^{\infty}\lambda_{\min}(Q)\|x_t\|^2,
$
therefore,~\eqref{eq:eta_bound} leads to
\begin{align}
\label{eq:eta_bound2}
\sum_{t=0}^{\infty}\eta_t^{\top} H\eta_t 
&\leq 2\|H\|\Bigg(\left( \frac{2\mathsf{OPT}}{\lambda_{\min}(R)}+ \varepsilon   \frac{\mathsf{ALG}}{\lambda_{\min}(Q)}  \right)
+ \sum_{t=0}^{\infty}\Bigg\|\sum_{\tau=t}^{\infty}\left(F^\top\right)^{\tau-t} P f_{\tau}^*\Bigg\|^2\Bigg).
\end{align}

Moreover, we have the following lemma holds.

\begin{lemma}
\label{lemma:optimal_cost_lower_bound}
The optimal cost $\mathsf{OPT}$ can be bounded from below by $$
  \mathsf{OPT}\geq  \frac{D_0 (1-\rho)^2}{{{C_F}^2}\|P\|^2}  \sum_{t=0}^{\infty}\left\| \sum_{\tau=t}^{\infty}\left(F^\top\right)^{\tau-t} P f_{\tau}^*\right\|^2.
$$
\end{lemma}

\begin{proof}[Proof of Lemma~\ref{lemma:optimal_cost_lower_bound}]
the optimal cost can be bounded from below by
\begin{align}
\nonumber
  \mathsf{OPT}= & \sum_{t=0}^{\infty}(x_t^*)^{\top} Q x^*_t + (u_t^*)^{\top} R u^*_t\\
  \label{eq:4.60}
  \geq &\sum_{t=0}^{\infty} \lambda_{\min}(Q)\left\|x^*_t\right\|^2 + \lambda_{\min}(R)\|u^*_t\|^2\\
  \nonumber
  \geq & 2D_0 \sum_{t=0}^{\infty} \left(\|Ax^*_t\|^2+\|B u^*_t\|^2\right)+\frac{1}{2}\sum_{t=0}^{\infty}\lambda_{\min}(Q)\|x^*_{t}\|^2\\
  \nonumber
    \geq & D_0 \sum_{t=0}^{\infty} \left\|Ax^*_t+Bu^*_t\right\|^2+\frac{1}{2}\sum_{t=0}^{\infty}\lambda_{\min}(Q)\|x^*_{t}\|^2.
\end{align}
Since $x^*_{t+1}=Ax^*_t+Bu^*_t+f_t^*$ for all $t\geq 0$,
 \begin{align}   
\nonumber
  \mathsf{OPT}\geq & D_0\sum_{t=0}^{\infty}\left\|x^*_{t+1}-f_t^*\right\|^2+\frac{1}{2}\sum_{t=0}^{\infty}\lambda_{\min}(Q)\left\|x^*_{t}\right\|^2\\
\label{eq:4.80}
\geq & \frac{D_0}{2}\sum_{t=0}^{\infty}\left\|f_t^*\right\|^2  +\left(\frac{\lambda_{\min}(Q)}{2}-D_0\right)\sum_{t=0}^{\infty}\left\|x^*_t\right\|^2 
\end{align}
for some constant $D_0\coloneqq\min\{\frac{\lambda_{\min}(R)}{\|B\|},\frac{\lambda_{\min}(Q)}{2\|A\|},\frac{\lambda_{\min}(Q)}{2}\}\geq \frac{\sigma}{\max\{2,\|A\|,\|B\|\}}$ (Assumption~\ref{assumption:stability}) that depends on known system parameters $A,B,Q$ and $R$
where in~\eqref{eq:4.60}, $\lambda_{\min}(Q)$, $\lambda_{\min}(R)$ are the smallest eigenvalues of positive definite matrices $Q,R$, respectively. 
Let $\psi_t\coloneqq {\sum_{\tau=t}^{\infty}}\left(F^\top\right)^\tau P f_{t+\tau}^*$ for all $t\geq 0$.
Note that $F=A-BK$ and we define $\rho\coloneqq {(1+\rho(F))}/{2}<1$ where  $\rho(F)$ denotes the spectral radius of $F$. From the Gelfand’s formula, there exists a constant $C_F\geq 0$ such that $\|F^t\|\leq C_F\rho^{t}$ for all $t\geq 0$.
Therefore,
\begin{align}
\nonumber
 \sum_{t=0}^{\infty}\left\|\psi_t\right\|^2 \coloneqq&  \sum_{t=0}^{\infty}\left\|{\sum_{\tau=t}^{\infty}}\left(F^{\top}\right)^{\tau}P f_{t+\tau}^*\right\|^2
 \leq   {{C_F}^2} \|P\|^2\sum_{t=0}^{\infty}\left({\sum_{\tau=t}^{\infty}}\rho^\tau\left\|f_{t+\tau}^*\right\|\right)^2\\
 \nonumber
 = &  {{C_F}^2} \|P\|^2\sum_{t=0}^{\infty}{\sum_{\tau=t}^{\infty}}\sum_{\ell=0}^{\infty}\rho^{\tau}\rho^{\ell}\left\|f_{t+\tau}^*\right\|\left\|f_{t+\ell}^*\right\|\\
 \label{eq:4.70}
 \leq & \frac{{{C_F}^2}}{2} \|P\|^2\sum_{t=0}^{\infty}{\sum_{\tau=t}^{\infty}}\sum_{\ell=0}^{\infty}\rho^{\tau}\rho^{\ell}\left(\left\|f_{t+\tau}^*\right\|^2+\left\|f_{t+\ell}^*\right\|^2\right).
\end{align}
Continuing from~\eqref{eq:4.70},
\begin{align}
\nonumber
 \sum_{t=0}^{\infty}\left\|\psi_t\right\|^2\leq &\frac{{C_F}^2}{2} \|P\|^2\left(\sum_{\ell=0}^{\infty}\rho^{\ell}\right)\sum_{t=0}^{\infty}{\sum_{\tau=t}^{\infty}}\rho^{\tau}\left\|f_{t+\tau}^*\right\|^2 + \frac{{{C_F}^2}}{2} \|P\|^2\left({\sum_{\tau=t}^{\infty}}\rho^{\tau}\right)\sum_{t=0}^{\infty}\sum_{\ell=0}^{\infty}\rho^{\ell}\left\|f_{t+\ell}^*\right\|^2\\
 \nonumber
 \leq & \frac{{{C_F}^2}}{1-\rho} \|P\|^2\sum_{t=0}^{\infty}{\sum_{\tau=t}^{\infty}}\rho^{\tau}\left\|f_{t+\tau}^*\right\|^2
 \leq   \frac{{{C_F}^2}}{1-\rho} \|P\|^2\sum_{t=0}^{\infty}\sum_{\tau=0}^{\infty}\rho^{\tau}\left\|f^*_{t+\tau}\right\|^2\\
\nonumber
 = & \frac{{{C_F}^2}}{1-\rho} \|P\|^2\left(\sum_{\tau=0}^{\infty}\rho^{\tau}\right)\left(\sum_{t=0}^{\infty}\left\|f_{t}^*\right\|^2\right)\\
  \label{eq:4.90}
 \leq & \frac{{{C_F}^2}}{(1-\rho)^2} \|P\|^2 \sum_{t=0}^{\infty}\left\|f_t^*\right\|^2.
\end{align}
Putting~\eqref{eq:4.90} into~\eqref{eq:4.80}, we obtain
$
  \mathsf{OPT}\geq  \frac{D_0 (1-\rho)^2}{{{C_F}^2}\|P\|^2}  \sum_{t=0}^{\infty}\|\psi_t\|^2.
$
\end{proof}

Combining Lemma~\ref{lemma:optimal_cost_lower_bound} with~\eqref{eq:eta_bound2},
\begin{align}
\label{eq:eta_bound3}
\sum_{t=0}^{\infty}\eta_t^{\top} H\eta_t 
&\leq 2\|H\|\Bigg( \frac{2\mathsf{OPT}}{\lambda_{\min}(R)}+ \varepsilon   \frac{\mathsf{ALG}}{\lambda_{\min}(Q)}
+ (1-{{\lambda}})\frac{{{C_F}^2}\|P\|^2\mathsf{OPT}}{D_0 (1-\rho)^2}\Bigg).
\end{align}


Furthermore, since $P$ is symmetric, $f_t^\top P f_t-(f_t^*)^\top P f_t^*=\left(f_t+f_t^*\right)^\top P \left(f_t-f_t^*\right)$, the RHS of the inequality~\eqref{eq:1.0} can be bounded by
\begin{align}
\nonumber
&\sum_{t=0}^{\infty}\eta_t^{\top} H\eta_t + 2\sum_{t=0}^{\infty}\Bigg(\| B\eta\|\left\|{\sum_{\tau=t}^{\infty}}\left(F^{\top}\right)^{\tau}P(f_{t+\tau}-f_{t+\tau}^*)\right\| + \|P \left(f_t+f_t^*\right)\| \left\|f_t-f_t^*\right\|\\
\nonumber
& \quad +  \|x_0\|\left\|\sum_{t=0}^{\infty}\left(F^{\top}\right)^{t+1}P(f_{t}-f^*_{t})\right\| + \|f_t\|\left\|{\sum_{\tau=0}^{\infty}}\left(F^{\top}\right)^{\tau+1}P(f_{t+\tau+1}-f^*_{t+\tau+1})\right\|\\
\nonumber
& \quad +\left\|{\sum_{\tau=0}^{\infty}}\left(F^{\top}\right)^{\tau+1}Pf_{t+\tau+1}^*\right\|\left\|f_t^*-f_t\right\|\\ 
\nonumber
& \quad + \left\|BH^{-1}B^{\top}\right\|\left\|{\sum_{\tau=t}^{\infty}}\left(F^{\top}\right)^{\tau}Pf^*_{t+\tau}\right\|\left\|{\sum_{\tau=t}^{\infty}}\left(F^{\top}\right)^{\tau}P(f^*_{t+\tau}-f_{t+\tau})\right\|\Bigg).
\end{align}
Furthermore, by our assumption, the linearly combined policy $\pi$ is an exponentially stabilizing policy and since $f_t$ is Lipschitz continuous with a Lipschitz constant $C_{\ell}$, using~\eqref{eq:eta_bound3} and noting that  $\mathsf{OPT}>0$,
\begin{align*}  
\mathsf{ALG} - \mathsf{OPT} &\leq 2\|H\|\Bigg( \frac{1}{\sigma}\left(2\mathsf{OPT}+ \varepsilon   \mathsf{ALG}\right)
+ (1-{{\lambda}})\frac{{{C_F}^2}\|P\|^2{\max\{2,\|A\|,\|B\|\}}\mathsf{OPT}}{\sigma (1-\rho)^2}\Bigg)\\
&\quad +  O(C_{\ell}\|x_0\|).
\end{align*}
Rearranging  the terms gives the competitive ratio bound in Lemma~\ref{lemma:competitive}.
\end{proof}

Now, applying Lemma~\ref{lemma:competitive}, we complete our competitive  analysis by proving Theorem~\ref{thm:competitive}.

\begin{proof}[Proof of Theorem~\ref{thm:competitive}]
Continuing from Theorem~\ref{thm:blackbox_stability}, it shows that if the Lipschitz constant $C_{\ell}$ satisfies $C_{\ell}<\frac{1-\rho}{C_F(1+\|K\|)}$ where $\left\| F^t\right\|\leq C_F\rho^t$ for any $t\geq 0$ and the consistency error $\varepsilon$ satisfies $\varepsilon<\min\left\{ \frac{\sigma}{2\|H\|},\frac{1/C_{F}-C^{\mathsf{sys}}_{a}C_{\ell}}{C_{\ell}+\|B\|}\right\}$ where $C^{\mathsf{sys}}_{a}$ is defined in~\eqref{eq:policy_constant}, then  
$$
\|x_t\|\leq   \frac{\gamma^t-\mu^t}{1-\mu\gamma^{-1}}\left({C_F}+\mu\gamma^{-1}\right)\|x_0\|,
$$
for all $t\geq 0$ with some $\mu<1$. Therefore, Lemma~\ref{lemma:competitive} implies Theorem~\ref{thm:competitive}.
\end{proof}

\end{document}